\documentclass[11pt]{article}
\usepackage{times,wrapfig,bm,color,enumitem}
\usepackage{amsmath,amsfonts,amssymb,amsthm}
\usepackage{algorithm,algpseudocode}

\newcommand{\inner}[2]{\left\langle #1, #2 \right\rangle}

\newcommand{\norm}[1]{\left\lVert{#1}\right\rVert}
\newcommand{\abs}[1]{\left\lvert{#1}\right\rvert}



\newcommand{\R}{\mathbb{R}}

\newcommand{\calS}{\mathcal{S}}

\newtheorem{thm}{Theorem}[section]
\newtheorem{lem}{Lemma}[section]

\mathchardef\hyphen="2D
\usepackage{algpseudocode}
\usepackage{epstopdf}
\usepackage{authblk}
\usepackage[subrefformat=parens,labelformat=parens]{subfig}
\newtheorem*{rep@theorem}{\rep@title}
\newcommand{\newreptheorem}[2]{%
\newenvironment{rep#1}[1]{%
 \def\rep@title{#2 \ref{##1}}%
 \begin{rep@theorem}}%
 {\end{rep@theorem}}}
 \usepackage[margin=1in]{geometry}
\makeatother
\usepackage{graphicx}


\newcommand{\IN}{\text{in}}
\newcommand{\OUT}{\text{out}}

\newcommand{\vin}{v_{in}}
\newcommand{\vout}{v_{out}}
\newcommand{\relu}{\sigma_{\textsc{relu}}}

\newcommand{\RSGD}{Path-SGD }

\newcommand{\picwidth}{1.84in}

\newreptheorem{theorem}{Theorem}
\newreptheorem{lemma}{Lemma}

\newcommand{\removed}[1]{}

\usepackage{times}
\usepackage{hyperref}
\usepackage{breqn}
\usepackage{url}

\title{Path-SGD: Path-Normalized Optimization in \\Deep Neural Networks}
\author[1]{Behnam Neyshabur}
\author[2]{Ruslan Salakhutdinov}
\author[1]{Nathan Srebro}
\affil[1]{Toyota Technological Institute at Chicago}
\affil[2]{Department of Computer Science, University of Toronto}

\date{}
%


\begin{document}

\maketitle

\begin{abstract}
  We revisit the choice of SGD for training deep neural networks by
  reconsidering the appropriate geometry in which to optimize the
  weights.  We argue for a geometry invariant to rescaling of weights
  that does not affect the output of the network, and suggest
  Path-SGD, which is an approximate steepest descent method with
  respect to a path-wise regularizer related to max-norm
  regularization.  Path-SGD is easy and efficient to implement and
  leads to empirical gains over SGD and AdaGrad.
\end{abstract}

\section{Introduction}
\removed{
Deep neural networks that learn a hierarchy of features (representations) from data have
have been successfully applied in a variety of application domains, including 
visual object recognition~\cite{AlexNet,DBLP:journals/corr/SzegedyLJSRAEVR14}, 
speech recognition~\cite{HintonIBM},
machine translation~\cite{Ilya,DBLP:journals/corr/BahdanauCB14}, and many others. 
Contrary to the practical success of deep learning, there are still many open questions on understanding the optimization and generalization issues in deep learning models. }

Training deep networks is a challenging problem
\cite{icml2013,difficulty} and various heuristics and
optimization algorithms have been suggested in order to improve the
efficiency of the training~\cite{batch_norm,Kronecker,he2015delving}.
However, training deep architectures is still considerably slow and
the problem has remained open. Many of the current training methods
rely on good initialization and then performing Stochastic Gradient
Descent (SGD), sometimes together with an adaptive stepsize or
momentum term~\cite{icml2013,adagrad,Adam}.

Revisiting the choice of gradient descent, we recall that optimization
is inherently tied to a choice of geometry or measure of distance,
norm or divergence.  Gradient descent for example is tied to the
$\ell_2$ norm as it is the steepest descent with respect to $\ell_2$
norm in the parameter space, while coordinate descent corresponds to
steepest descent with respect to the $\ell_1$ norm and exp-gradient
(multiplicative weight) updates is tied to an entropic divergence.
Moreover, at least when the objective function is convex, convergence
behavior is tied to the corresponding norms or potentials. For example, with
gradient descent, or SGD, convergence speeds depend on the $\ell_2$
norm of the optimum.  The norm or divergence can be viewed as a
regularizer for the updates.  There is therefore also a strong link
between regularization for optimization and regularization for
learning: optimization may provide implicit regularization in terms of
its corresponding geometry, and for ideal optimization performance the
optimization geometry should be aligned with inductive bias driving
the learning \cite{srebro11}.

Is the $\ell_2$ geometry on the weights the appropriate geometry for
the space of deep networks?  Or can we suggest a geometry with more
desirable properties that would enable faster optimization and perhaps
also better implicit regularization?  As suggested above, this
question is also linked to the choice of an appropriate regularizer
for deep networks.  

Focusing on networks with RELU activations, we observe that scaling
down the incoming edges to a hidden unit and scaling up the outgoing
edges by the same factor yields an equivalent network computing the
same function.  Since predictions are invariant to such rescalings, it
is natural to seek a geometry, and corresponding optimization method,
that is similarly invariant.  

We consider here a geometry inspired by max-norm regularization
(regularizing the maximum norm of incoming weights into any unit)
which seems to provide a better inductive bias compared to the
$\ell_2$ norm (weight decay) \cite{goodfellow13,srivastava14}.  But to
achieve rescaling invariance, we use not the max-norm itself, but
rather the minimum max-norm over all rescalings of the weights.  We
discuss how this measure can be expressed as a ``path regularizer''
and can be computed efficiently.

We therefore suggest a novel optimization method, \RSGD, that is an
approximate steepest descent method with respect to path
regularization.  \RSGD is rescaling-invariant and we demonstrate that
\RSGD outperforms gradient descent and AdaGrad for classifications
tasks on several benchmark datasets.

\paragraph{Notations} A feedforward neural network that computes a
function $f:\R^D \rightarrow \R^C$ can be represented by a directed
acyclic graph (DAG) $G(V,E)$ with $D$ input nodes
$\vin[1],\dots,\vin[D]\in V$, $C$ output nodes $\vout[1],\dots,
\vout[C]\in V$, weights $w:E\rightarrow \R$ and an activation function
$\sigma:\R\rightarrow\R$ that is applied on the internal nodes (hidden
units). We denote the function computed by this network as
$f_{G,w,\sigma}$. In this paper we focus on RELU (REctified Linear
Unit) activation function $\relu(x) = \max\{0,x\}$. We refer to the
depth $d$ of the network which is the length of the longest directed
path in $G$.  For any $0\leq i \leq d$, we define $V^i_{\IN}$ to be
the set of vertices with longest path of length $i$ to an input unit
and $V^{i}_{\OUT}$ is defined similarly for paths to output units. In
layered networks $V^i_{\IN} = V^{d-i}_{\OUT}$ is the set of hidden
units in a hidden layer $i$.

\section{Rescaling and Unbalanceness}

\begin{figure}[t!]
\hspace{0.12in}
\subfloat[Training on MNIST]{
  \includegraphics[width=0.35\textwidth]{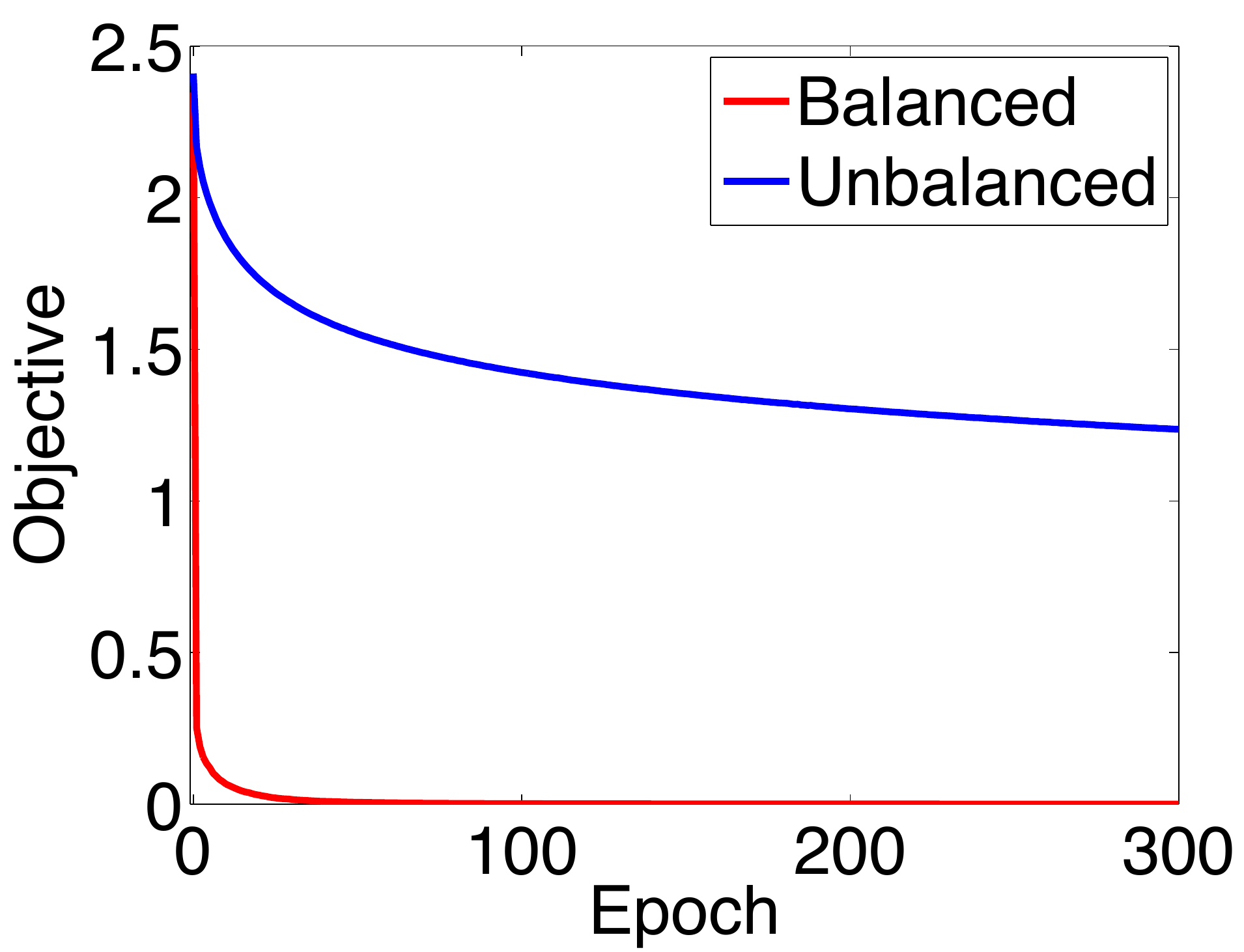}\label{fig:compare-a}
 }\hspace{0.5in}
 \subfloat[Weight Explosion in an unbalanced network]{
  \includegraphics[width=0.50\textwidth]{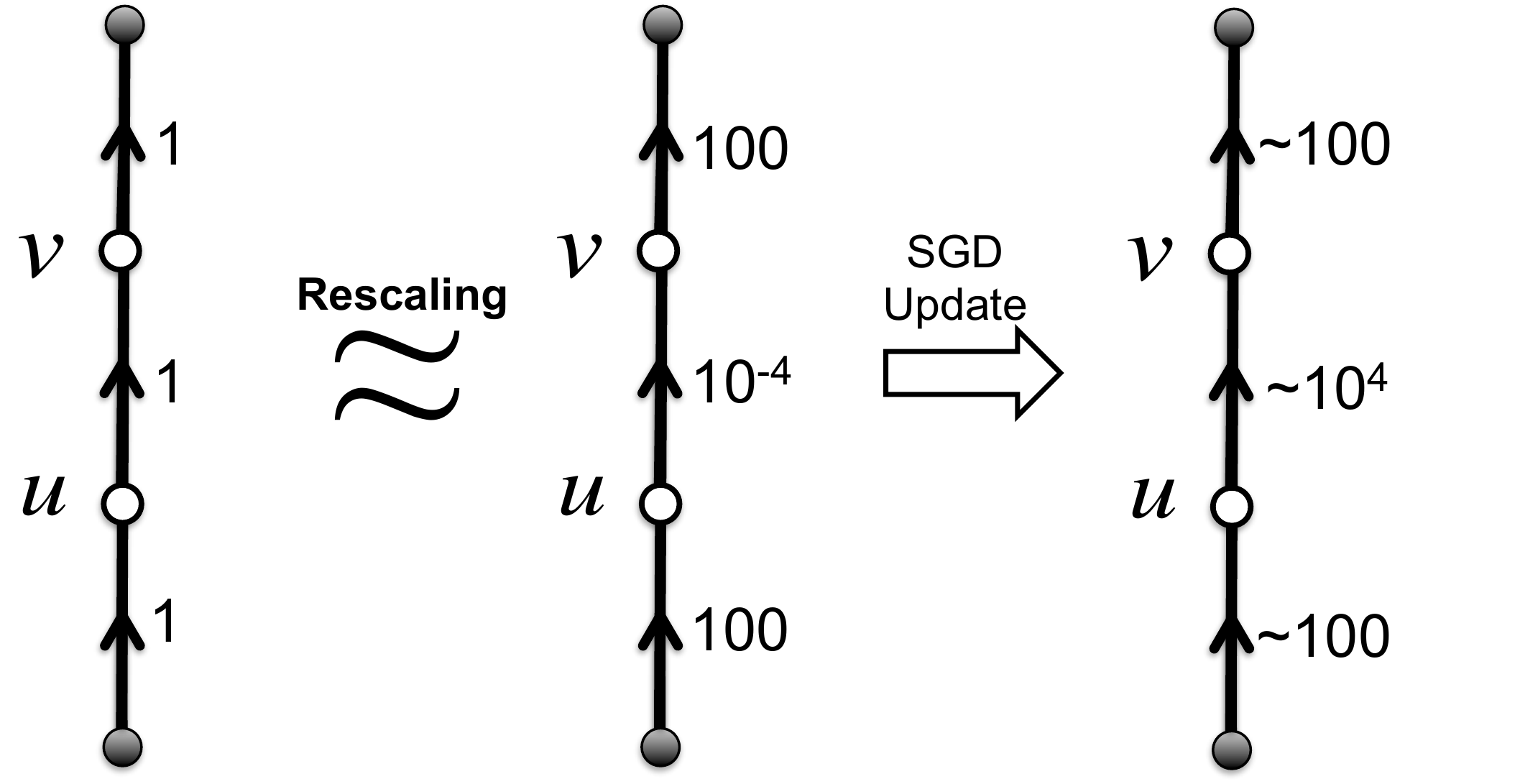}\label{fig:compare-b}
 }
 \newline
  \subfloat[Poor updates in an unbalanced network]{
  \includegraphics[width=1\textwidth]{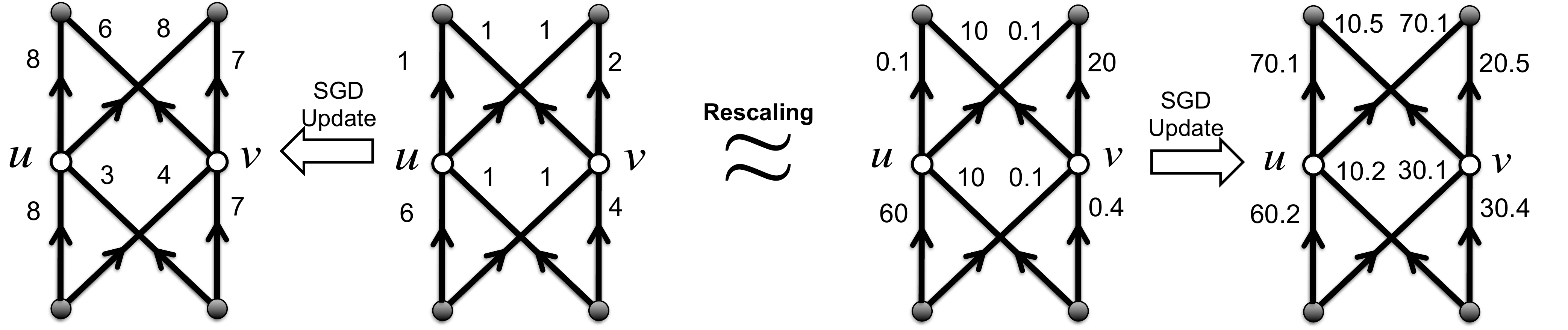}\label{fig:compare-c}
   }
 \caption{ \small (a): Evolution of the cross-entropy error function when training a 
feed-forward network on MNIST with two hidden layers, each containing 4000 hidden units.
The unbalanced initialization (blue curve) is generated by applying a sequence of rescaling functions on the balanced initializations (red curve). (b): Updates for a simple case where the input is $x=1$, 
thresholds are set to zero (constant), the stepsize is 1, and the gradient with respect to output is $\delta = -1$. (c): Updated network for the case where the input is $x=(1,1)$, thresholds are set to zero (constant), the stepsize is 1, and the gradient with respect to output is $\delta=(-1,-1)$. }
 \label{fig:unbalanced}
\end{figure}

One of the special properties of RELU activation function is non-negative homogeneity. That is, for any scalar $c\geq 0$ and any $x\in \R$, we have $\relu(c\cdot x)=c\cdot \relu(x)$. This  interesting property allows the network to be rescaled without changing the function computed by the network. We define the {\em rescaling function} $\rho_{c,v}(w)$, such that given the weights of the network $w:E\rightarrow \R$, a constant $c>0$, and a node $v$, the rescaling function multiplies the incoming edges and divides the outgoing edges of $v$ by $c$. That is, $\rho_{c,v}(w)$ maps $w$ to the weights $\tilde{w}$ for the rescaled network, where for any $(u_1\rightarrow u_2)\in E$:
\begin{equation}
\tilde{w}_{(u_1\rightarrow u_2)}=
\begin{cases}
c.w_{(u_1\rightarrow u_2)}& u_2=v,\\
\frac{1}{c}w_{(u_1\rightarrow u_2)}& u_1=v,\\
w_{(u_1\rightarrow u_2)}& \text{otherwise.}\\
\end{cases}
\end{equation}
It is easy to see that the rescaled network computes the same function, i.e. $f_{G,w,\relu} =f_{G,\rho_{c,v}(w),\relu}$. We say that the two networks with weights $w$ and $\tilde{w}$ are {\em rescaling equivalent} denoted by $w\sim \tilde{w}$ if and only if one of them can be transformed to another by applying a sequence of rescaling functions $\rho_{c,v}$.

Given a training set $\calS = \{(x_1,y_n),\dots, (x_n,y_n)\}$, our goal is to 
minimize the following objective function:
\begin{equation}
L(w) = \frac{1}{n}\sum_{i=1}^n \ell(f_w(x_i),y_i).
\end{equation}
Let $w^{(t)}$ be the weights at step $t$ of the optimization. We consider update step of the following form $w^{(t+1)} = w^{(t)} + \Delta w^{(t+1)}$. For example, for gradient descent, we have  $\Delta w^{(t+1)} = -\eta\nabla L(w^{(t)})$, where $\eta$ is the step-size. In the stochastic setting, such as SGD or mini-batch gradient descent, we calculate the gradient on a small subset of the training set. 

Since {\em rescaling equivalent} networks compute the same function,
it is desirable to have an update rule that is not affected by
rescaling. We call an optimization method {\em rescaling invariant} if
the updates of rescaling equivalent networks are rescaling equivalent.
That is, if we start at either one of the two rescaling equivalent
weight vectors $\tilde{w}^{(0)}\sim w^{(0)}$, after applying $t$
update steps separately on $\tilde{w}^{(0)}$ and $w^{(0)}$, they will
remain rescaling equivalent and we have $\tilde{w}^{(t)} \sim
w^{(t)}$.

Unfortunately, gradient descent is {\em not} rescaling invariant. The main problem with the gradient updates is that scaling down the weights of an edge will also scale up the gradient which, as we see later, is exactly the opposite of what is expected from a rescaling invariant update. 

Furthermore, gradient descent performs very poorly on ``unbalanced''
networks.  We say that a network is {\em balanced} if the norm of
incoming weights to different units are roughly the same or within a
small range. For example, Figure~\subref*{fig:compare-a} shows a huge
gap in the performance of SGD initialized with a randomly generated
balanced network $w^{(0)}$, when training on MNIST, compared to a
network initialized with unbalanced weights $\tilde{w}^{(0)}$. Here
$\tilde{w}^{(0)}$ is generated by applying a sequence of random
rescaling functions on $w^{(0)}$ (and therefore $w^{(0)}\sim
\tilde{w}^{(0)}$).

In an unbalanced network, gradient descent updates could blow up the smaller weights, while keeping the larger weights almost unchanged. This is illustrated in Figure ~\subref*{fig:compare-b}. If this were the only issue, one could scale down all the weights after each update. However, in an unbalanced network,  the relative changes in the weights are also very different compared to a balanced network. For example, Figure \subref*{fig:compare-c} shows how two rescaling equivalent networks could end up computing a very different function after only a single update.

\removed{The same problem exists in other optimization methods used in deep learning, including AdaGrad updates~\cite{adagrad}, where $\Delta w^{(t+1)}_e = -\eta \left(\partial L / \partial w_e^{(t)}\right)/\sqrt{\sum_{k=1}^t (\partial L / \partial w_e^{(k)})^2}$. In order to better understand and investigate this issue, we next discuss different scale measures for training neural networks.}
\section{Magnitude/Scale measures for deep networks}
Following \cite{neyshabur15}, we consider the grouping of weights going into each node of the
network. This forms the following generic group-norm type regularizer, parametrized by $1\leq p,q \leq\infty$:
\begin{equation}
  \label{eq:mu}
  \mu_{p,q}(w) = \left(\sum_{v \in V}\left(\sum_{(u\rightarrow v) \in E} \left\lvert w_{(u\rightarrow v)}\right\rvert ^p\right)^{q/p}\right)^{1/q}.
\end{equation}
Two simple cases of above group-norm are $p=q=1$ and $p=q=2$ that
correspond to overall $\ell_1$ regularization and weight decay
respectively. Another form of regularization that is shown to be very
effective in RELU networks is the max-norm regularization, which is the
maximum over all units of norm of incoming edge to the
unit\footnote{This definition of max-norm is a bit different than the
  one used in the context of matrix factorization~\cite{srebro05}. The
  later is similar to the minimum upper bound over $\ell_2$ norm of
  both outgoing edges from the input units and incoming edges to the
  output units in a two layer feed-forward
  network.}~\cite{goodfellow13,srivastava14}. The max-norm correspond
to ``per-unit" regularization when we set $q=\infty$ in
equation~\eqref{eq:mu} and can be written in the following form:
\begin{equation}
  \label{eq:mu}
  \mu_{p,\infty}(w) =\sup_{v \in V}\left(\sum_{(u\rightarrow v) \in E} \left\lvert w_{(u\rightarrow v)}\right\rvert ^p\right)^{1/p}
\end{equation}

Weight decay is probably the most commonly used regularizer. On the
other hand, per-unit regularization might not seem ideal as it is
very extreme in the sense that the value of regularizer corresponds to
the highest value among all nodes.  However, the situation is very
different for networks with RELU activations (and other activation
functions with non-negative homogeneity property).  In these cases,
per-unit $\ell_2$ regularization has shown to be very
effective~\cite{srivastava14}. The main reason could be because RELU
networks can be rebalanced in such a way that all hidden units have
the same norm. Hence, per-unit regularization will not be a crude
measure anymore.

Since $\mu_{p,\infty}$ is not rescaling invariant and the values of the
scale measure are different for rescaling equivalent networks, it is
desirable to look for the minimum value of a regularizer among all
rescaling equivalent networks. Surprisingly, for a feed-forward
network, the minimum $\ell_p$ per-unit regularizer among all rescaling
equivalent networks can be efficiently computed by a single forward
step. To see this, we consider the vector $\pi(w)$, the {\em path
  vector}, where the number of coordinates of $\pi(w)$ is equal to the
total number of paths from the input to output units and each
coordinate of $\pi(w)$ is the equal to the product of weights along a
path from an input nodes to an output node. The $\ell_p$-path
regularizer is then defined as the $\ell_p$ norm of
$\pi(w)$~\cite{neyshabur15}:
\begin{equation}\label{eq:defphi}
  \phi_p(w) = \norm{\pi(w)}_p = \left(\sum_{\vin[i] \overset{e_1}\rightarrow v_1\overset{e_2}\rightarrow v_2\dots\overset{e_d}{\rightarrow}\vout[j]} \left|\prod_{k=1}^d w_{e_k}\right|^p\right)^{1/p}
\end{equation}
The following Lemma establishes that the $\ell_p$-path regularizer
corresponds to the minimum over all equivalent networks of the
per-unit $\ell_p$ norm:
\begin{lem}[\cite{neyshabur15}]\label{lem:path-unit}
$\displaystyle \phi_p(w) = \min_{\tilde{w} \sim w} \bigg(\mu_{p,\infty}(\tilde{w})\bigg)^d$
\end{lem}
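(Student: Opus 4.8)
The plan is to record that $\phi_p$ is rescaling-invariant and then prove the two matching inequalities. \emph{Invariance:} if we rescale a hidden unit $v$ by $c>0$, each directed input-to-output path either avoids $v$, leaving its weight-product untouched, or --- since $G$ is a DAG and $v$ is internal --- passes through $v$ exactly once, using one edge into $v$ (multiplied by $c$) and one edge out of $v$ (divided by $c$), so again the product is unchanged. Every $\tilde{w}\sim w$ is a composition of such single-node rescalings, so $\pi(\tilde{w})=\pi(w)$ coordinatewise and $\phi_p(\tilde{w})=\phi_p(w)$. It therefore suffices to prove (i) $\phi_p(\tilde{w})\le(\mu_{p,\infty}(\tilde{w}))^d$ for \emph{every} $\tilde{w}\sim w$, and (ii) that equality is attained for some $\tilde{w}^\star\sim w$. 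Throughout I use the layered structure implicit in the definition of $\phi_p$ --- every input-output path has length $d$, so each edge joins $V^{\ell-1}_\IN$ to $V^{\ell}_\IN$ for some $\ell$ --- and for notational simplicity I take a single output unit $\vout$.

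\emph{Proof of (i).} Fix $\tilde{w}\sim w$ and set $M=(\mu_{p,\infty}(\tilde{w}))^p=\sup_{v\in V}\sum_{(u\to v)\in E}|\tilde{w}_{(u\to v)}|^p$. For a unit $v$, let $L(v)$ be the sum over all directed paths from an input unit to $v$ of the product of the $p$-th powers of the weights along the path, so $L(v)=1$ on input units, $L(\vout)=\phi_p(\tilde{w})^p$, and $L(v)=\sum_{(u\to v)\in E}|\tilde{w}_{(u\to v)}|^p\,L(u)$. By induction on $\ell$, $L(v)\le M^{\ell}$ for every $v\in V^{\ell}_\IN$: the case $\ell=0$ is $L(v)=1$, and for $v\in V^{\ell}_\IN$ the recursion and the hypothesis $L(u)\le M^{\ell-1}$ give $L(v)\le M^{\ell-1}\sum_{(u\to v)\in E}|\tilde{w}_{(u\to v)}|^p\le M^{\ell-1}\cdot M=M^{\ell}$, since $\sum_{(u\to v)\in E}|\tilde{w}_{(u\to v)}|^p\le M$. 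At $\ell=d$ this is $\phi_p(\tilde{w})^p\le M^{d}$.

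\emph{Proof of (ii).} I rebalance $w$ so that the incoming $\ell_p$-mass is the \emph{same} at every non-input unit; then $\mu_{p,\infty}$ of the rebalanced network equals that common value and (i) holds with equality. Assume first $L(v)>0$ for all $v$, and put $M^{\star}=\phi_p(w)^{p/d}$. Assign to each hidden unit $v\in V^{\ell}_\IN$ ($1\le\ell\le d-1$) the factor $c_v=((M^{\star})^{\ell}/L(v))^{1/p}$, put $c_v=1$ on input and output units, and let $\tilde{w}^{\star}$ be the network obtained by applying $\rho_{c_v,v}$ at every hidden unit; these moves commute, and the net effect on an edge is $\tilde{w}^{\star}_{(u\to v)}=w_{(u\to v)}\,c_v/c_u$. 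Using the recursion for $L$, the rescaled incoming mass at a unit $v\in V^{\ell}_\IN$ with $1\le\ell\le d-1$ is
\[
\sum_{(u\to v)\in E}\bigl|\tilde{w}^{\star}_{(u\to v)}\bigr|^p=\frac{(M^{\star})^{\ell}}{L(v)}\sum_{(u\to v)\in E}|w_{(u\to v)}|^p\,\frac{L(u)}{(M^{\star})^{\ell-1}}=\frac{M^{\star}}{L(v)}\,L(v)=M^{\star},
\]
and at $\vout$ the same computation gives $L(\vout)/(M^{\star})^{d-1}=\phi_p(w)^p/(M^{\star})^{d-1}=M^{\star}$. Hence $\mu_{p,\infty}(\tilde{w}^{\star})=(M^{\star})^{1/p}=\phi_p(w)^{1/d}$, so $(\mu_{p,\infty}(\tilde{w}^{\star}))^{d}=\phi_p(w)$, which together with (i) gives the lemma. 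The degenerate cases (some $L(v)=0$, or $\phi_p(w)=0$) are handled by first deleting or shrinking the ``dead'' edges, which affects neither $\phi_p$ nor the claim, and then passing to a limit.

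I expect part (ii) to be the main obstacle: guessing the rebalancing factors and verifying that the single choice $c_v=((M^{\star})^{\ell}/L(v))^{1/p}$ equalizes the incoming mass at \emph{all} units at once. This is exactly where the layered structure and the exponent $d$ enter --- the computation telescopes only because $L$ obeys the one-step recursion $L(v)=\sum_{(u\to v)\in E}|w_{(u\to v)}|^p\,L(u)$. Part (i) is a routine induction and the invariance step is immediate.
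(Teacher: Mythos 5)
The paper does not actually prove this lemma --- it is imported from \cite{neyshabur15} --- so there is no in-paper argument to compare against. Judged on its own, your proof is the standard balancing argument and is essentially correct in the setting where the lemma is meant to apply (a layered network with a single output unit): the induction $L(v)\le M^{\ell}$ gives $\phi_p(\tilde{w})^p\le M^d$ for every equivalent network, and your choice $c_v=\left((M^{\star})^{\ell}/L(v)\right)^{1/p}$ does equalize the incoming $p$-mass at every non-input unit to $M^{\star}=\phi_p(w)^{p/d}$, which is indeed the crux. Two caveats are worth flagging. First, the single-output restriction is not merely notational: with several output units one has $\phi_p(w)^p=\sum_j L(\vout[j])$ while $\mu_{p,\infty}$ only sees the largest of these and output units cannot be rescaled; a two-layer network with one input, one hidden unit, and two output units, all weights equal to $1$, has $\phi_p(w)=2^{1/p}$ while $\min_{\tilde{w}\sim w}\mu_{p,\infty}(\tilde{w})^2=1$. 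So the identity must be read per output unit (or for $C=1$), and your part (i) would otherwise acquire a factor of the number of outputs. Second, your handling of the degenerate case is too quick: when some $L(v)=0$ but $v$ has a nonzero outgoing edge, the value $\phi_p(w)$ is only approached in the limit $c_v\to\infty$ and is not attained, so the ``$\min$'' in the statement is really an infimum there; ``passing to a limit'' establishes the infimum but not attainment, and ``deleting dead edges'' changes the equivalence class. Neither caveat undermines the substance of your argument, which is the correct and, as far as I can tell, the intended one.
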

The definition \eqref{eq:defphi} of the $\ell_p$-path regularizer
involves an exponential number of terms.  But it can be computed
efficiently by dynamic programming in a single forward step using the
following equivalent form as nested sums:
\begin{equation*}
\phi_p(w) = \left(\sum_{(v_{d-1}\rightarrow v_{\OUT}[j])\in E}\left| w_{(v_{d-1}\rightarrow v_{\OUT}[j])}\right|^p\sum_{(v_{d-2}\rightarrow v_{d-1})\in E}\dots \sum_{(v_{\IN}[i]\rightarrow v_{1})\in E} \left| w_{(v_{\IN}[i]\rightarrow v_{1})}\right|^p \right)^{1/p}
\end{equation*}
A straightforward consequence of Lemma \ref{lem:path-unit} is that the $\ell_p$ path-regularizer $\phi_p$ is invariant to rescaling, i.e. for any $\tilde{w} \sim w$, $\phi_p(\tilde{w})=\phi_p(w)$.

\section{\RSGD: An Approximate Path-Regularized Steepest Descent}
Motivated by empirical performance of max-norm regularization and the
fact that path-regularizer is invariant to rescaling, we are
interested in deriving the steepest descent direction with respect to
the path regularizer $\phi_p(w)$:
\begin{align}\label{eq:sd}
w^{(t+1)} &= \arg\min_w \;\;\eta \inner{ \nabla L(w^{(t)})}{w} + \frac{1}{2}\norm{\pi(w)-\pi(w^{(t)})}_p^2\\ \notag
&= \arg\min_w \;\;\eta \inner{ \nabla L(w^{(t)}) }{w} + \left(\sum_{\vin[i] \overset{e_1}\rightarrow v_1\overset{e_2}\rightarrow v_2\dots\overset{e_d}{\rightarrow}\vout[j]}\left(\prod_{k=1}^d w_{e_k} - \prod_{k=1}^d w^{(t)}_{e_k})\right)^p\right)^{2/p}\\ \notag
& = \arg\min_w J^{(t)}(w)
\end{align}
The steepest descent step \eqref{eq:sd} is hard to calculate exactly.  Instead, we will update each coordinate $w_e$ independently (and synchronously) based on~\eqref{eq:sd}. That is:
\begin{equation}
w^{(t+1)}_e =\arg\min_{w_e} \;J^{(t)}(w) \qquad \text{s.t.}\;\;\forall_{e'\neq e} \;\;w_{e'}=w^{(t)}_{e'}
\end{equation}
Taking the partial derivative with respect to $w_e$ and setting it to zero we obtain:
\begin{equation*}
0 =\eta \frac{\partial L}{\partial w_e}(w^{(t)}) - \left(w_e-w^{(t)}_e\right) \left(\sum_{v_{\text{in}}[i] \dots \stackrel{e}{\rightarrow} \dots v_{\text{out}}[j]} \prod_{e_k\neq e} \abs{w^{(t)}_e}^p\right)^{2/p}
\end{equation*}
where $v_{\text{in}}[i] \dots \stackrel{e}{\rightarrow} \dots v_{\text{out}}[j]$ denotes the paths from any input unit $i$ to any output unit $j$ that includes $e$. Solving for $w_e$  gives us the following update rule:
\begin{equation}\label{eq:update}
\hat{w}^{(t+1)}_e = w^{(t)}_e- \frac{\eta}{\gamma_p(w^{(t)},e)}\frac{\partial L}{\partial w}(w^{(t)})
\end{equation}
where $\gamma_p(w,e)$ is given as
\begin{equation}
\gamma_p(w,e) =\left(\sum_{v_{\text{in}}[i] \dots \stackrel{e}{\rightarrow} \dots v_{\text{out}}[j]} \prod_{e_k\neq e} \abs{w_{e_k}}^p\right)^{2/p}
\end{equation}
We call the optimization using the update rule \eqref{eq:update} path-normalized gradient descent. When used in stochastic settings, we refer to it as \RSGD.

\removed{ In the following lemma, we prove that if the relative change in the weights of the network is small enough (which can be achieved by choosing a small enough stepsize), then the update rule~\eqref{eq:update} is an approximate steepest direction with respect to $\ell_p$-path regularizer.
\begin{lem}
Let $\delta_{\max}$ be the maximum relative change in a weight in the network by update rule~\eqref{eq:update}, i.e.
$\delta_{\max} = \max_{e\in E} \abs{ w^{(t+1)}_e - w^{(t)}_e }/\abs{w^{(t)}_e}$. Then, if $\delta_{\max} \leq \frac{1}{d}$, we have that:
$$
\norm{\pi(w^{(t+1)}) - \pi(w^{(t)})}_p \leq 2d\delta_{\max}\phi_p(w^{(t)})
$$
and so
$$
\phi_p(w^{(t+1)}) \leq 2d\delta_{\max}\phi_p(w^{(t)}).
$$
\end{lem}
\begin{proof}
Consider any path $\vin[i] \overset{e_1}\rightarrow v_1\overset{e_2}\rightarrow v_2\dots\overset{e_d}{\rightarrow}\vout[j]$ in the network. It is clear that:
\begin{equation*}
\abs{ \prod_{k=1}^d w^{(t+1)}_{e_k} - \prod_{k=1}^d w_{e_k}} \leq \left[(1+\delta_{\max})^d -1\right]\abs{\prod_{k=1}^d w_{e_k}} 
\end{equation*}
Applying the above inequality on all paths and taking the sum gives us:
\begin{align*}
\norm{\pi(w^{(t+1)}) - \pi(w^{(t)})}_p  &\leq \left[(1+\delta_{\max})^d-1\right]\phi_p(w^{(t)})\\
&\leq (e-1)d\delta_{\max}
\end{align*}
\end{proof}
}

Now that we know \RSGD is an approximate steepest descent with respect to the path-regularizer, we can ask whether or not this makes \RSGD a {\em rescaling invariant} optimization method. The next theorem proves that \RSGD is indeed rescaling invariant.

\begin{thm}
\RSGD is rescaling invariant.
\end{thm}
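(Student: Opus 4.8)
Rescaling equivalence $w\sim\tilde w$ is, by definition, generated by finite sequences of elementary rescalings $\rho_{c,v}$, and the claim over $t$ update steps follows by induction on $t$ once it is known for a single step. So the plan is to reduce everything to the following one-step statement: if $v$ is a hidden unit, $c>0$, $\tilde w=\rho_{c,v}(w)$, and $\hat w,\tilde{\hat w}$ denote the results of one \RSGD update \eqref{eq:update} applied to $w$ and to $\tilde w$ (using the same minibatch in the stochastic case), then $\tilde{\hat w}=\rho_{c,v}(\hat w)$. Composing this identity along the generating sequence of $\tilde w^{(0)}\sim w^{(0)}$ and iterating over steps then yields $\tilde w^{(t)}\sim w^{(t)}$ for all $t$.

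To establish the one-step statement I would track how the two factors appearing in \eqref{eq:update}, namely $\gamma_p(\cdot,e)$ and $\partial L/\partial w_e$, transform under $\rho_{c,v}$. The starting point is that path products are rescaling-invariant: for any directed input-to-output path $P$, either $P$ avoids $v$ and none of its edge weights change, or $P$ passes through $v$ via exactly one incoming and one outgoing edge of $v$, picking up a factor $c$ and a factor $1/c$ that cancel. Hence for every edge $e$ the quantity $S_e(w):=\sum_{P\ni e}\abs{\prod_{e_k\in P}w_{e_k}}^p$ is invariant, while by definition $\gamma_p(w,e)^{p/2}=\sum_{P\ni e}\prod_{e_k\neq e}\abs{w_{e_k}}^p=\abs{w_e}^{-p}S_e(w)$, so $\gamma_p(w,e)=S_e(w)^{2/p}/\abs{w_e}^2$ and therefore $\gamma_p(\tilde w,e)=(\abs{w_e}/\abs{\tilde w_e})^2\,\gamma_p(w,e)$: the factor scales by $c^{-2}$ on edges into $v$, by $c^{2}$ on edges out of $v$, and is unchanged otherwise.

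Next I would compute the transformation of the gradient using backpropagation, $\partial L/\partial w_{(u\to u')}=o_u\,g_{u'}$, where $o_u$ is the output of unit $u$ and $g_{u'}$ the derivative of $L$ with respect to the pre-activation of $u'$. A forward induction using $\relu(cx)=c\,\relu(x)$ shows that $o_v$ scales by $c$ while $o_u$ is unchanged for $u\neq v$ (which is precisely why the function is preserved), and a backward induction using $\relu'(cx)=\relu'(x)$ for $c>0$ shows that $g_v$ scales by $c^{-1}$ while $g_u$ is unchanged for $u\neq v$. Consequently $\partial L/\partial w_e$ scales by $c^{-1}$ on edges into $v$, by $c$ on edges out of $v$, and is unchanged otherwise --- opposite to the weight, and ``matched'' to $\gamma_p$. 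Substituting into \eqref{eq:update}, the increment $\frac{\eta}{\gamma_p(w,e)}\frac{\partial L}{\partial w_e}(w)$ scales by $c$ on edges into $v$, by $c^{-1}$ on edges out of $v$, and is unchanged otherwise, which is exactly the scaling $\rho_{c,v}$ applies to the base point $w_e$; hence $\tilde{\hat w}=\rho_{c,v}(\hat w)$, finishing the step.

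The two inductions and the bookkeeping of scaling factors in \eqref{eq:update} are routine; the one point that genuinely needs care is the non-differentiability of $\relu$ at the origin: the homogeneity identity and $\relu'(cx)=\relu'(x)$ hold for $c>0$ only away from the kink, and \eqref{eq:update} implicitly requires $w_e\neq 0$ for the factorization $\gamma_p(w,e)=S_e(w)^{2/p}/\abs{w_e}^2$ to be valid. I would make these standing genericity assumptions, under which the argument is exact. A minor additional remark is that in the stochastic case the minibatch at step $t$ is chosen independently of the current weights, so $w^{(t)}$ and $\tilde w^{(t)}$ always see the same data and the per-step identity, hence the induction, is unaffected.
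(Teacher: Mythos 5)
Your proposal is correct and follows essentially the same route as the paper's proof: reduce to a single elementary rescaling $\rho_{c,v}$ and a single update step, then check that $\gamma_p(\cdot,e)$ scales by $c^{\mp 2}$ and the gradient by $c^{\mp 1}$ on edges incident to $v$, so the update increment scales exactly as $\rho_{c,v}$ scales $w_e$. You supply more detail than the paper (deriving the scaling of $\gamma_p$ from path-product invariance and the gradient scaling from a backpropagation induction, and flagging the genericity caveats at $\relu$'s kink and at $w_e=0$), but the core argument is identical.
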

\begin{proof}
It is sufficient to prove that using the update rule~\eqref{eq:update}, for any $c>0$ and any $v\in E$, if $\tilde{w}^{(t)} = \rho_{c,v}(w^{(t)})$, then $\tilde{w}^{(t+1)} = \rho_{c,v}(w^{(t+1)})$. For any edge $e$ in the network, if $e$ is neither incoming nor outgoing edge of the node $v$, then $\tilde{w}(e)=w(e)$, and since the gradient is also the same for edge $e$ we have $\tilde{w}^{(t+1)}_e=w^{(t+1)}_e$. However, if $e$ is an incoming edge to $v$, we have that $\tilde{w}^{(t)}(e)=cw^{(t)}(e)$. Moreover, since the outgoing edges of $v$ are divided by $c$, we get $\gamma_p(\tilde{w}^{(t)},e) = \frac{\gamma_p(w^{(t)},e)}{c^2}$ and $\frac{\partial L}{\partial w_e}(\tilde{w}^{(t)})= \frac{\partial L}{c\partial w_e}(w^{(t)})$. Therefore,
\begin{align*}
{\tilde{w}}^{(t+1)}_e &= cw^{(t)}_e - \frac{c^2\eta}{\gamma_p(w^{(t)},e)} \frac{\partial L}{c\partial w_e}(w^{(t)})\\
&= c\left(w^{(t)} - \frac{\eta}{\gamma_p(w^{(t)},e)} \frac{\partial L}{\partial w_e}(w^{(t)})\right) = cw^{(t+1)}_e.
\end{align*}
A similar argument proves the invariance of \RSGD update rule for outgoing edges of $v$. Therefore, \RSGD is rescaling invariant.
\end{proof}

\paragraph{Efficient Implementation:}
The Path-SGD update rule~\eqref{eq:update}, in the way it is written,
needs to consider all the paths, which is exponential in the depth of
the network. However, it can be calculated in a time that is no more
than a forward-backward step on a single data point. That is, in a
mini-batch setting with batch size $B$, if the backpropagation on the
mini-batch can be done in time $BT$, the running time of the Path-SGD
on the mini-batch will be roughly $(B+1)T$ -- a very moderate runtime
increase with typical mini-batch sizes of hundreds or thousands of
points.  Algorithm ~\ref{alg:update} shows an efficient implementation
of the Path-SGD update rule.

We next compare \RSGD to other optimization methods in both balanced and unbalanced settings.
 
\begin{algorithm}[t]
  \caption{\RSGD update rule}\label{alg:update}
  \begin{algorithmic}[1]
  \State $\forall_{v\in V^0_{\IN}}\; \gamma_{\IN}(v)=1$\Comment{Initialization}
  \State $\forall_{v\in V^0_\OUT}\; \gamma_{\OUT}(v)=1$
  \For{$i=1\;\textbf{to}\;d\;$}
  \State $\forall_{v\in V^i_{\IN}} \gamma_{\IN}(v) = \sum_{(u\rightarrow v)\in E} \gamma_{\IN}(u)\abs{w_{(u,v)}}^p$
  \State $\forall_{v\in V^i_{\OUT}} \gamma_{\OUT}(v) = \sum_{(v\rightarrow u)\in E} \abs{w_{(v,u)}}^p\gamma_{\OUT}(u)$
  \EndFor
  \State $\forall_{(u\rightarrow v)\in E}\;\; \gamma(w^{(t)},(u,v)) = \gamma_{\IN}(u)^{2/p}\gamma_{\OUT}(v)^{2/p}$
  \State $\forall_{e\in E} w^{(t+1)}_e = w^{(t)}_e - \frac{\eta}{\gamma(w^{(t)},e)} \frac{\partial L}{\partial w_e}(w^{(t)})$\Comment{Update Rule}
   \end{algorithmic}
\end{algorithm}

\section{Experiments}
In this section, we compare $\ell_2$-\RSGD to two commonly used optimization methods in deep learning, SGD and AdaGrad. We conduct our experiments on 
four common benchmark datasets: the standard MNIST dataset of handwritten digits~\cite{lecun1998gradient}; 
CIFAR-10 and CIFAR-100 datasets of tiny images of natural scenes~~\cite{krizhevsky2009learning}; 
and Street View House Numbers (SVHN) dataset containing 
color images of house numbers collected by Google Street View~\cite{netzer2011reading}. 
Details of the datasets are shown in Table~\ref{table}.

\begin{table}[t]
\caption{General information on datasets used in the experiments.}
\label{table}
\begin{center}
\begin{tabular}{c c c c c}
{\bf Data Set}  &{\bf Dimensionality}&{\bf Classes}&{\bf Training Set}&{\bf Test Set}
\\ \hline
CIFAR-10&3072 ($32 \times 32$ color)&10&50000&10000\\
CIFAR-100&3072 ($32 \times 32$ color)&100&50000&10000\\
MNIST&784 ($28 \times 28$ grayscale)&10&60000&10000\\
SVHN&3072 ($32 \times 32$ color)&10&73257&26032\\
\hline
\end{tabular}
\end{center}
\end{table}

\begin{figure}[t!]
\hspace{0.1in}
 \subfloat{
  \begin{tabular}{r}
   \includegraphics[width=\picwidth]{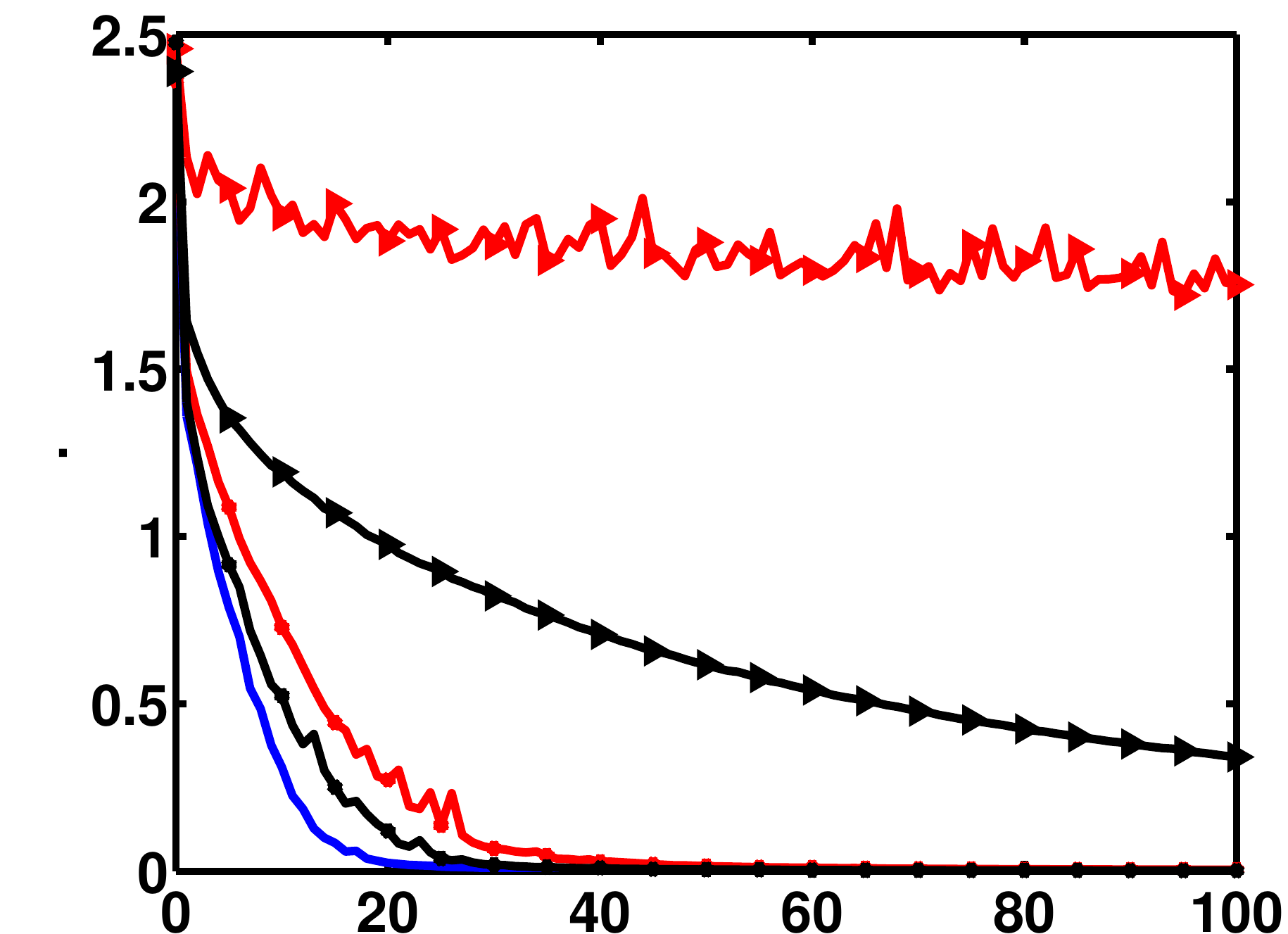} \\
   \includegraphics[width=1.76in]{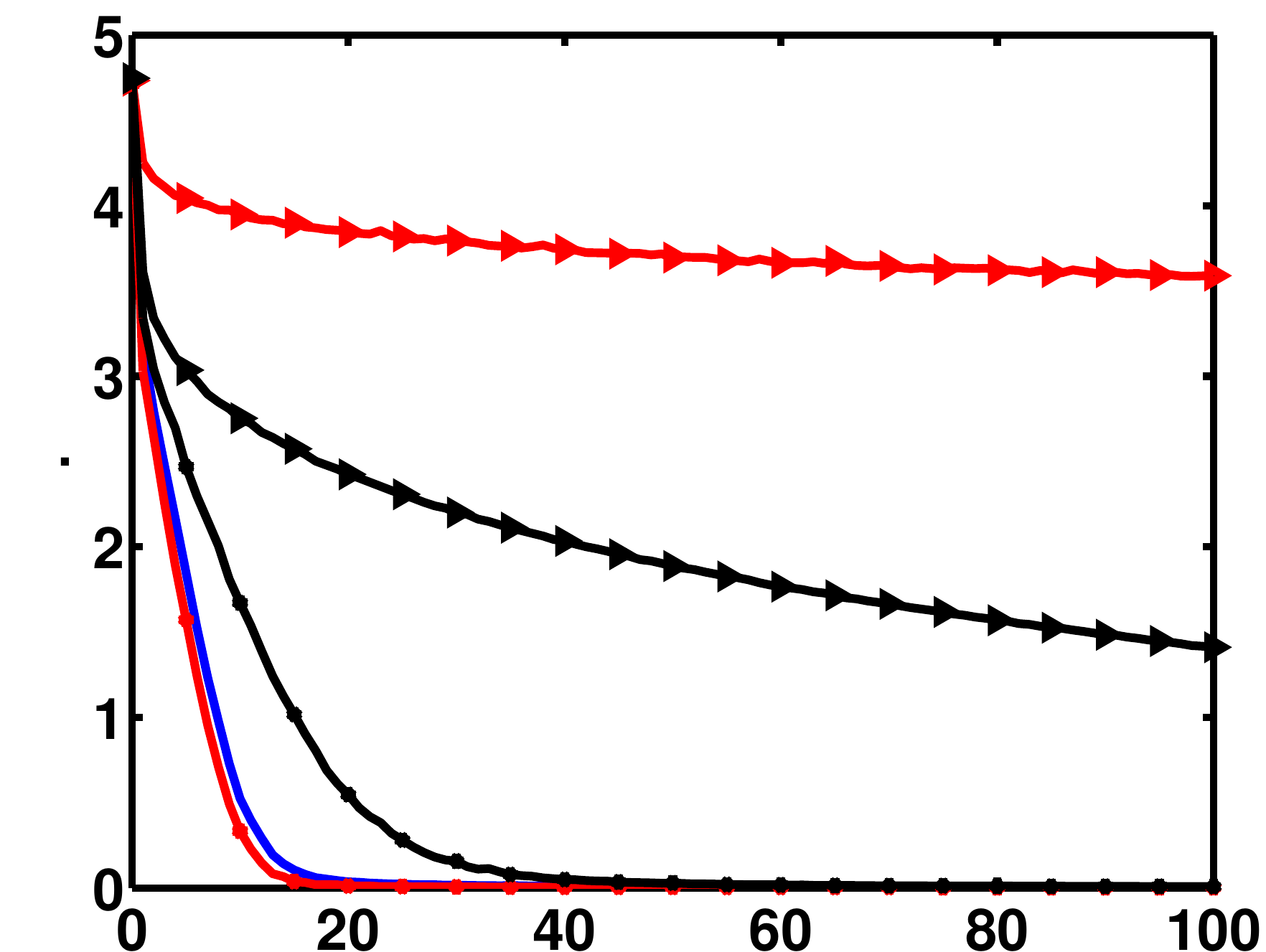} \\
   \includegraphics[width=\picwidth]{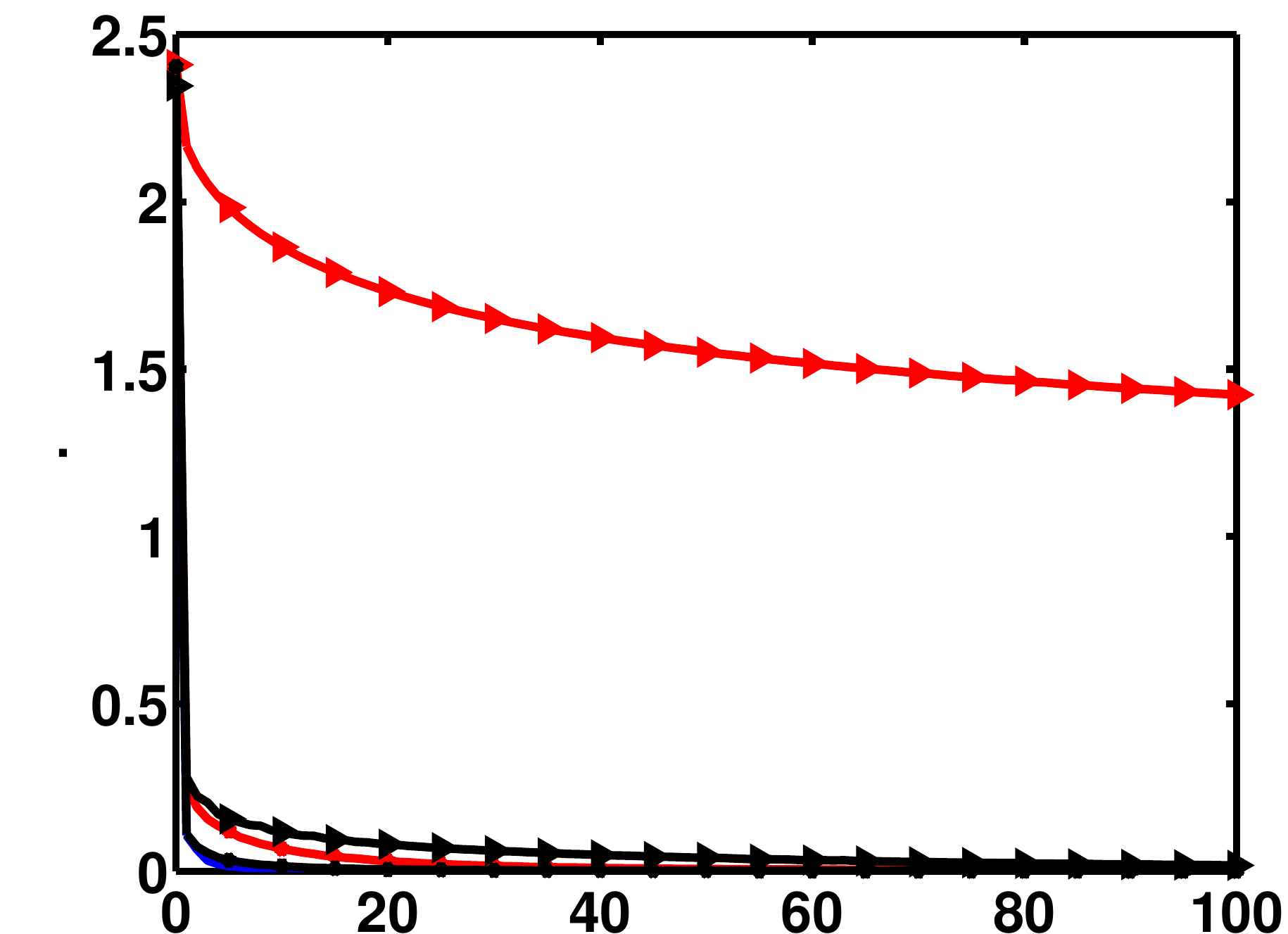} \\
   \includegraphics[width=\picwidth]{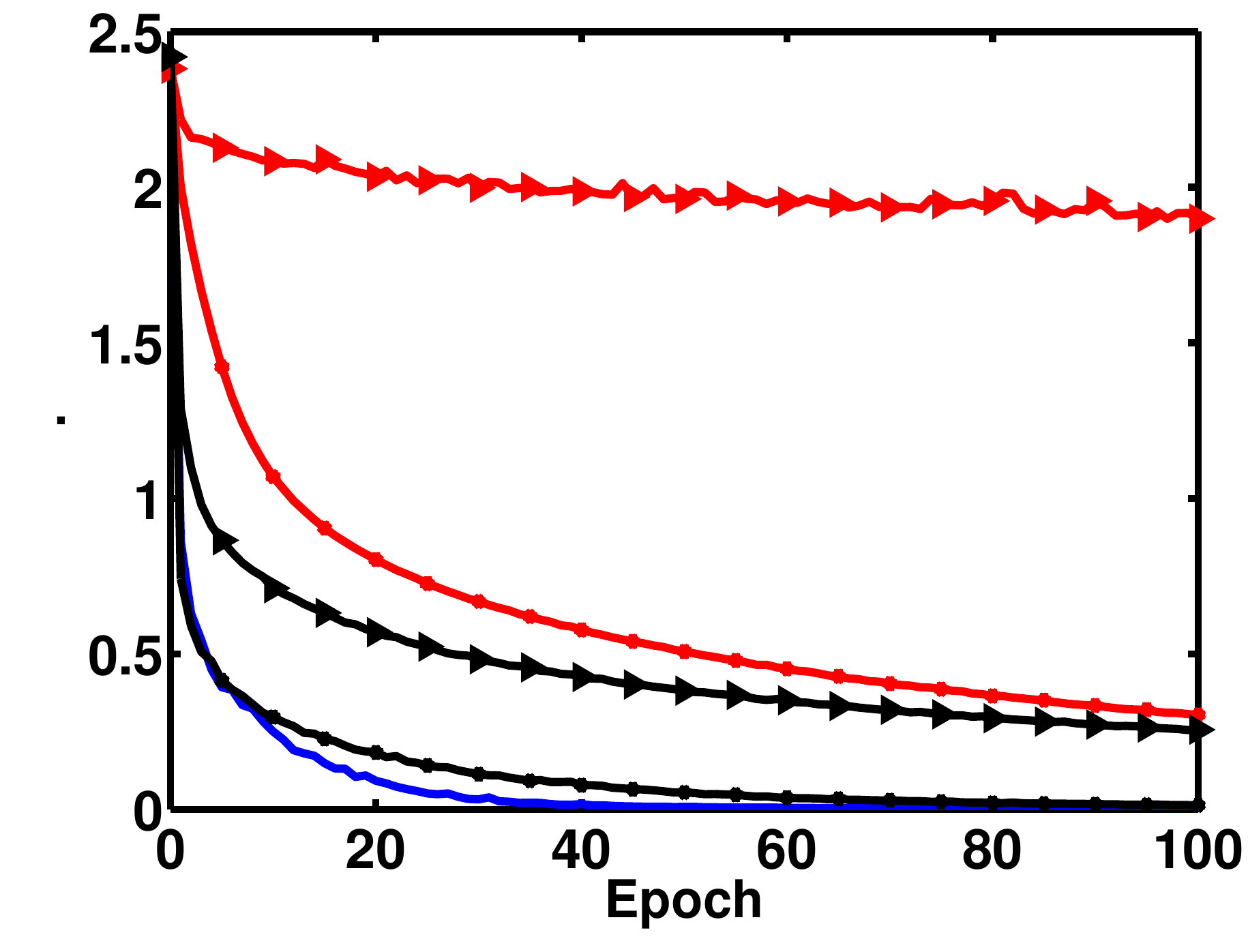}
  \end{tabular}
 }
 \hspace{-0.3in}
 \subfloat{
  \begin{tabular}{r}
   \includegraphics[width=\picwidth]{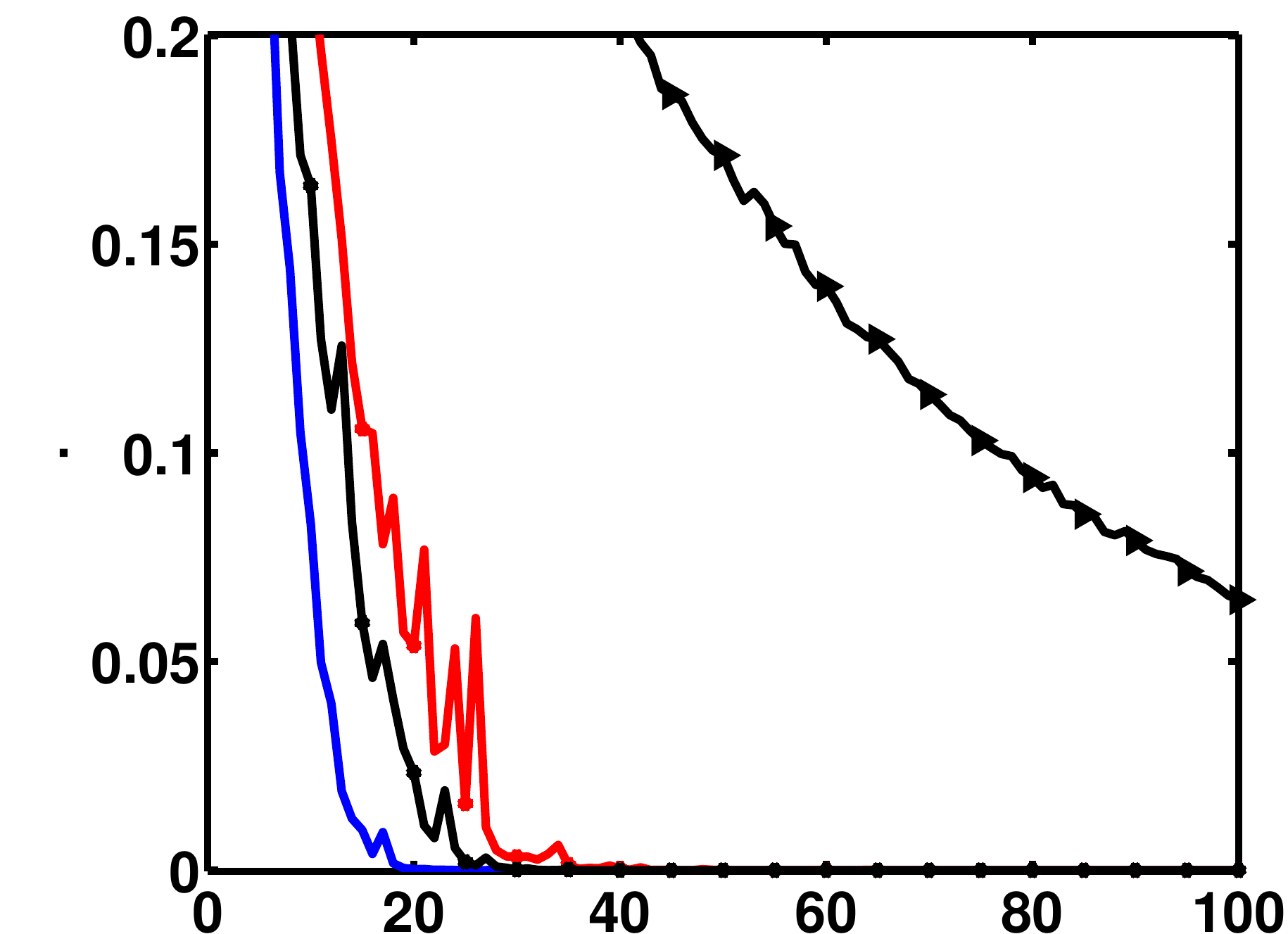} \\
   \includegraphics[width=\picwidth]{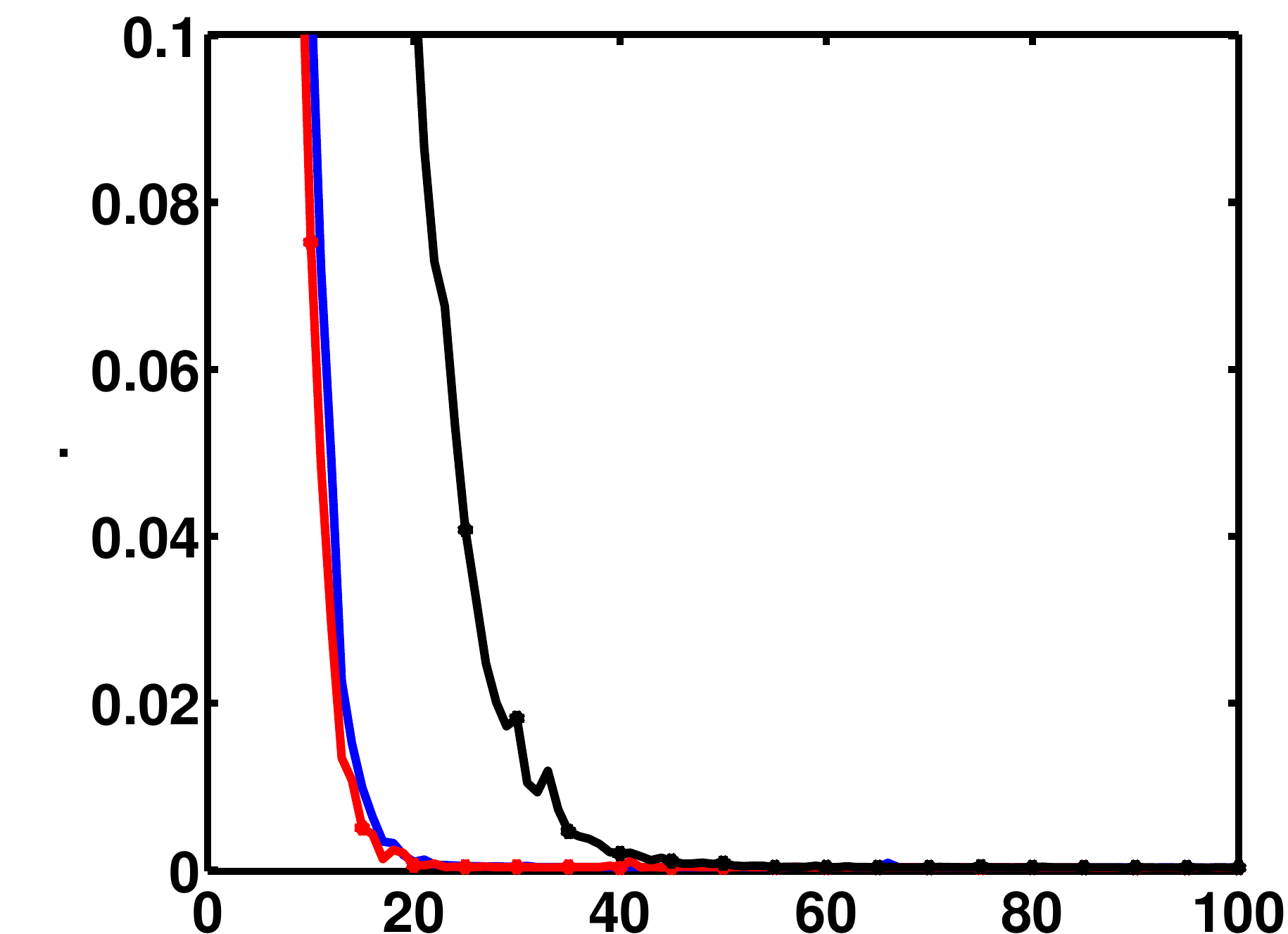} \\
   \includegraphics[width=1.9in]{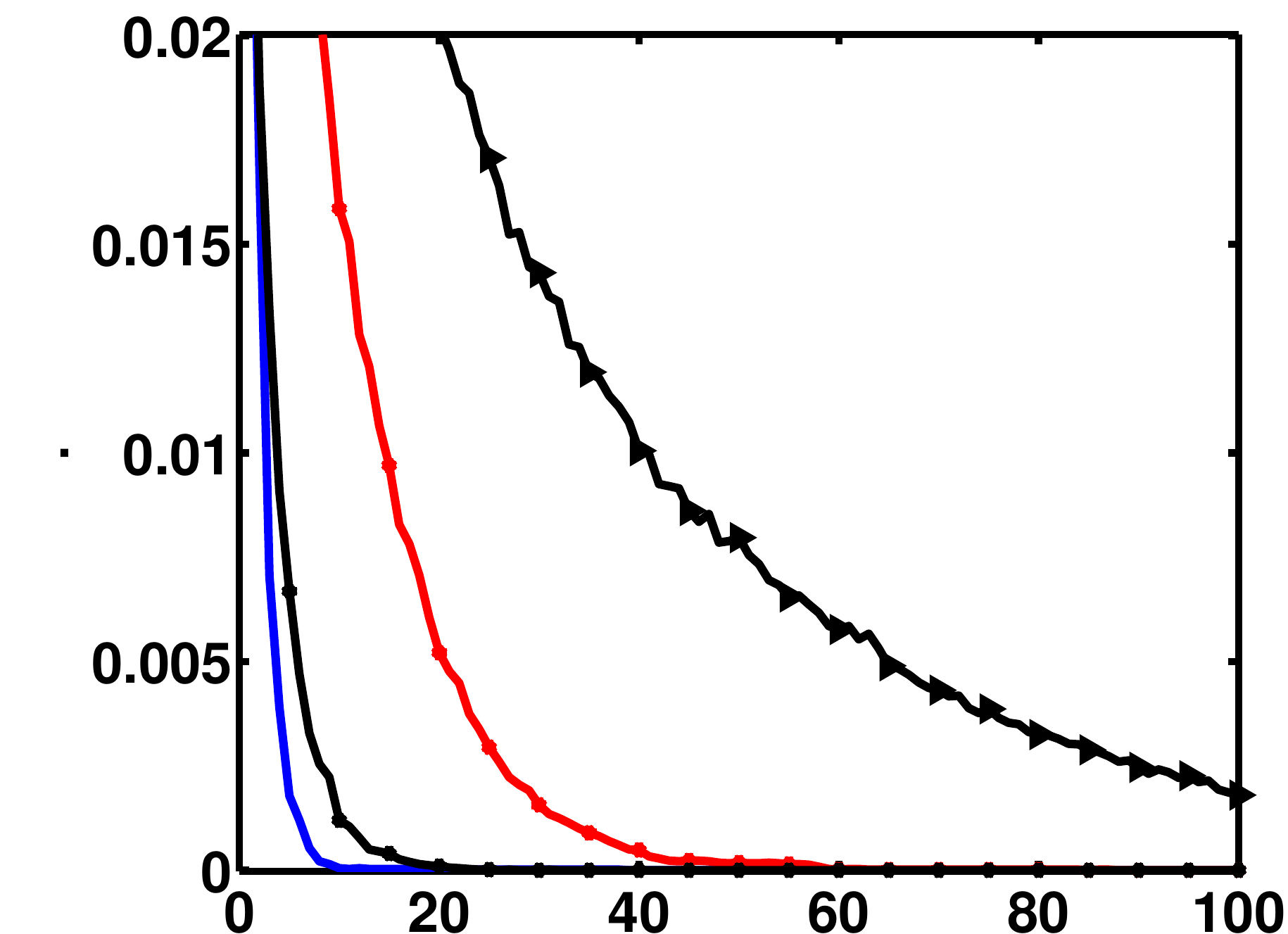} \\
   \includegraphics[width=1.85in]{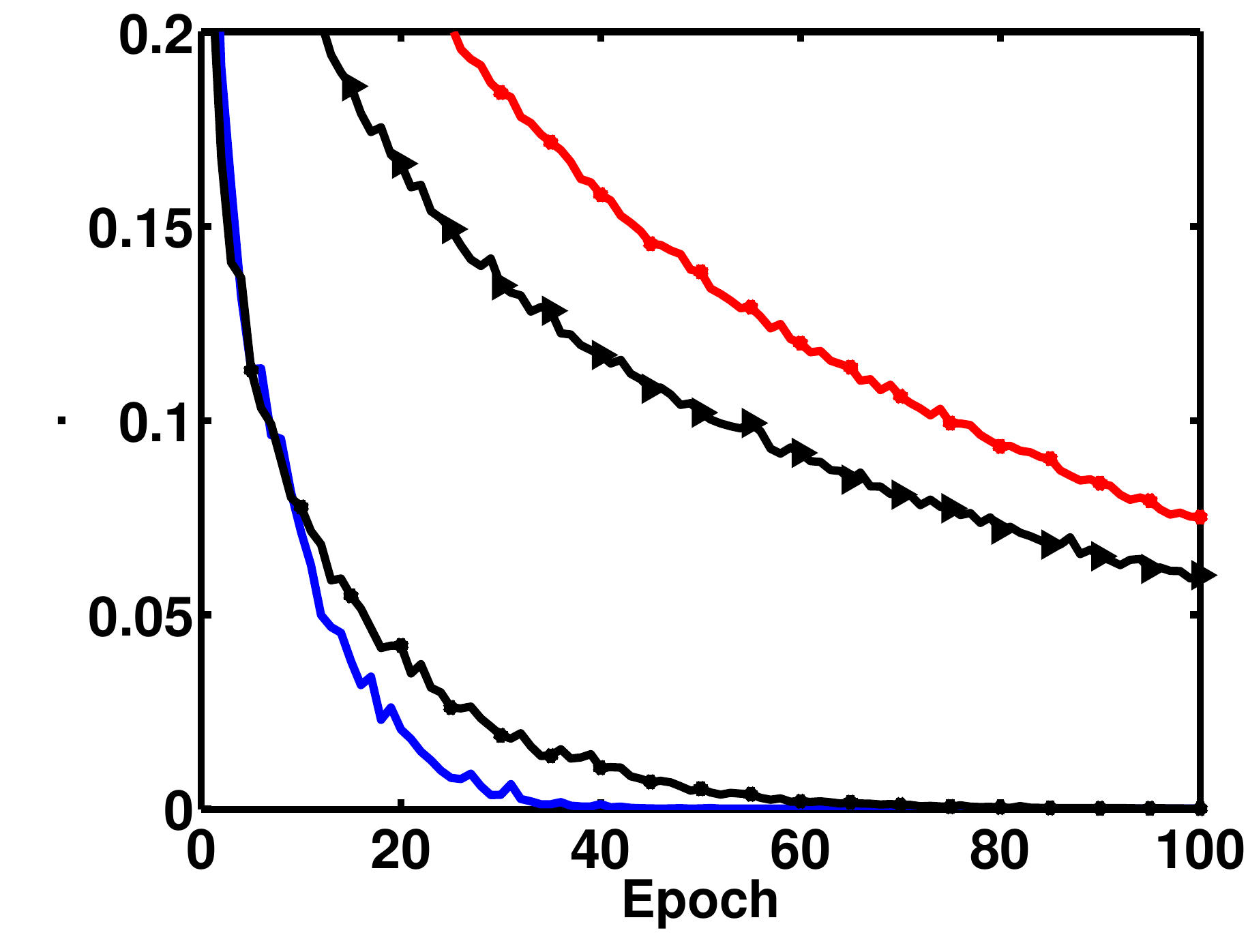}
  \end{tabular}
 }
 \hspace{-0.3in}
 \subfloat{
  \begin{tabular}{r}
   \includegraphics[width=\picwidth]{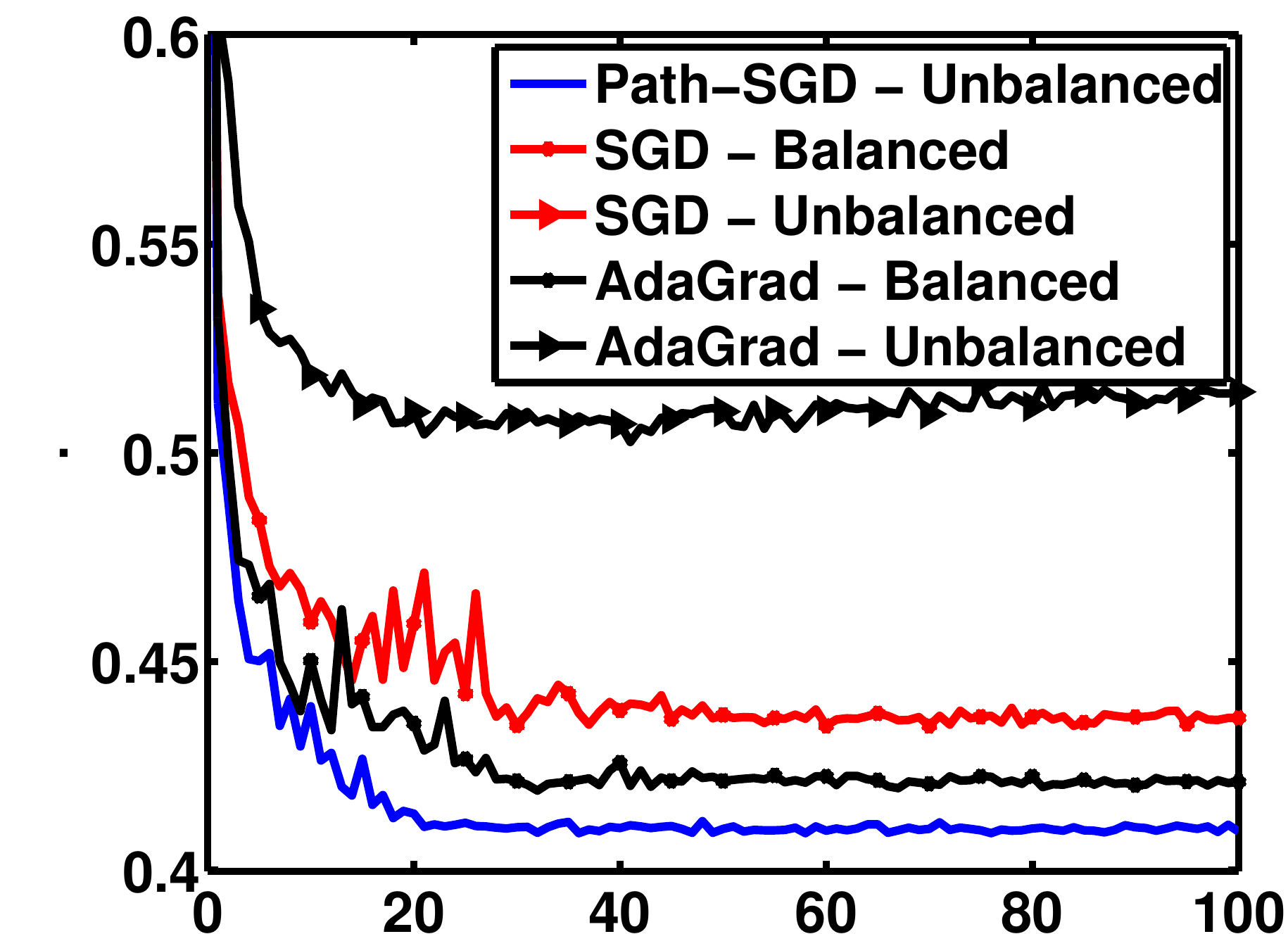} \\
   \includegraphics[width=\picwidth]{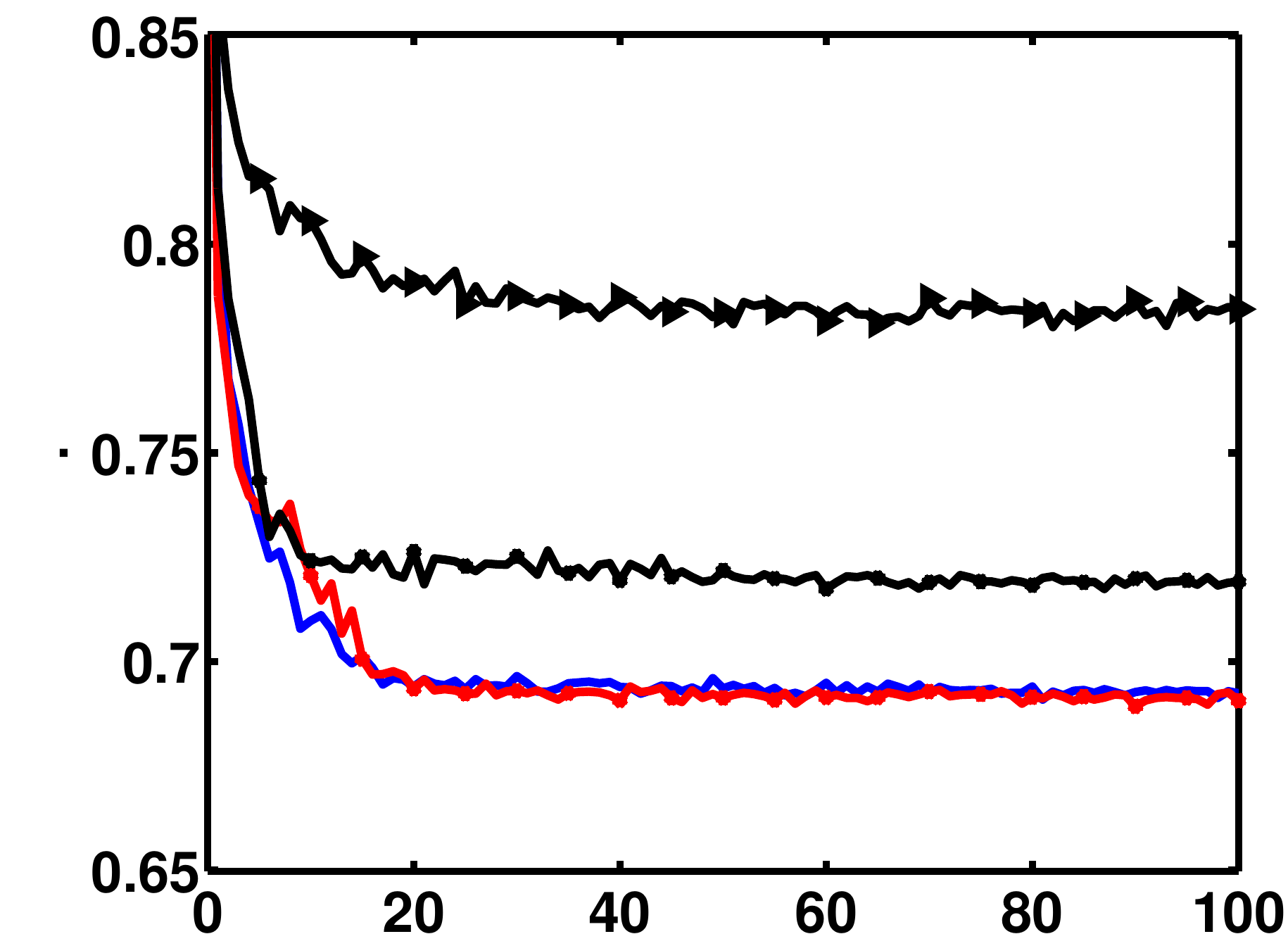} \\
   \includegraphics[width=1.9in]{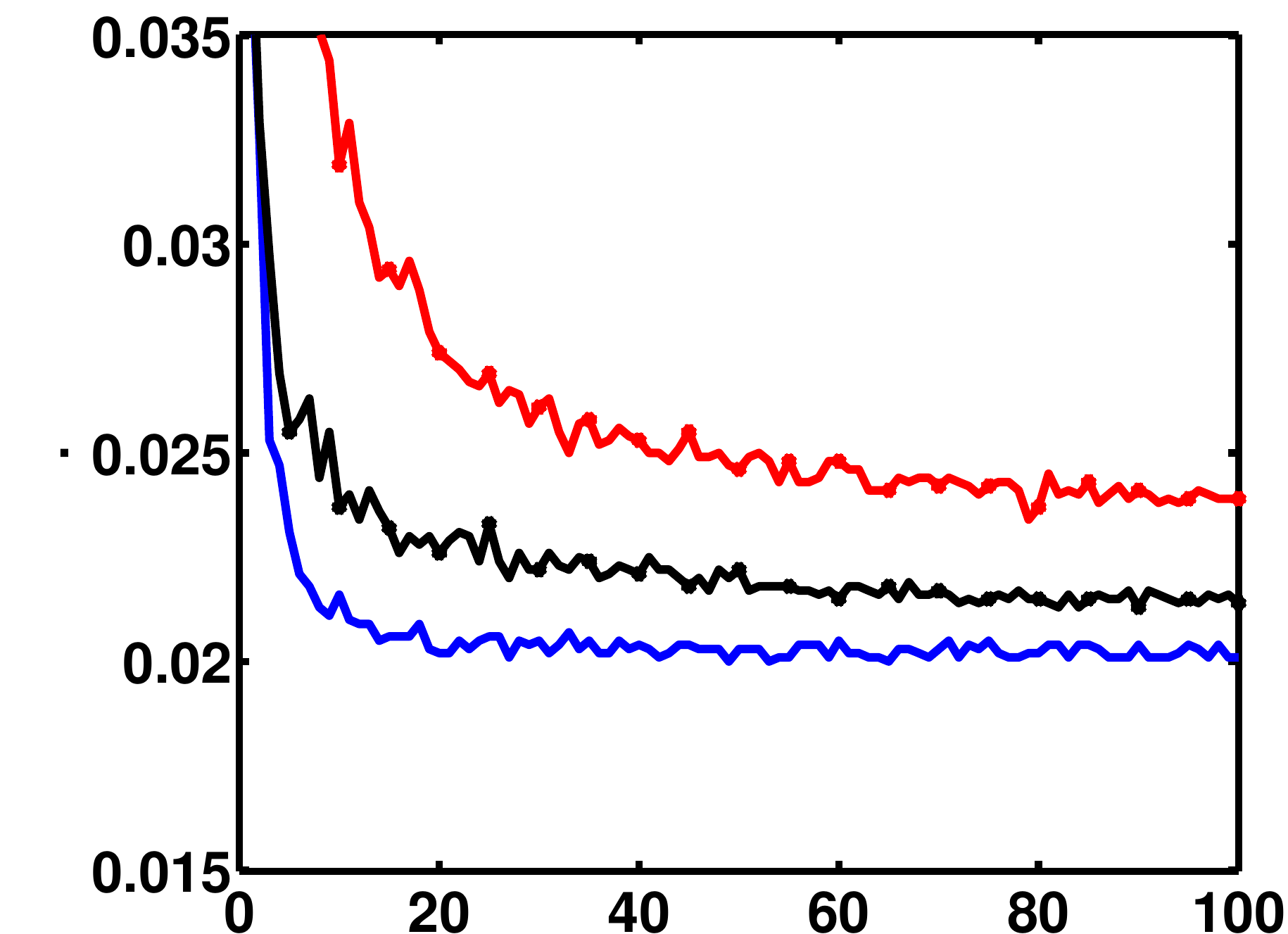} \\
   \includegraphics[width=1.85in]{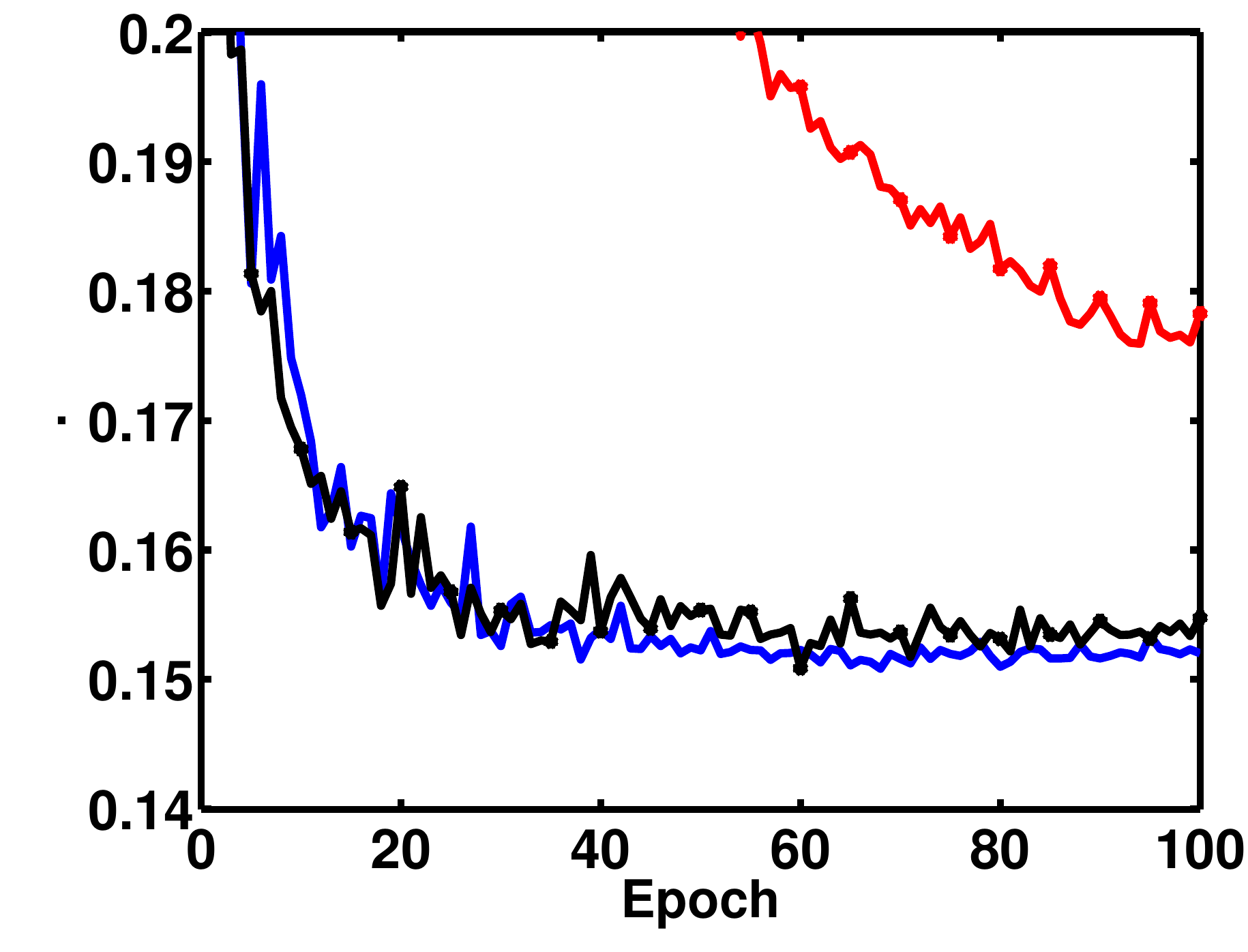}
  \end{tabular}
 }

 \begin{picture}(0,0)(0,0)
\rotatebox{90}{\put(342, 0){CIFAR-10}\put(240, 0){CIFAR-100}\put(147, 0){MNIST}\put(50, 0){SVHN}}
\end{picture}
  \begin{picture}(0,0)(0,0)
{\put(30, 420){\small Cross-Entropy Training Loss}\put(187, 420){\small 0/1 Training Error}\put(328, 420){ \small 0/1 Test Error}}
\end{picture}
 \caption{\small Learning curves using different optimization methods 
 for 4 datasets without dropout. Left panel displays the cross-entropy objective function; 
middle and right panels show the corresponding values of the training and test errors, where the values are reported on
different epochs during the course of optimization. Best viewed in color.}
 \label{fig:nodrop}
\vspace{-0.1in}
\end{figure}

In all of our experiments, we trained feed-forward networks with two hidden layers, each containing 4000 hidden units. We used mini-batches of size 100 and the step-size of $10^{-\alpha}$, where $\alpha$ is an integer between 0 and 10. To choose $\alpha$, for each dataset, we considered the validation errors over the validation set (10000 randomly chosen points that are kept out during the initial training) and picked the one that reaches the minimum error faster. We then trained the network over the entire training set. All the networks were trained both with and without dropout. When training with dropout, at each update step, we retained each unit with probability 0.5.

We tried both balanced and unbalanced initializations. In balanced initialization, incoming weights to each unit $v$ are initialized to i.i.d samples from a Gaussian distribution with standard deviation $1/\sqrt{\text{fan-in}(v)}$. In the unbalanced setting, we first initialized the weights to be the same as the balanced weights. 
We then picked 2000 hidden units randomly with replacement. For each unit, we multiplied its incoming edge and divided its outgoing edge by $10c$, where $c$ was chosen randomly from log-normal distribution.

\begin{figure}[t]
\hspace{0.1in}
 \subfloat{
  \begin{tabular}{r}
   \includegraphics[width=\picwidth]{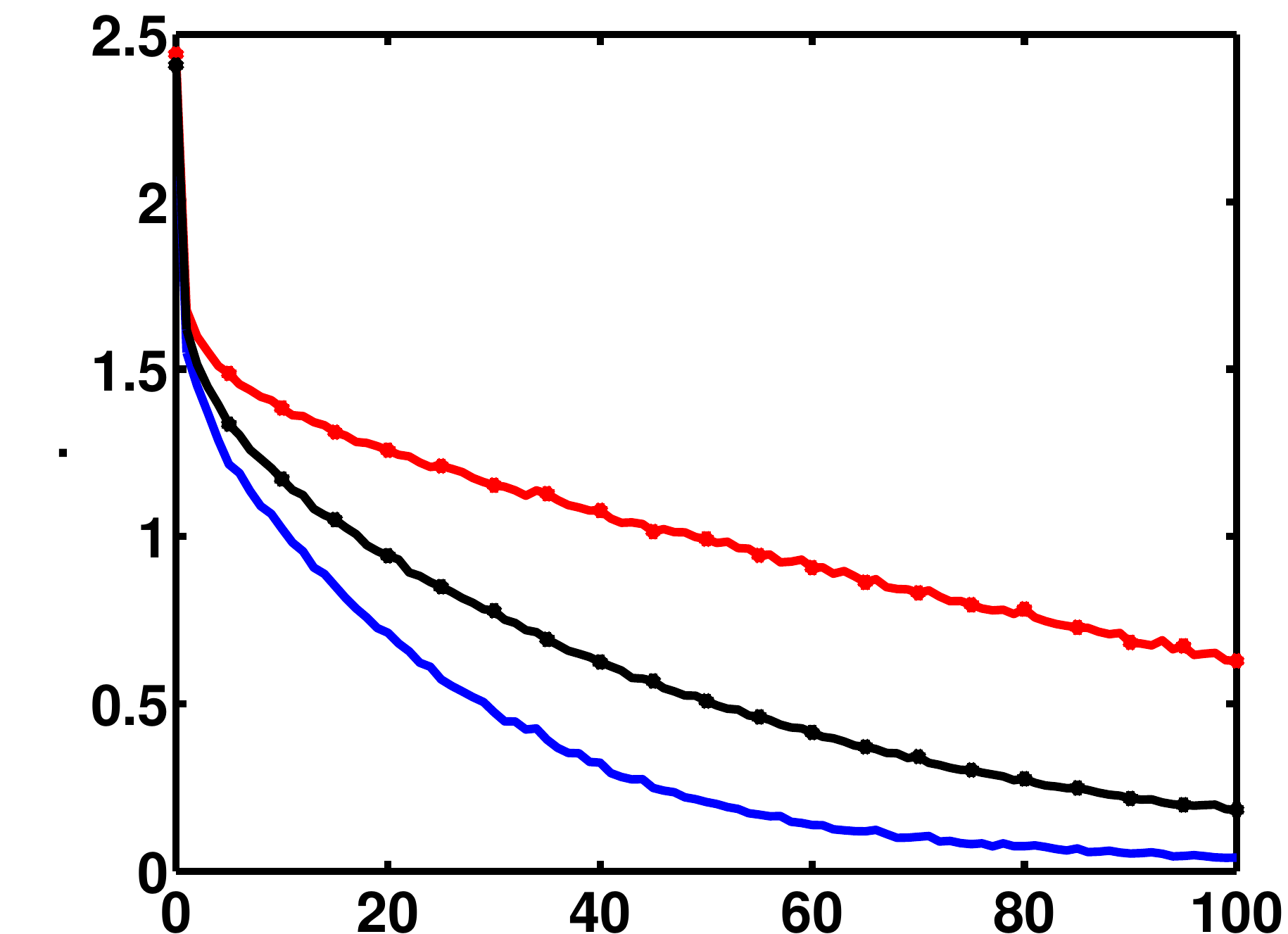} \\
   \includegraphics[width=1.79in]{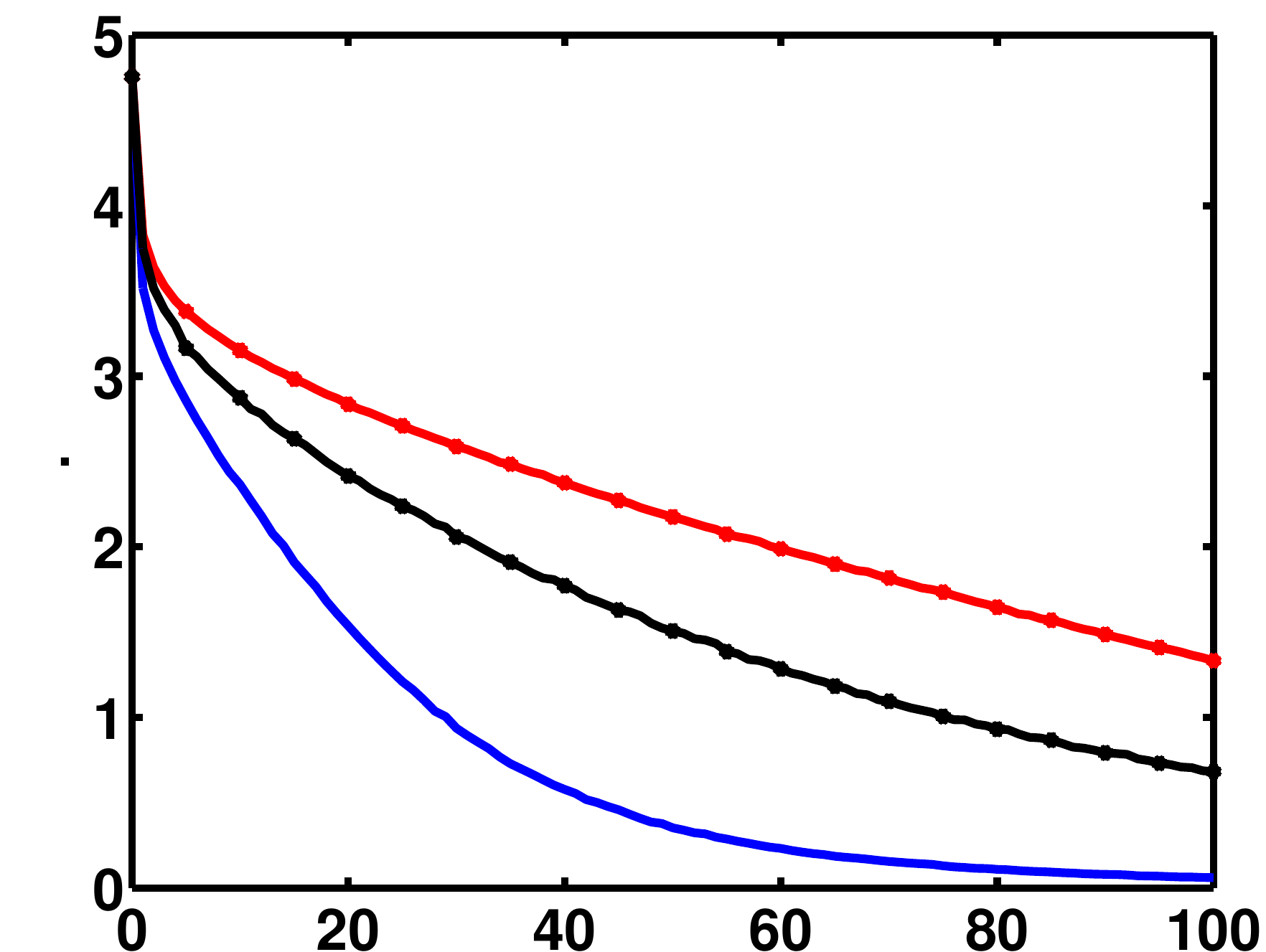} \\
   \includegraphics[width=1.85in]{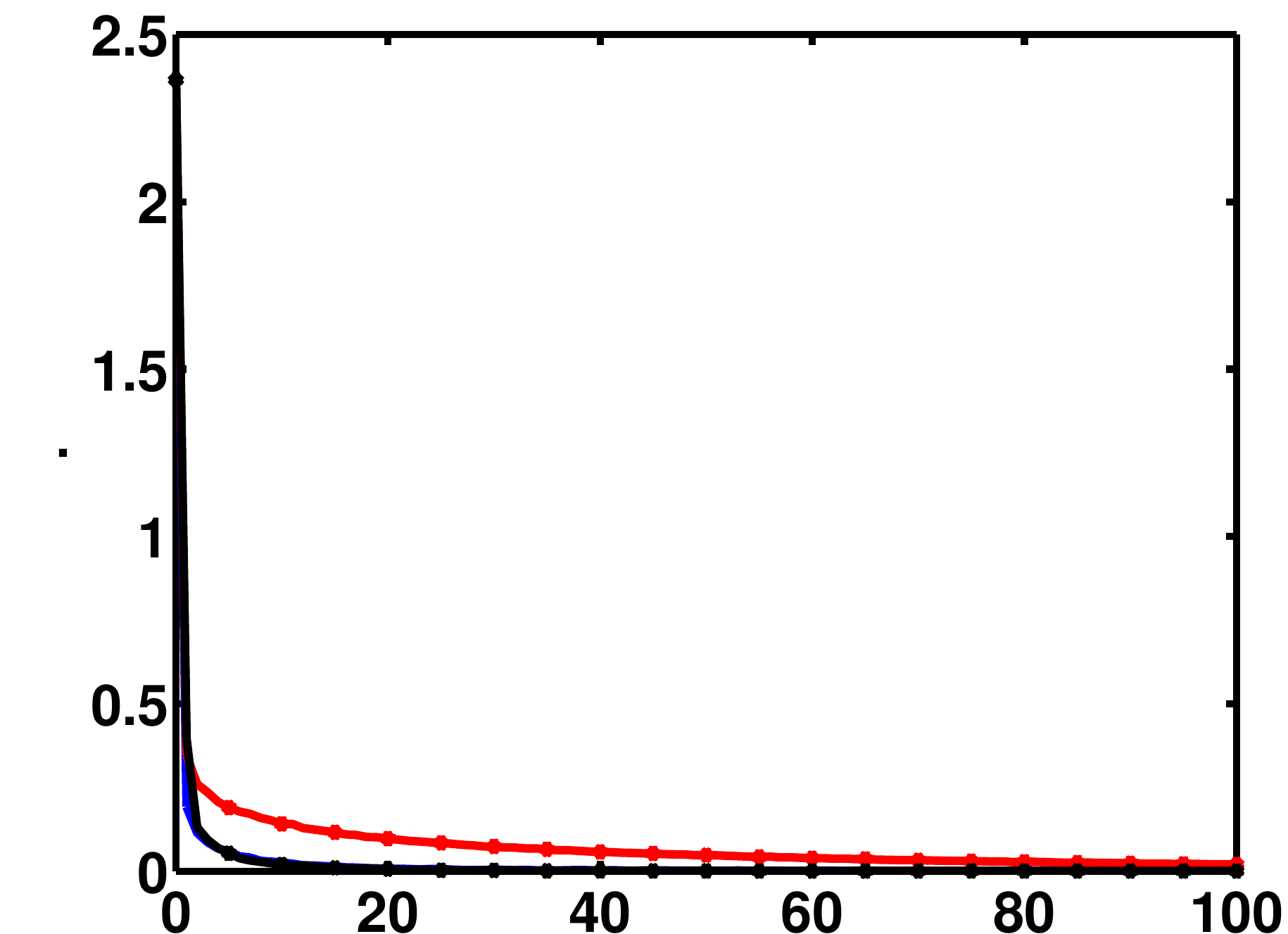} \\
   \includegraphics[width=1.85in]{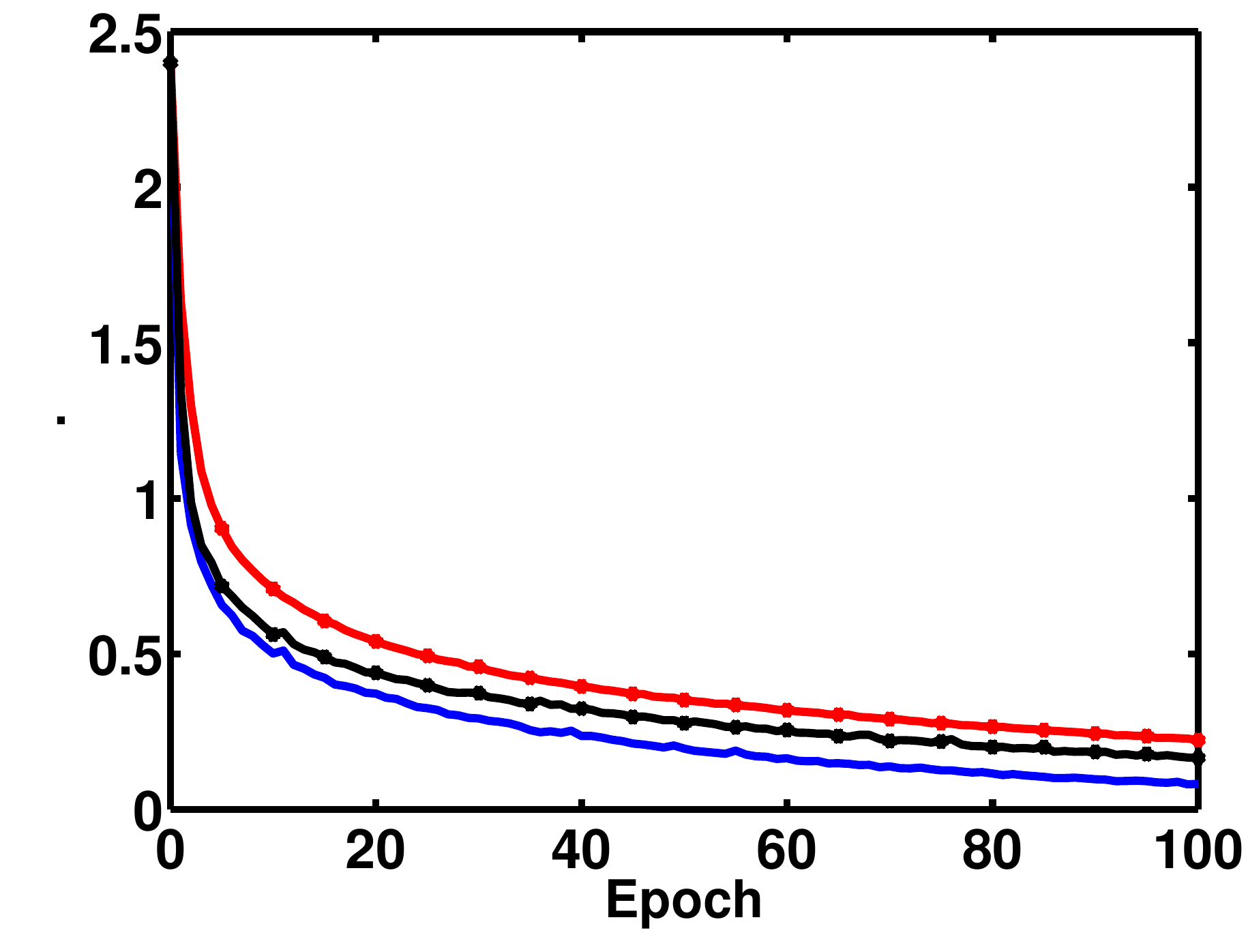}
  \end{tabular}
 }\hspace{-0.3in}
 \subfloat{
  \begin{tabular}{r}
   \includegraphics[width=\picwidth]{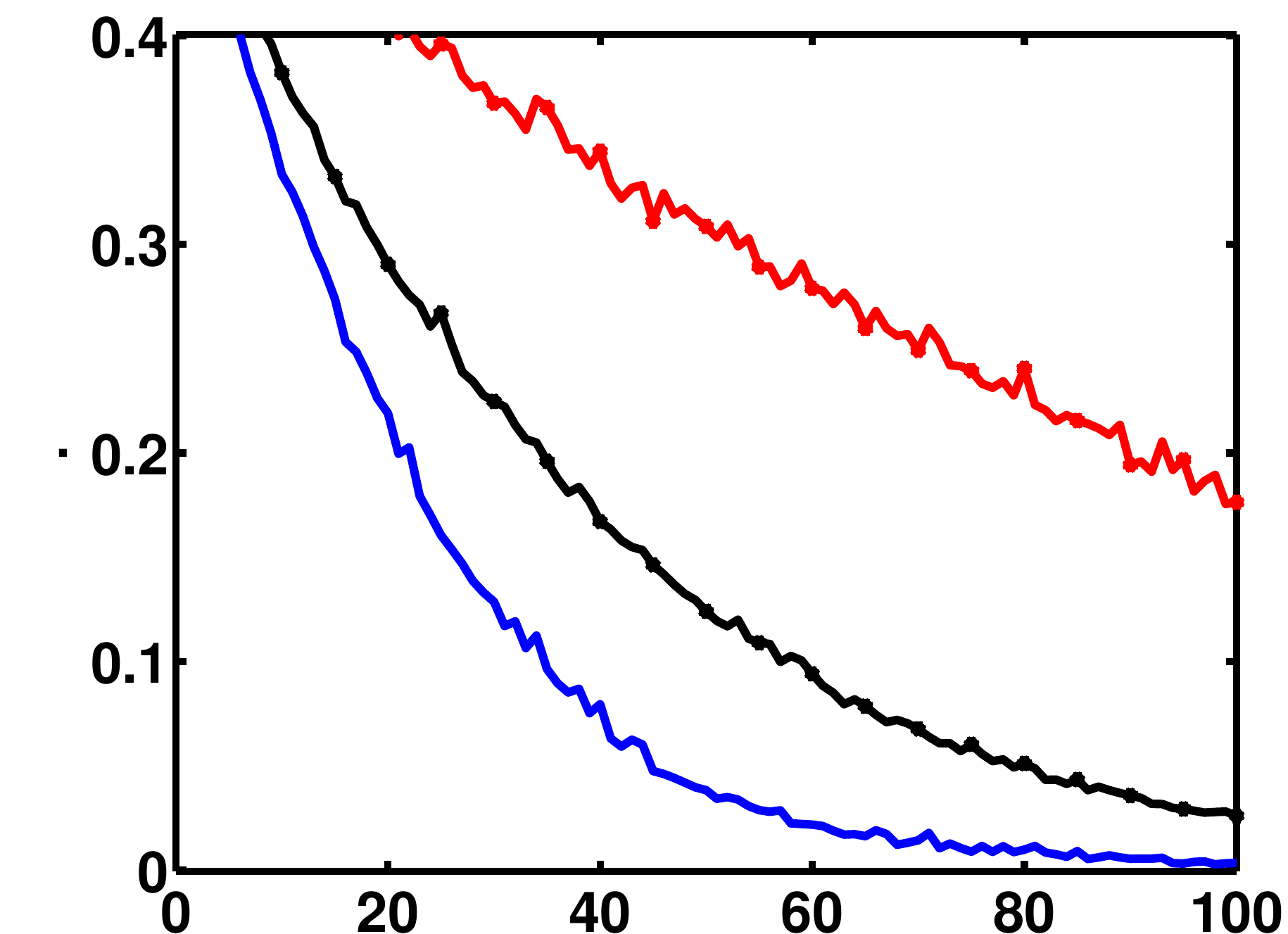} \\
   \includegraphics[width=\picwidth]{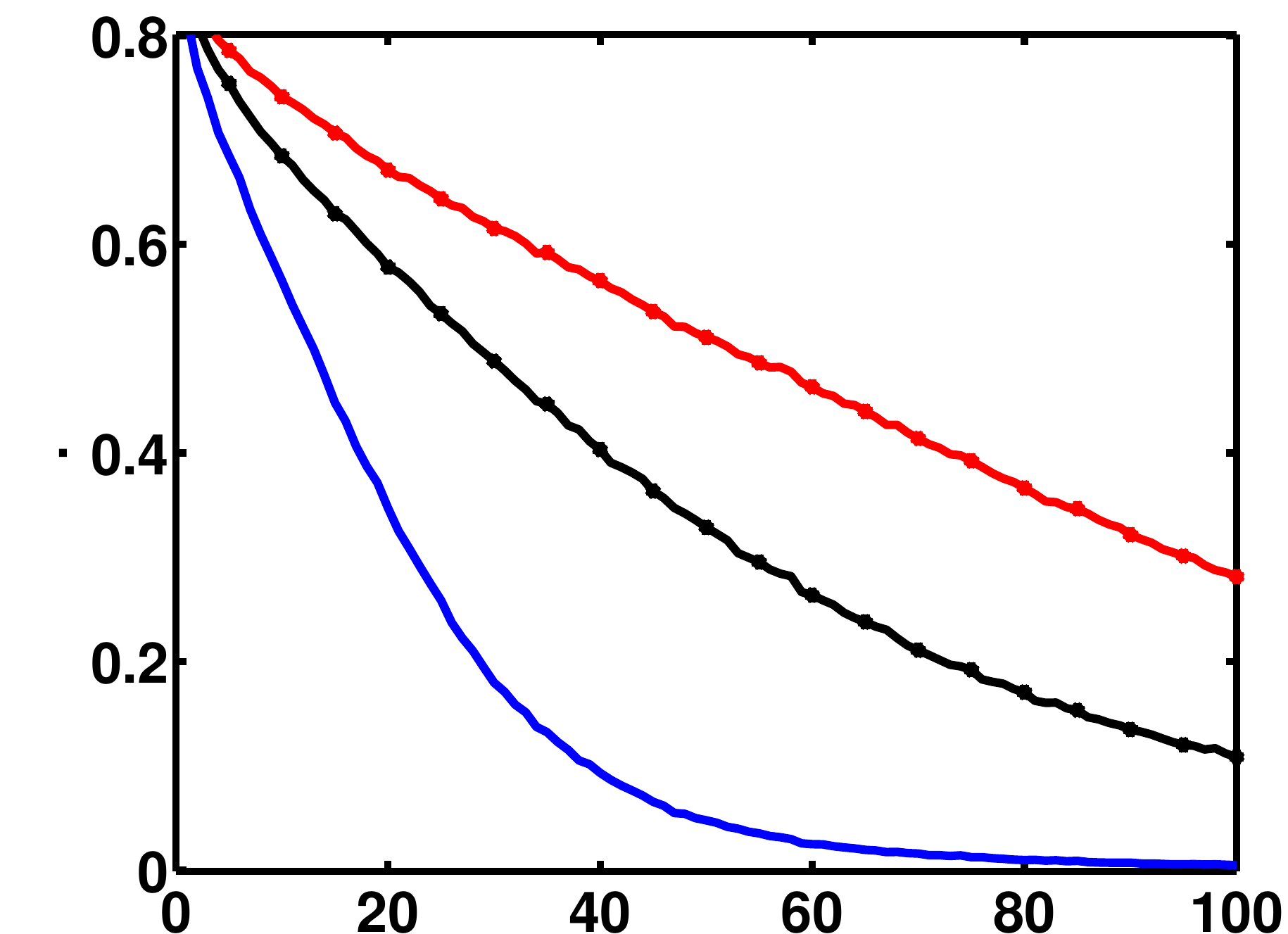} \\
   \includegraphics[width=1.9in]{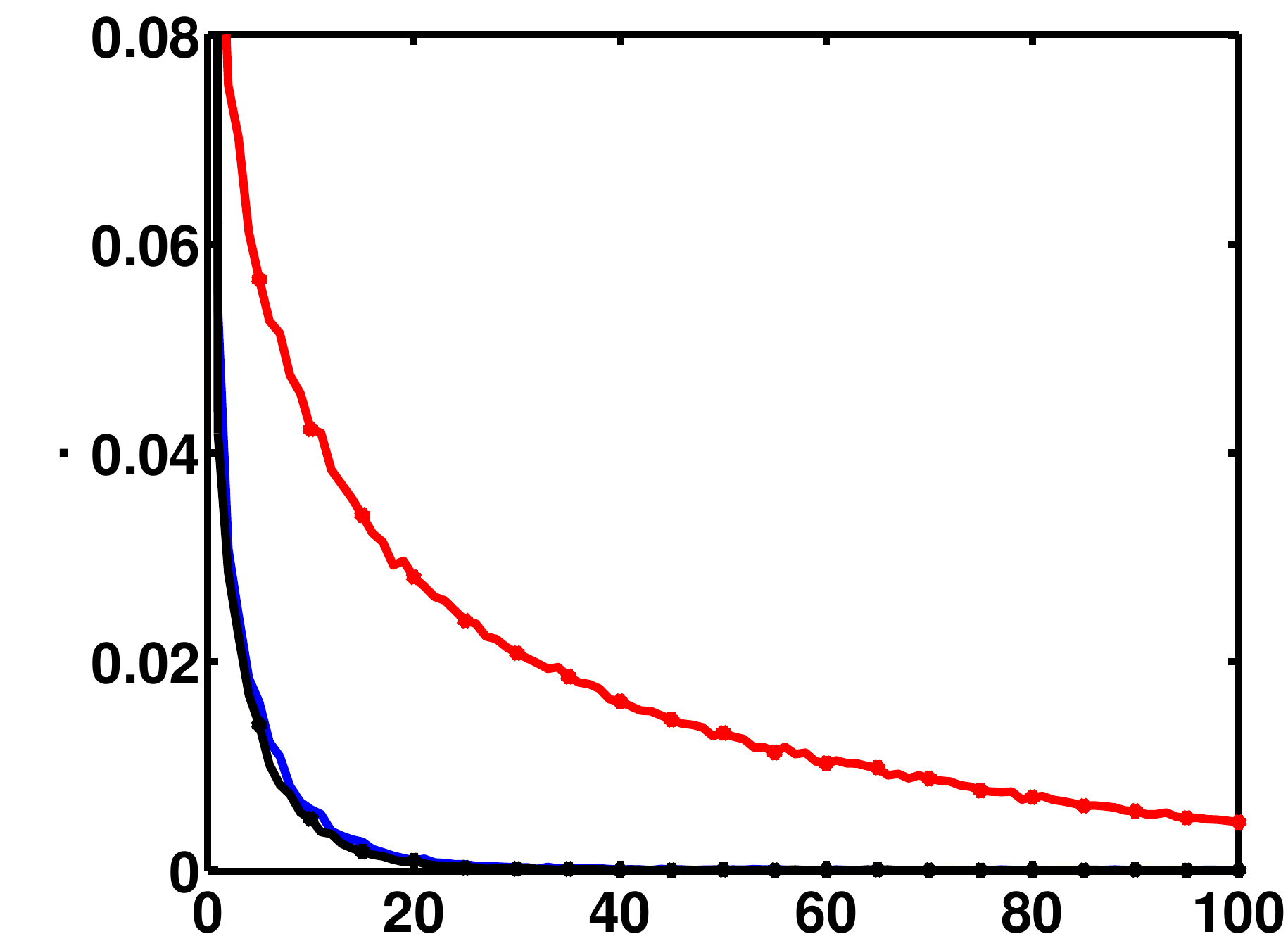} \\
   \includegraphics[width=\picwidth]{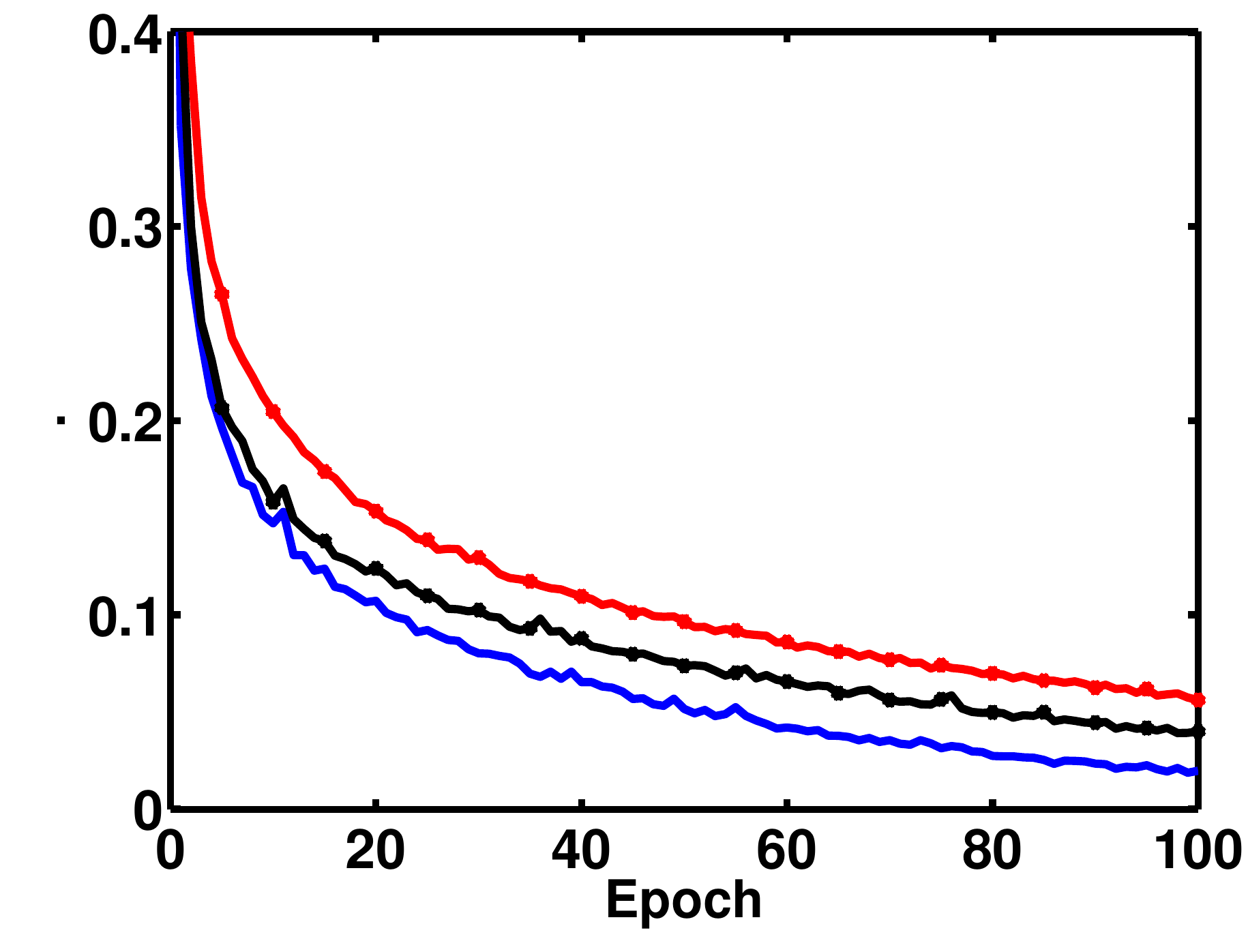}
  \end{tabular}
 }\hspace{-0.3in}
 \subfloat{
  \begin{tabular}{r}
   \includegraphics[width=\picwidth]{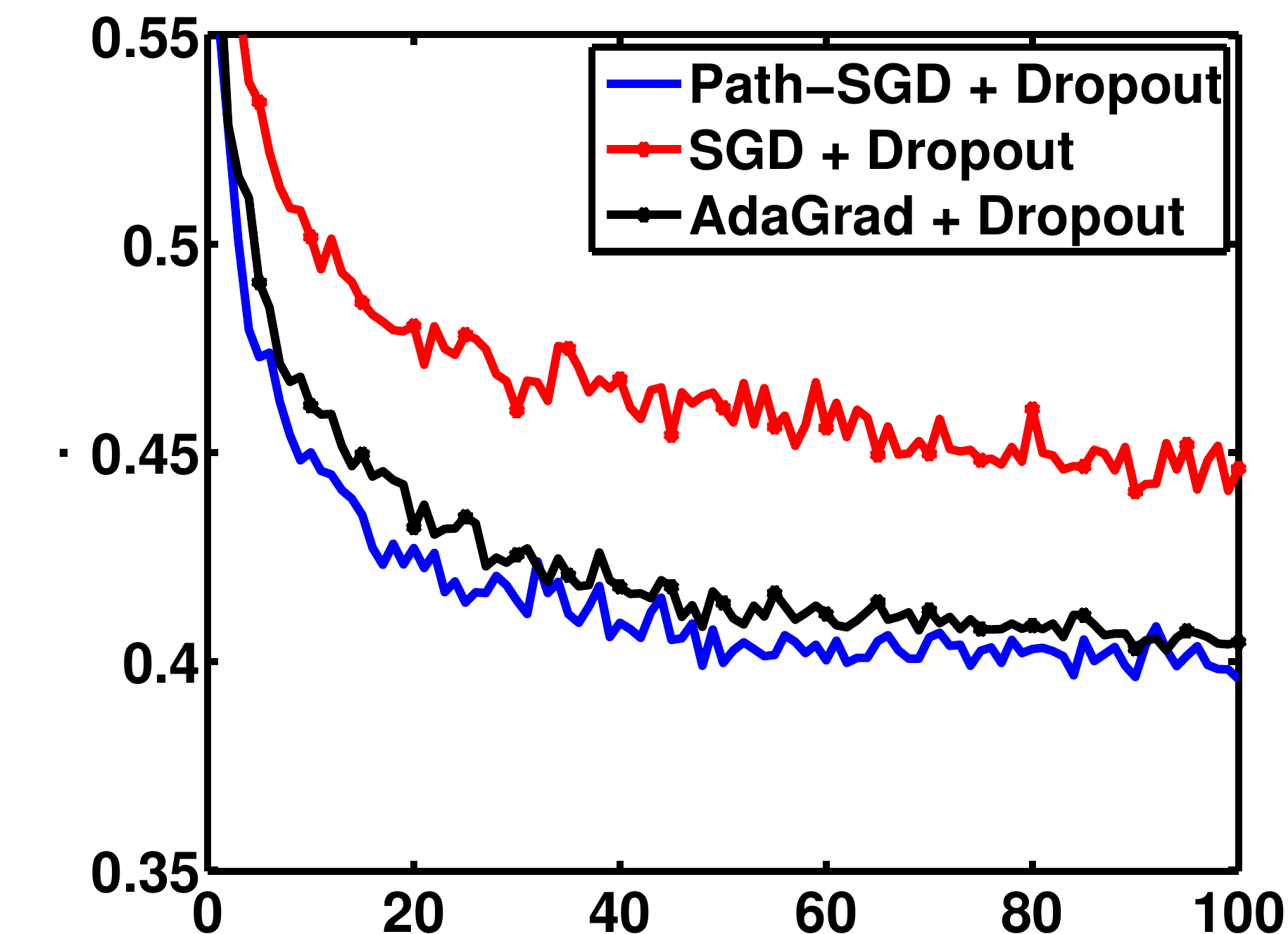} \\
   \includegraphics[width=\picwidth]{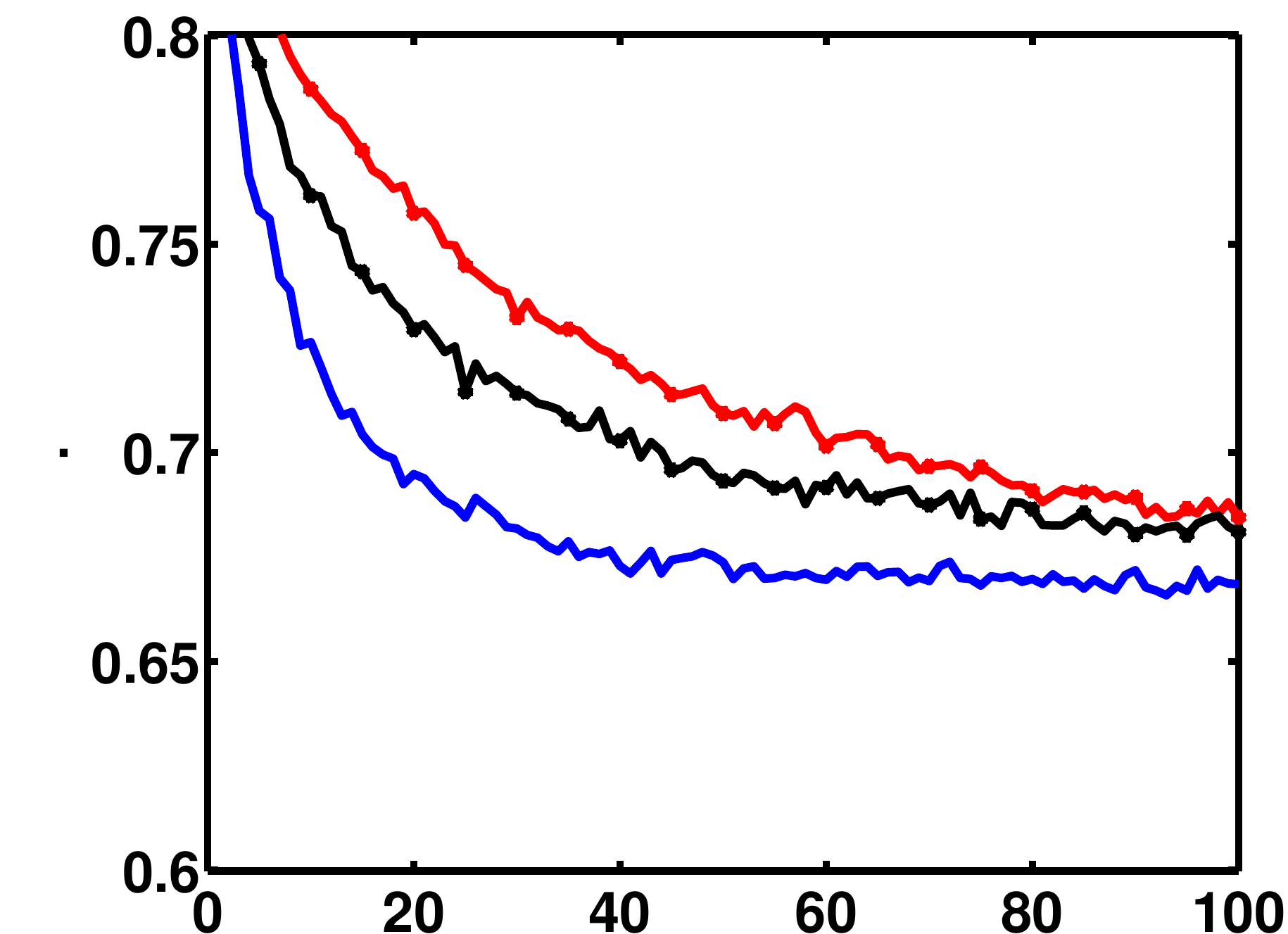} \\
   \includegraphics[width=1.9in]{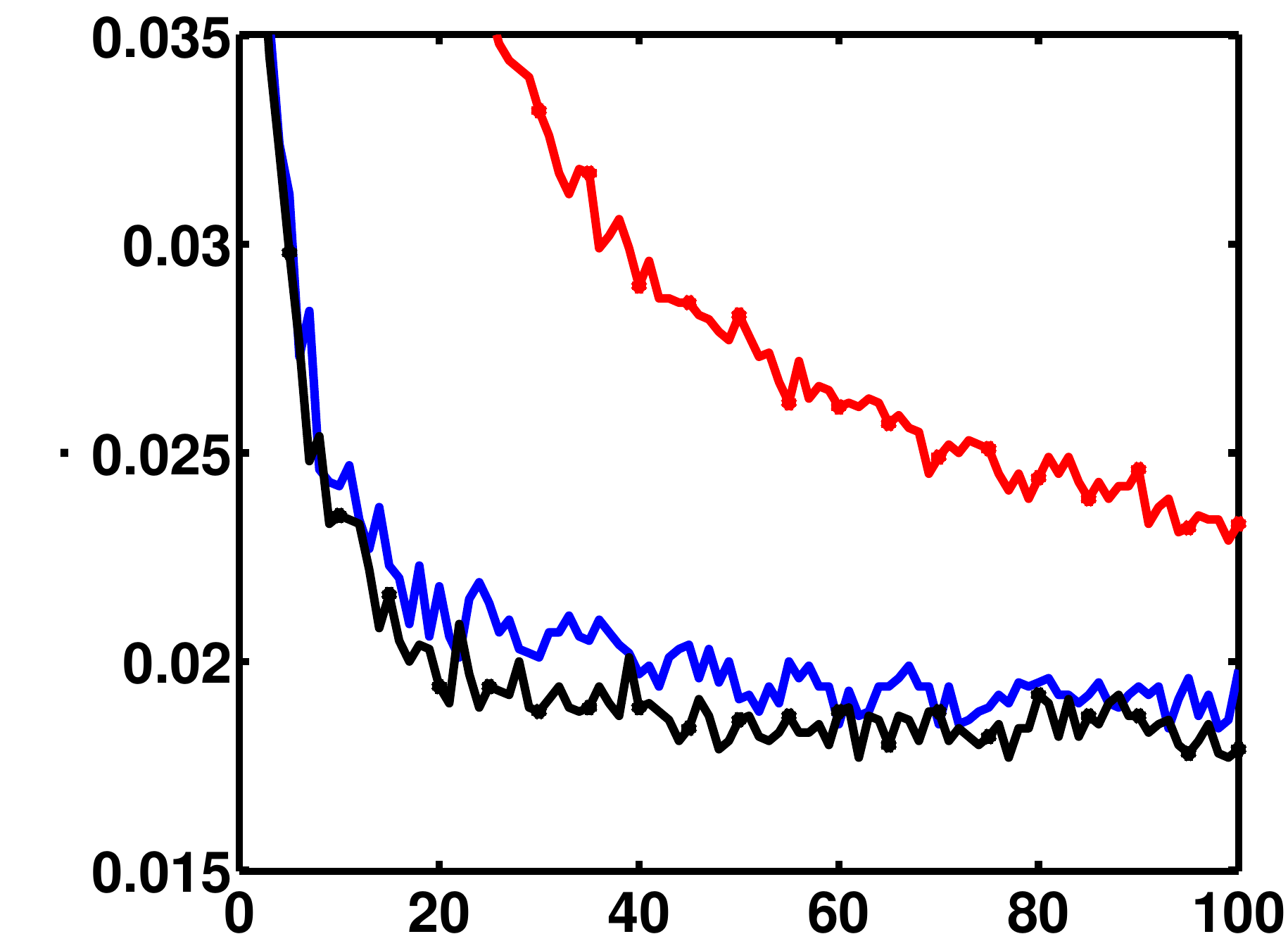} \\
   \includegraphics[width=\picwidth]{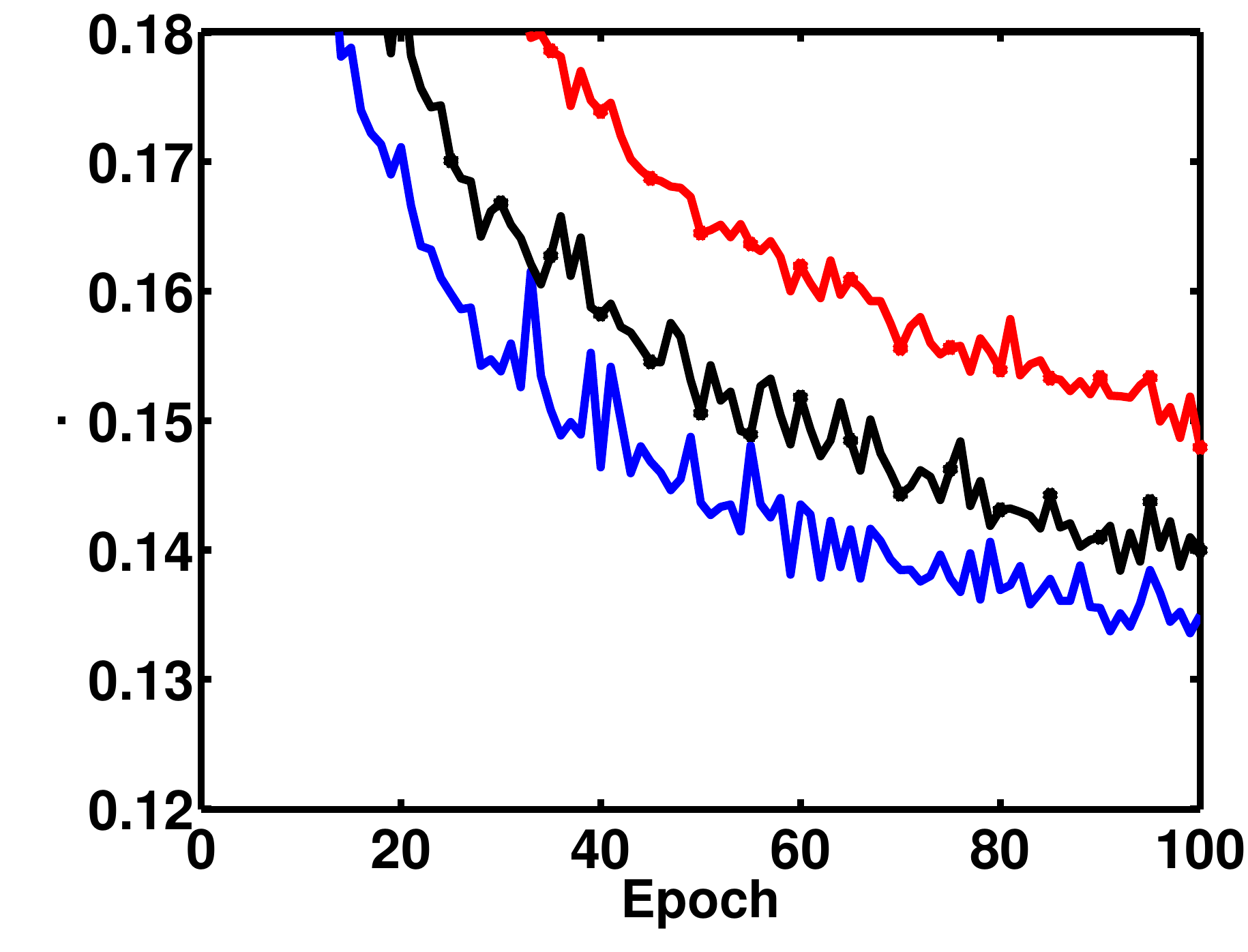}
  \end{tabular}
 }
 
 \begin{picture}(0,0)(0,0)
\rotatebox{90}{\put(342, 0){CIFAR-10}\put(240, 0){CIFAR-100}\put(147, 0){MNIST}\put(50, 0){SVHN}}
\end{picture}
  \begin{picture}(0,0)(0,0)
{\put(30, 420){\small Cross-Entropy Training Loss}\put(187, 420){\small 0/1 Training Error}\put(328, 420){ \small 0/1 Test Error}}
\end{picture}
\vspace{-0.1in}
 \caption{\small Learning curves using different optimization methods
 for 4 datasets with dropout. Left panel displays the cross-entropy objective function;     
middle and right panels show the corresponding values of the training and test errors. Best viewed in color.}
 \label{fig:dropout}
\vspace{-0.1in}
\end{figure}

The optimization results without dropout are shown in
Figure~\ref{fig:nodrop}. For each of the four datasets, the plots for
objective function (cross-entropy), the training error and the test
error are shown from left to right where in each plot the values are
reported on different epochs during the optimization. Although we
proved that \RSGD updates are the same for balanced and unbalanced
initializations, to verify that despite numerical issues they are
indeed identical, we trained \RSGD with both balanced and unbalanced initializations. 
Since the curves were exactly the same we only show a single curve. 

We can see that as expected, the unbalanced initialization
considerably hurts the performance of SGD and AdaGrad (in many cases
their training and test errors are not even in the range of the plot
to be displayed), while \RSGD performs essentially the same. Another
interesting observation is that even in the balanced settings, not
only does \RSGD often get to the same value of objective function, training and test error faster, but also the final generalization error for \RSGD is sometimes considerably lower than SGD and AdaGrad (except CIFAR-100 where the generalization error for SGD is slightly better compared to \RSGD). The plots for test errors could also imply that implicit regularization due to steepest descent with respect to path-regularizer leads to a solution that generalizes better. This view is similar to observations in \cite{neyshabur15b} on the role of implicit regularization in deep learning.

The results for training with dropout are shown in
Figure~\ref{fig:dropout}, where here we suppressed the (very poor)
results on unbalanced initializations.  We observe that except for
MNIST, \RSGD convergences 
much faster than SGD or AdaGrad. It also generalizes better to
the test set, which again shows the effectiveness of path-normalized
updates.

The results suggest that \RSGD outperforms SGD and AdaGrad in two
different ways. First, it can achieve the same accuracy much faster
and second, the implicit regularization by \RSGD leads to a local
minima that can generalize better even when the training error is
zero. This can be better analyzed
by looking at the plots for more number of epochs which we have
provided in Figures~\ref{fig:more1} and \ref{fig:more2}. We should also point that
Path-SGD can be easily combined with AdaGrad to take advantage of the
adaptive stepsize or used together with a momentum term. This could
potentially perform even better compare to \RSGD.

\section{Discussion}

We revisited the choice of the Euclidean geometry on the weights of
RELU networks, suggested an alternative optimization method
approximately corresponding to a different geometry, and showed that
using such an alternative geometry can be beneficial.  In this work we show
proof-of-concept success, and we expect Path-SGD to be beneficial also
in large-scale training for very deep convolutional networks.
Combining Path-SGD with AdaGrad, with momentum or with other
optimization heuristics might further enhance results.

Although we do believe Path-SGD is a very good optimization method,
and is an easy plug-in for SGD, we hope this work will also inspire
others to consider other geometries, other regularizers and
perhaps better, update rules.
A particular property of Path-SGD is its rescaling invariance, which
we argue is appropriate for RELU networks.  But Path-SGD is certainly
not the only rescaling invariant update possible, and other invariant
geometries might be even better.

Finally, we choose to use steepest descent because of its simplicity
of implementation.  A better choice might be mirror descent with
respect to an appropriate potential function, but such a construction
seems particularly challenging considering the non-convexity of neural
networks.

\subsubsection*{Acknowledgments}
Research was partially funded by NSF award IIS-1302662 and Intel ICRI-CI. We thank Hao Tang for insightful discussions.

\bibliographystyle{plain}
\bibliography{ref}

\newpage

\vspace{-2in}
\begin{figure}
\hspace{0.1in}
 \subfloat{
  \begin{tabular}{r}
   \includegraphics[width=\picwidth]{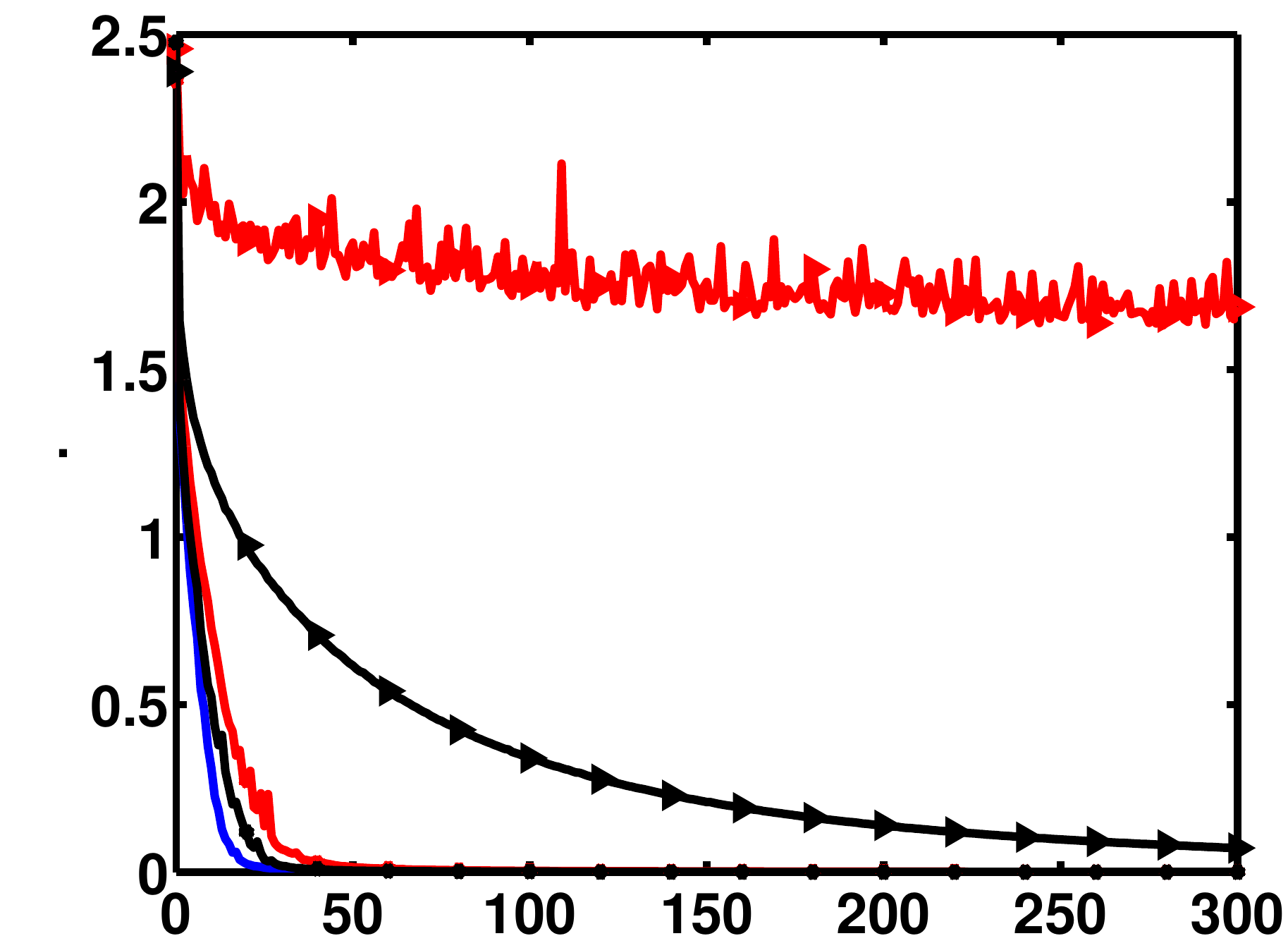} \\
   \includegraphics[width=1.76in]{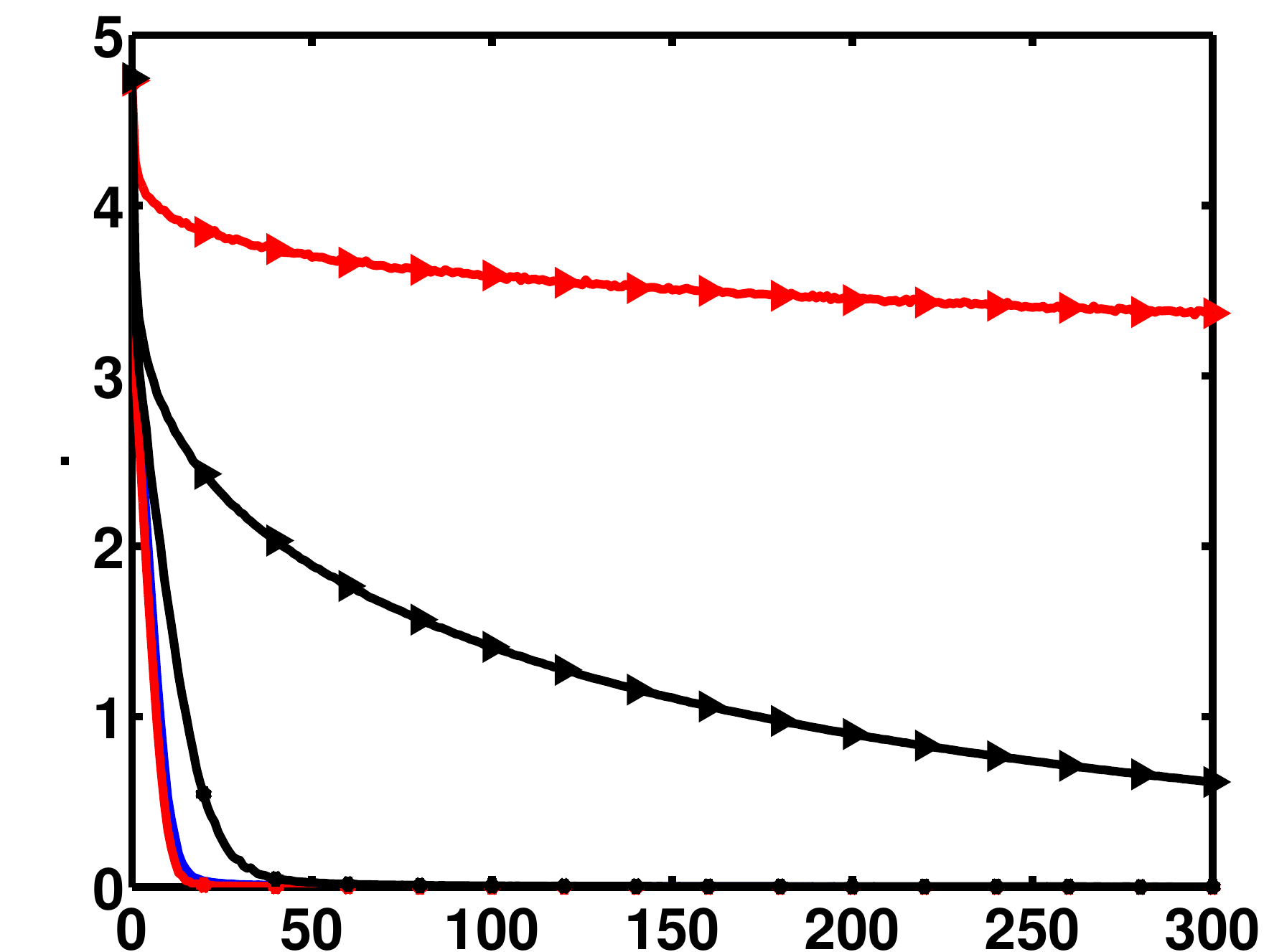} \\
   \includegraphics[width=\picwidth]{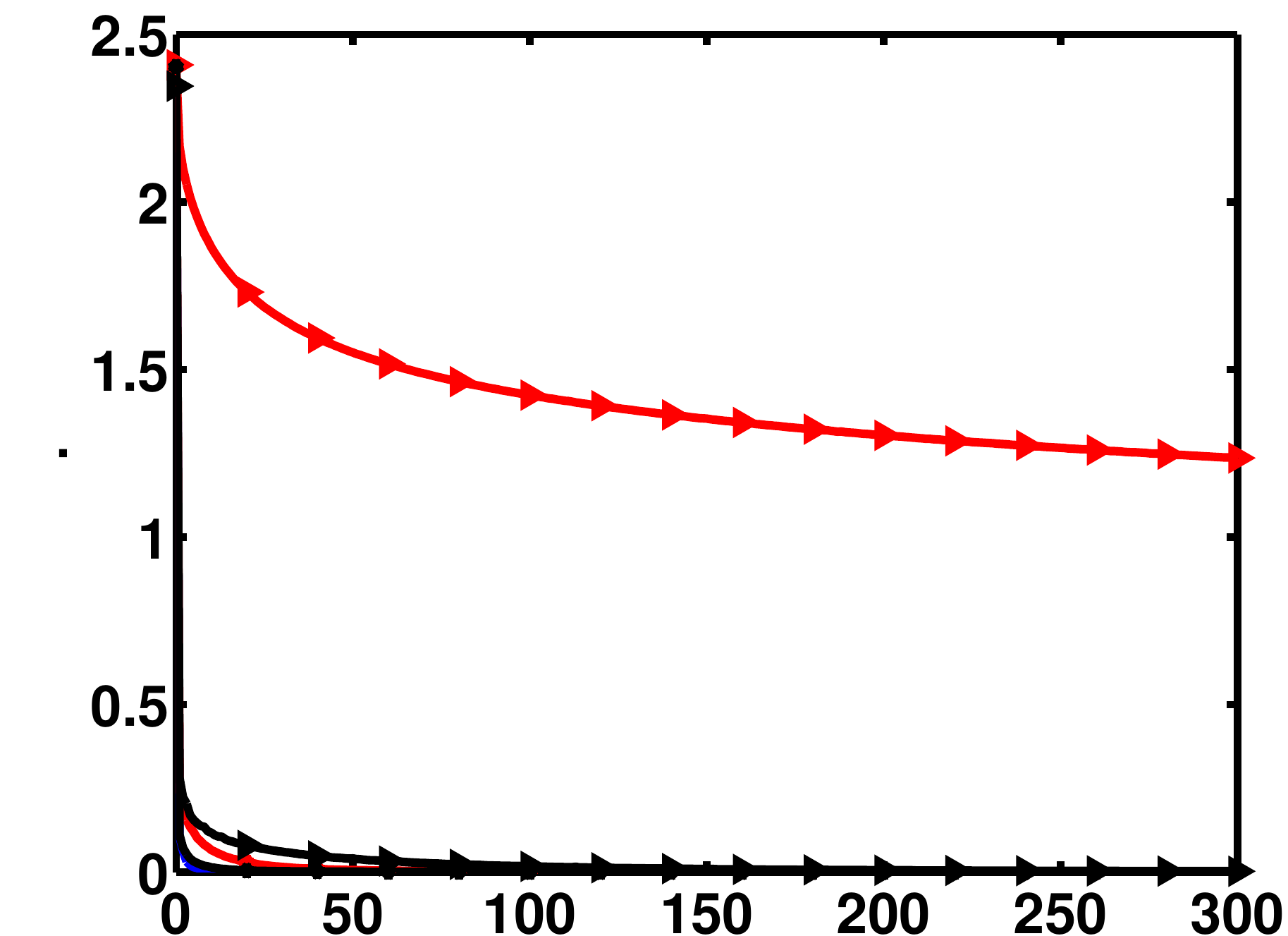} \\
   \includegraphics[width=\picwidth]{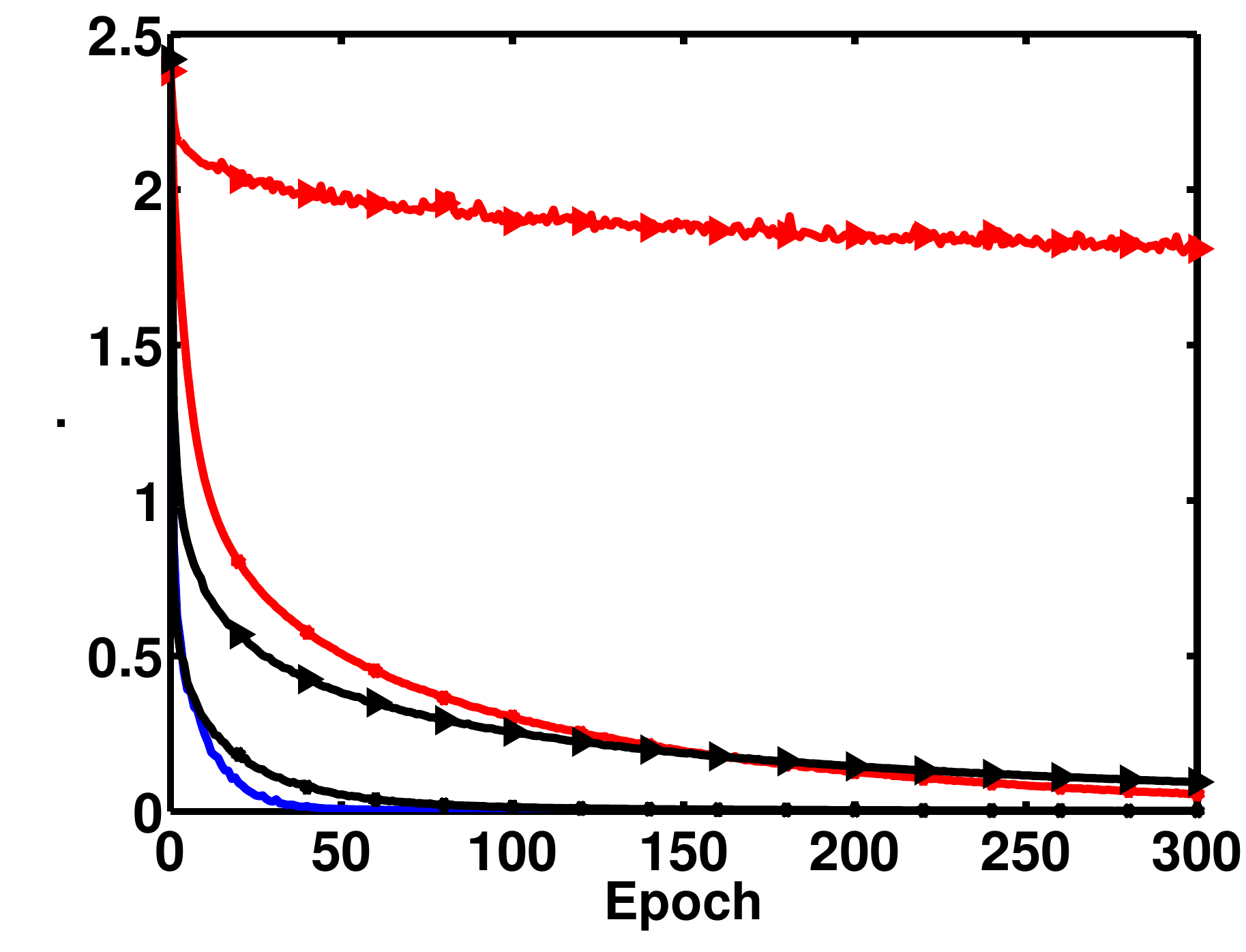}
  \end{tabular}
 }
 \hspace{-0.3in}
 \subfloat{
  \begin{tabular}{r}
   \includegraphics[width=\picwidth]{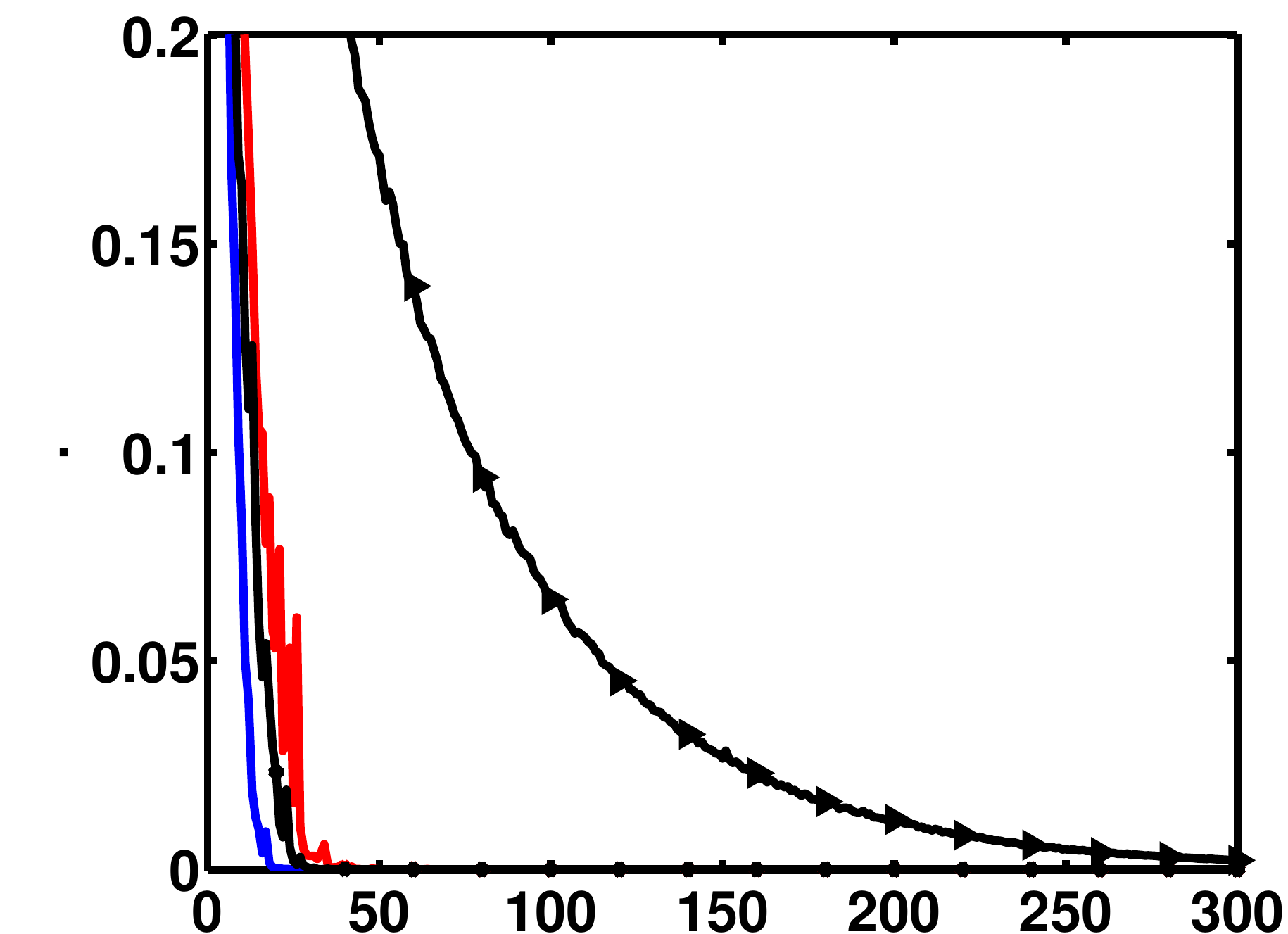} \\
   \includegraphics[width=\picwidth]{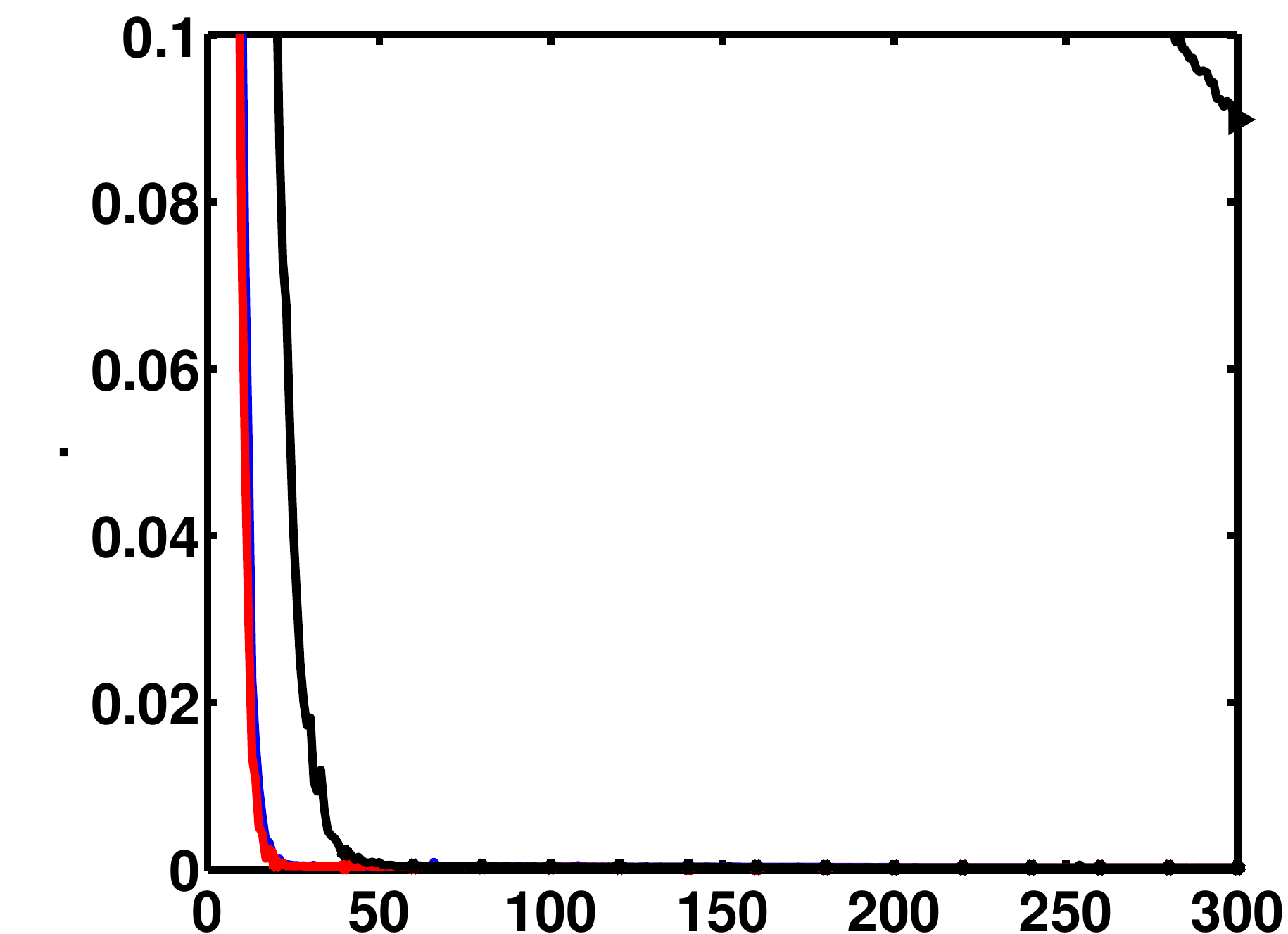} \\
   \includegraphics[width=1.9in]{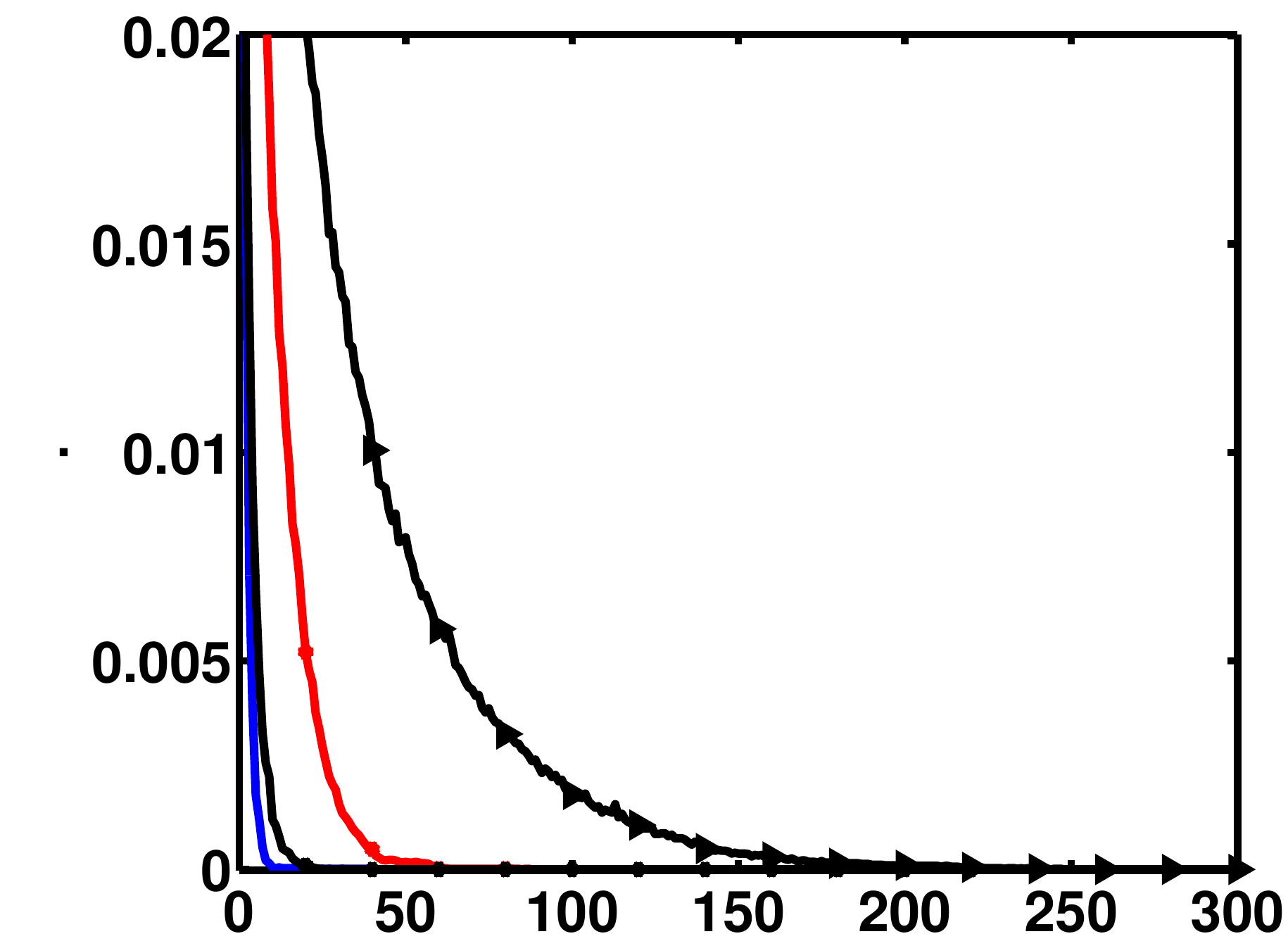} \\
   \includegraphics[width=1.85in]{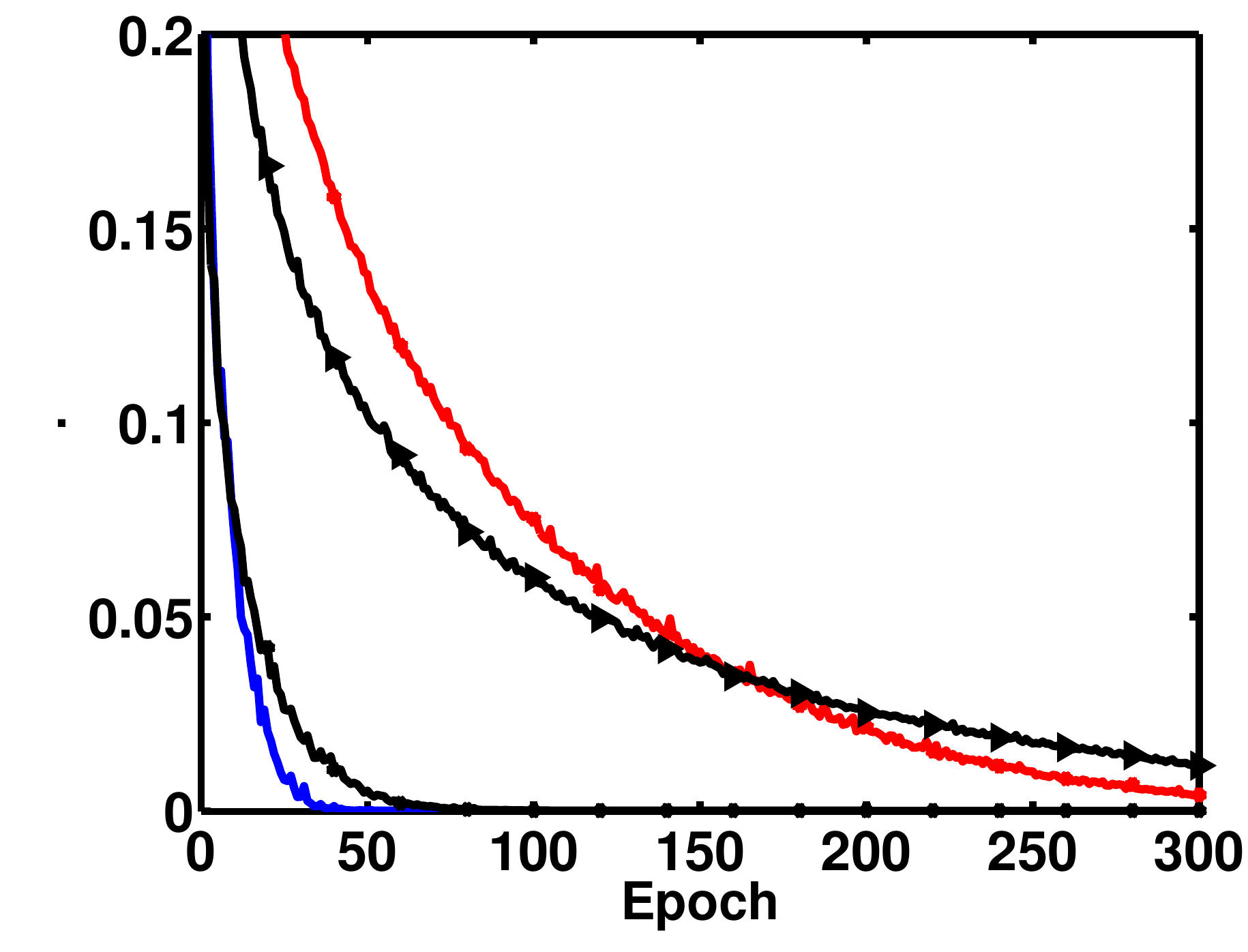}
  \end{tabular}
 }
 \hspace{-0.3in}
 \subfloat{
  \begin{tabular}{r}
   \includegraphics[width=\picwidth]{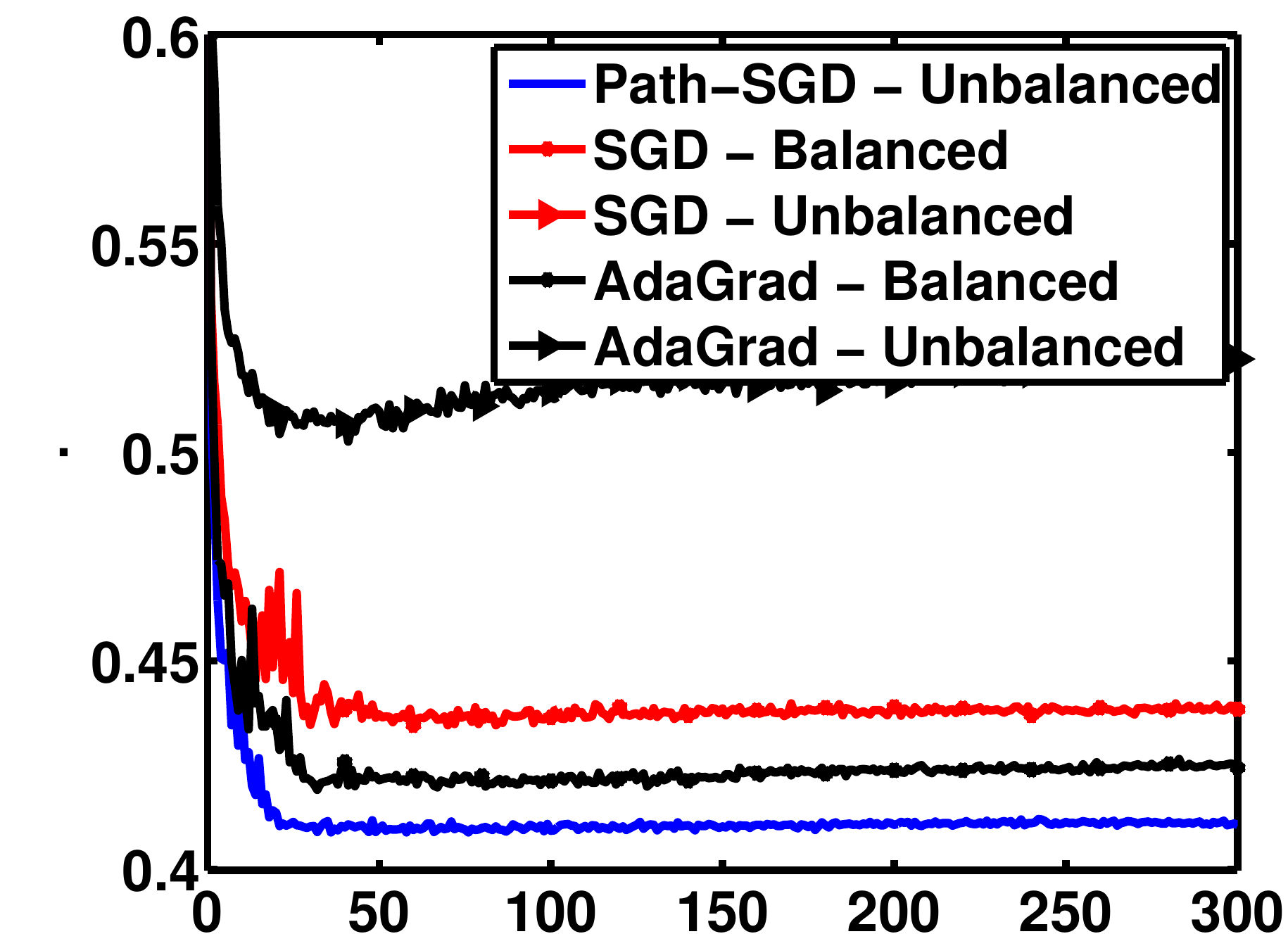} \\
   \includegraphics[width=\picwidth]{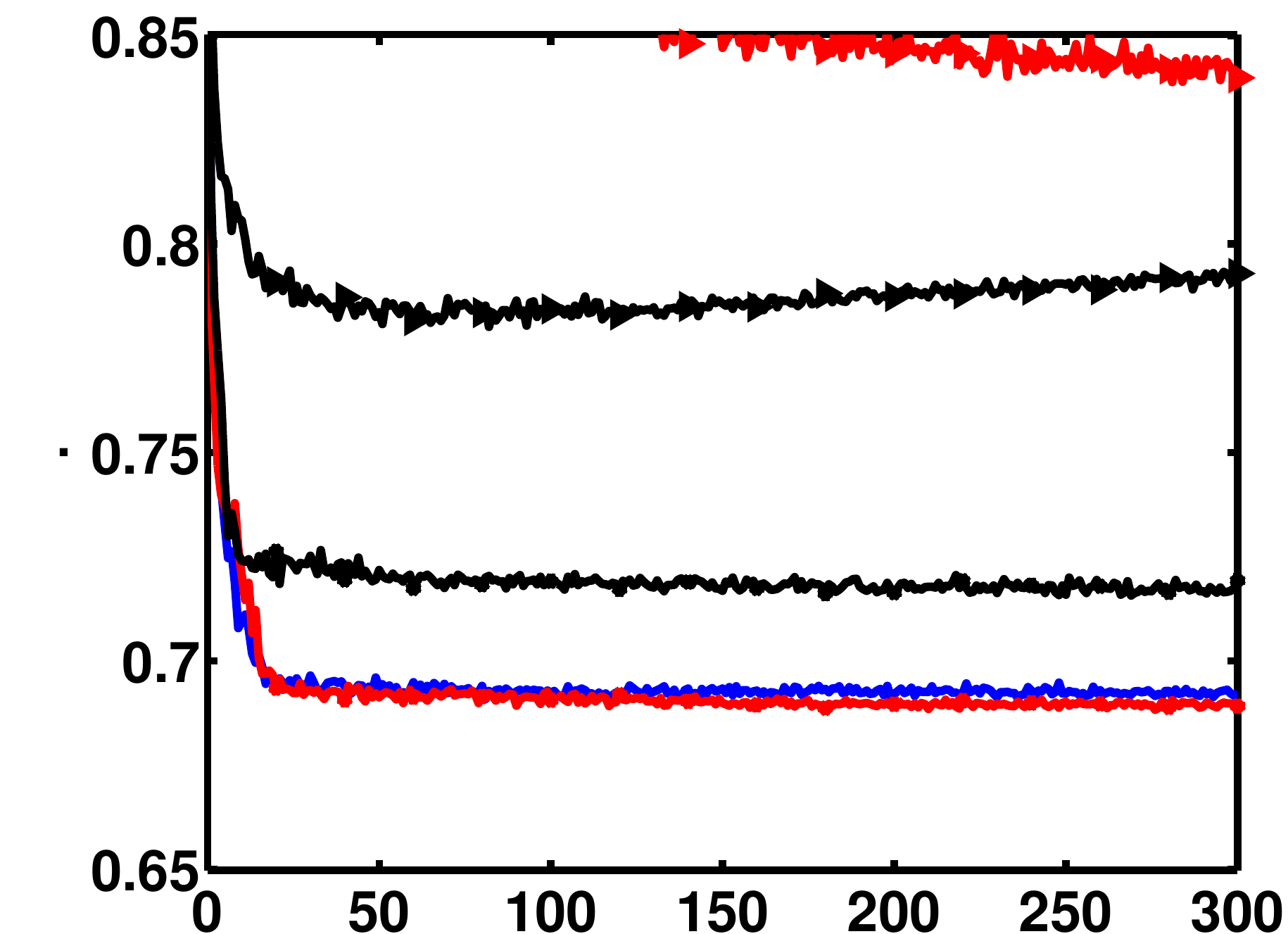} \\
   \includegraphics[width=1.9in]{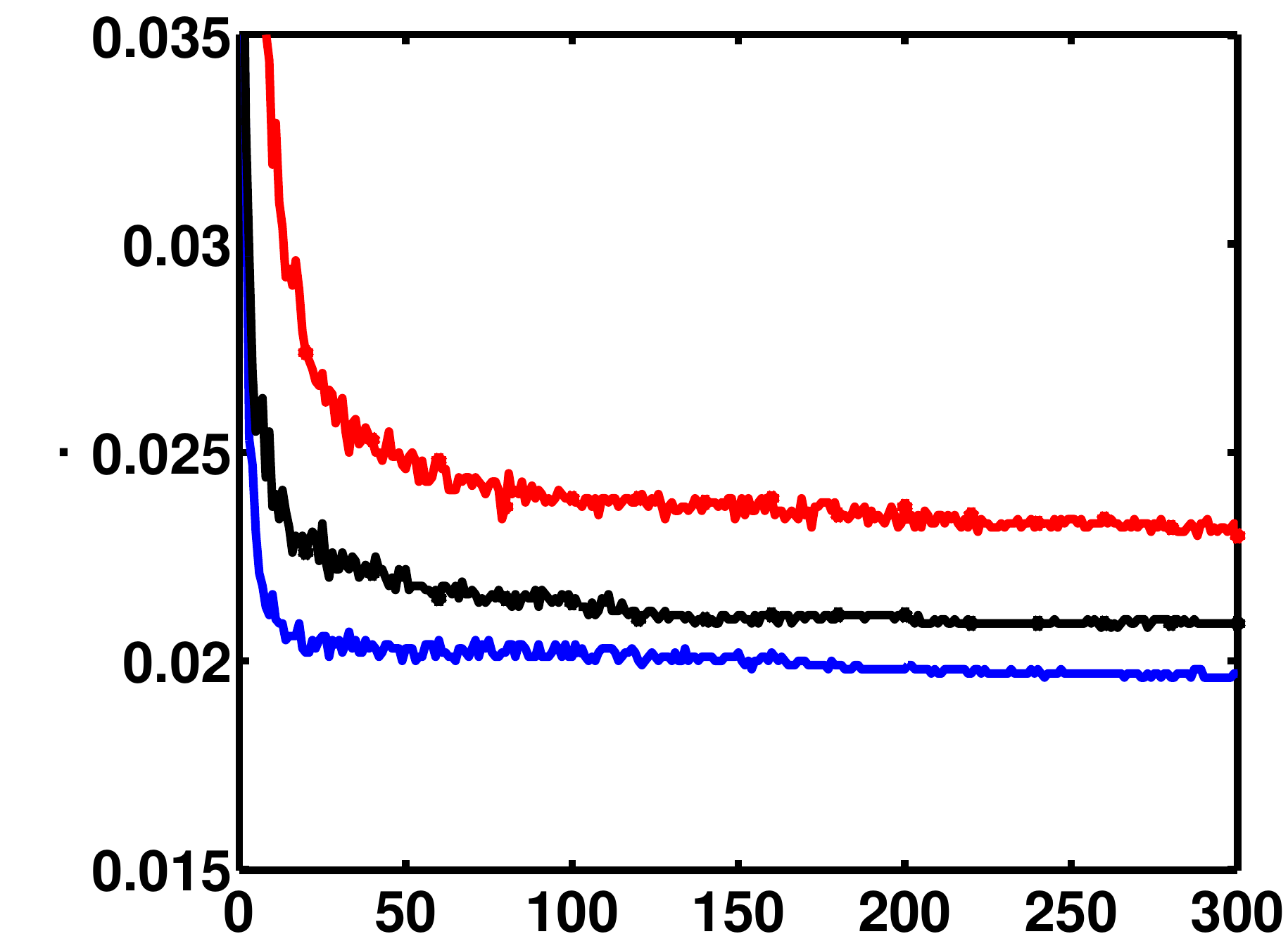} \\
   \includegraphics[width=1.85in]{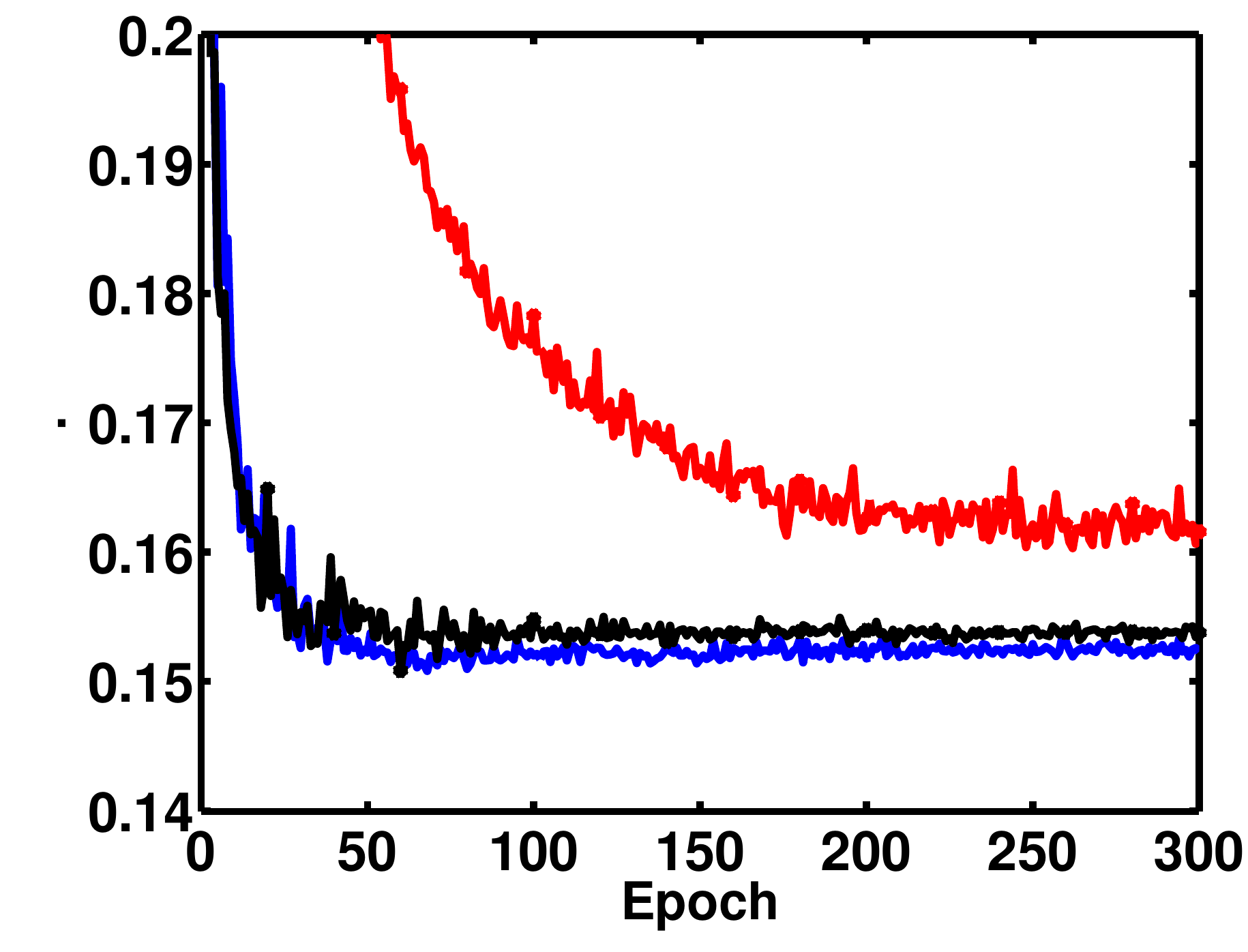}
  \end{tabular}
 }

 \begin{picture}(0,0)(0,0)
\rotatebox{90}{\put(342, 0){CIFAR-10}\put(240, 0){CIFAR-100}\put(147, 0){MNIST}\put(50, 0){SVHN}}
\end{picture}
  \begin{picture}(0,0)(0,0)
{\put(30, 420){\small Cross-Entropy Training Loss}\put(187, 420){\small 0/1 Training Error}\put(328, 420){ \small 0/1 Test Error}}
\end{picture}
 \caption{\small Learning curves for more number of epochs using different optimization methods 
 for 4 datasets without dropout. Left panel displays the cross-entropy objective function; 
middle and right panels show the corresponding values of the training and test errors, where the values are reported on
different epochs during the course of optimization. Best viewed in color.}
 \label{fig:more1}
  \vspace{1.1in}
\end{figure}

\begin{figure}
\hspace{0.1in}
 \subfloat{
  \begin{tabular}{r}
   \includegraphics[width=\picwidth]{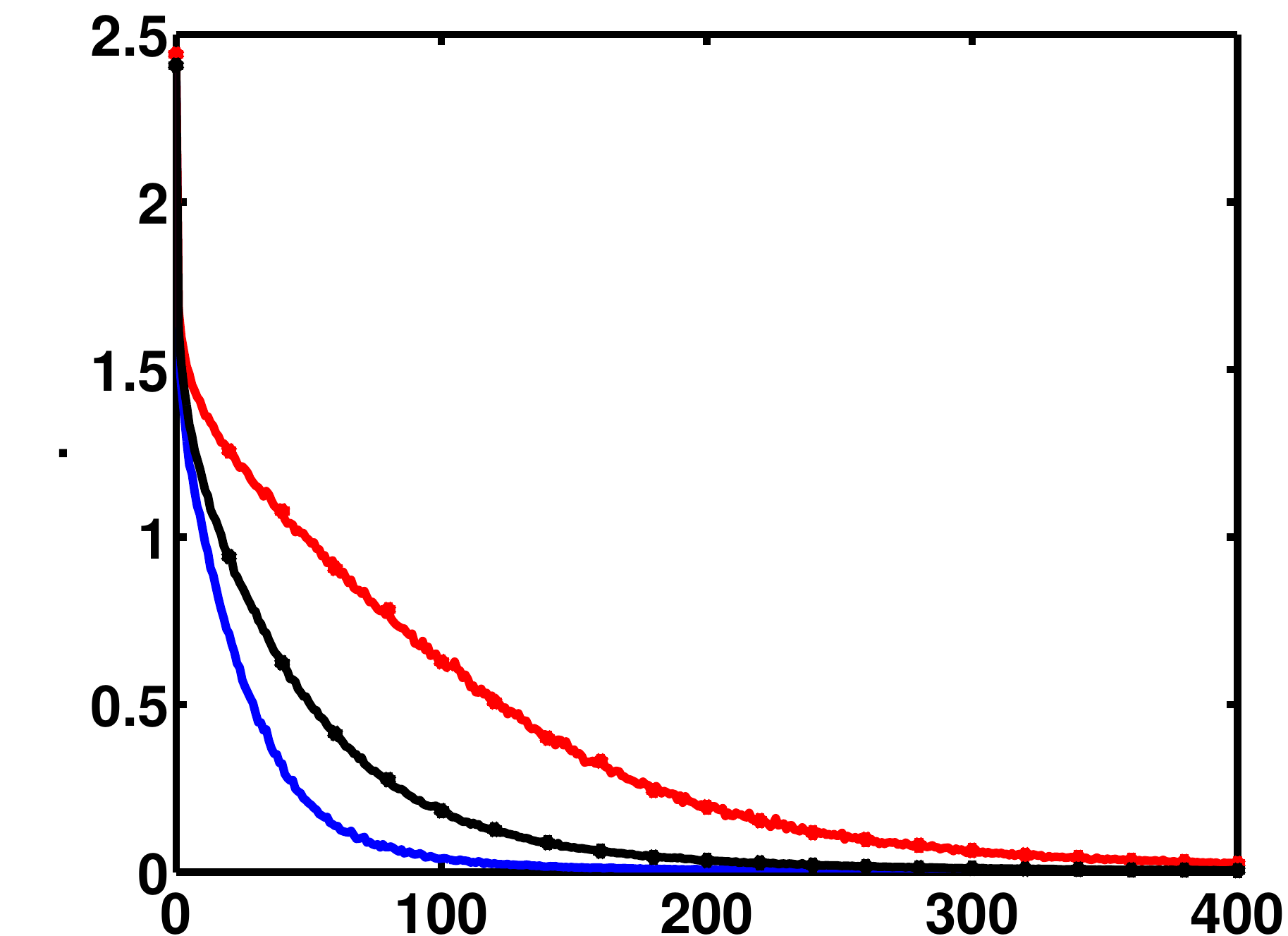} \\
   \includegraphics[width=1.79in]{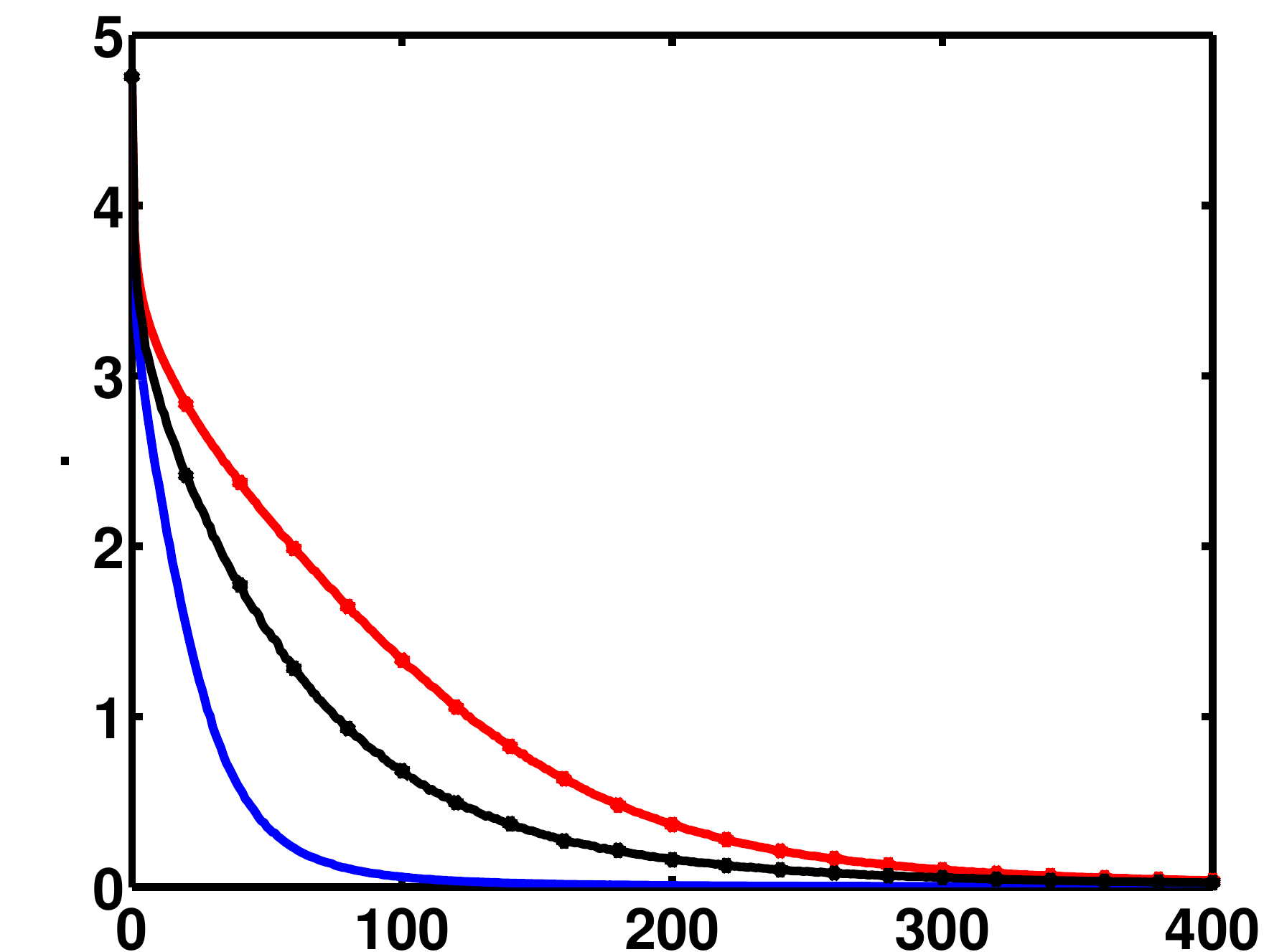} \\
   \includegraphics[width=1.85in]{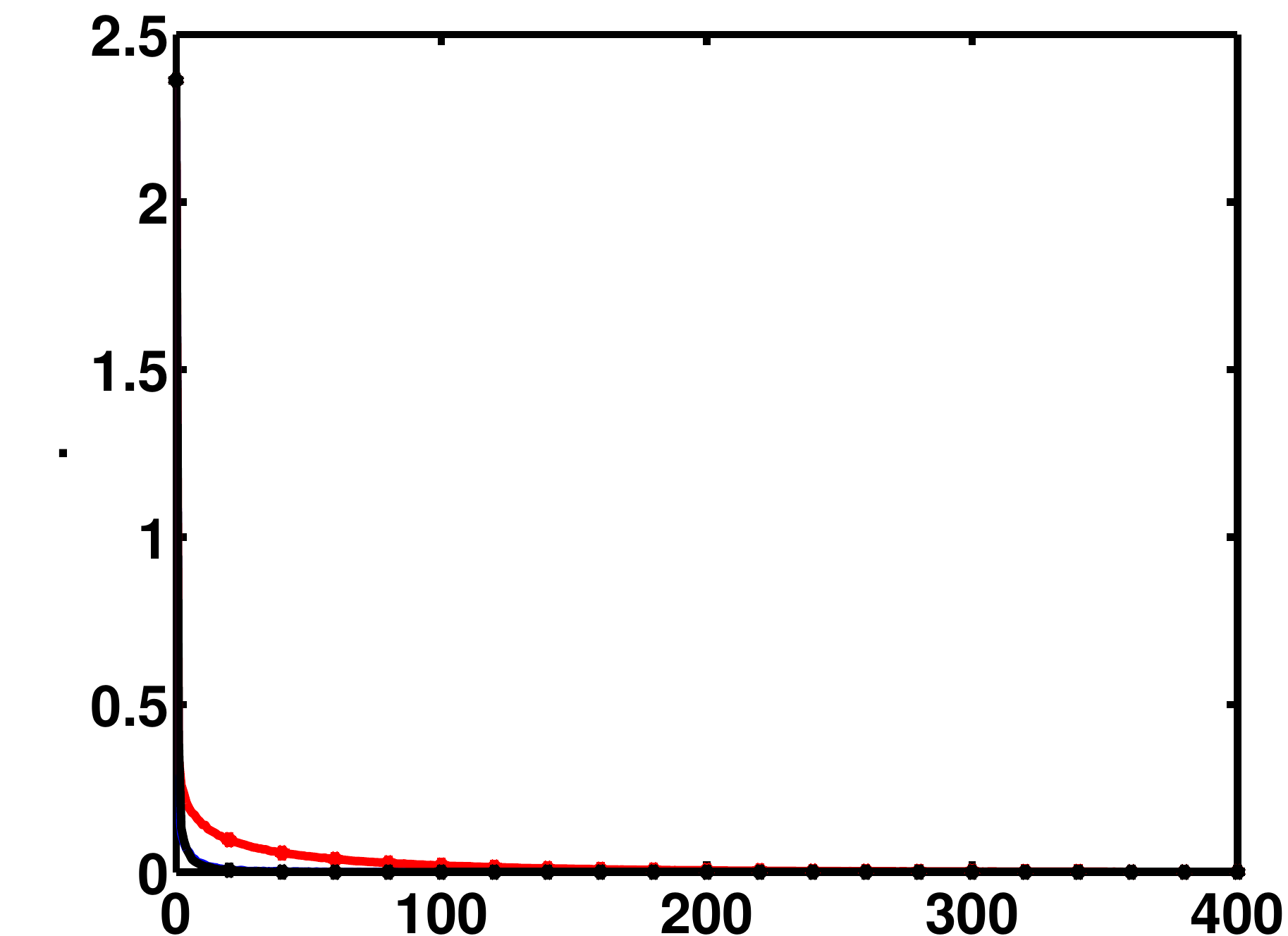} \\
   \includegraphics[width=1.85in]{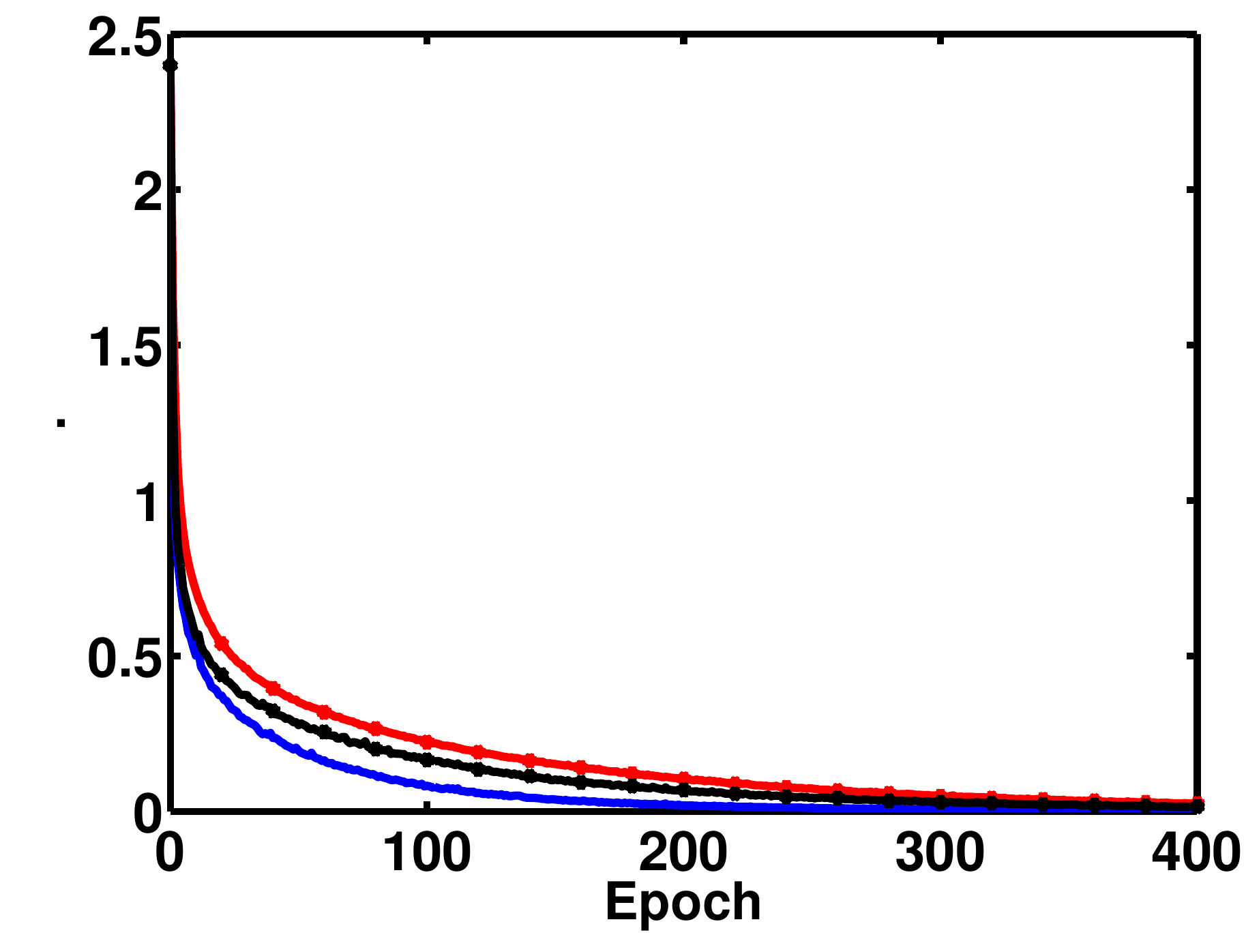}
  \end{tabular}
 }\hspace{-0.3in}
 \subfloat{
  \begin{tabular}{r}
   \includegraphics[width=\picwidth]{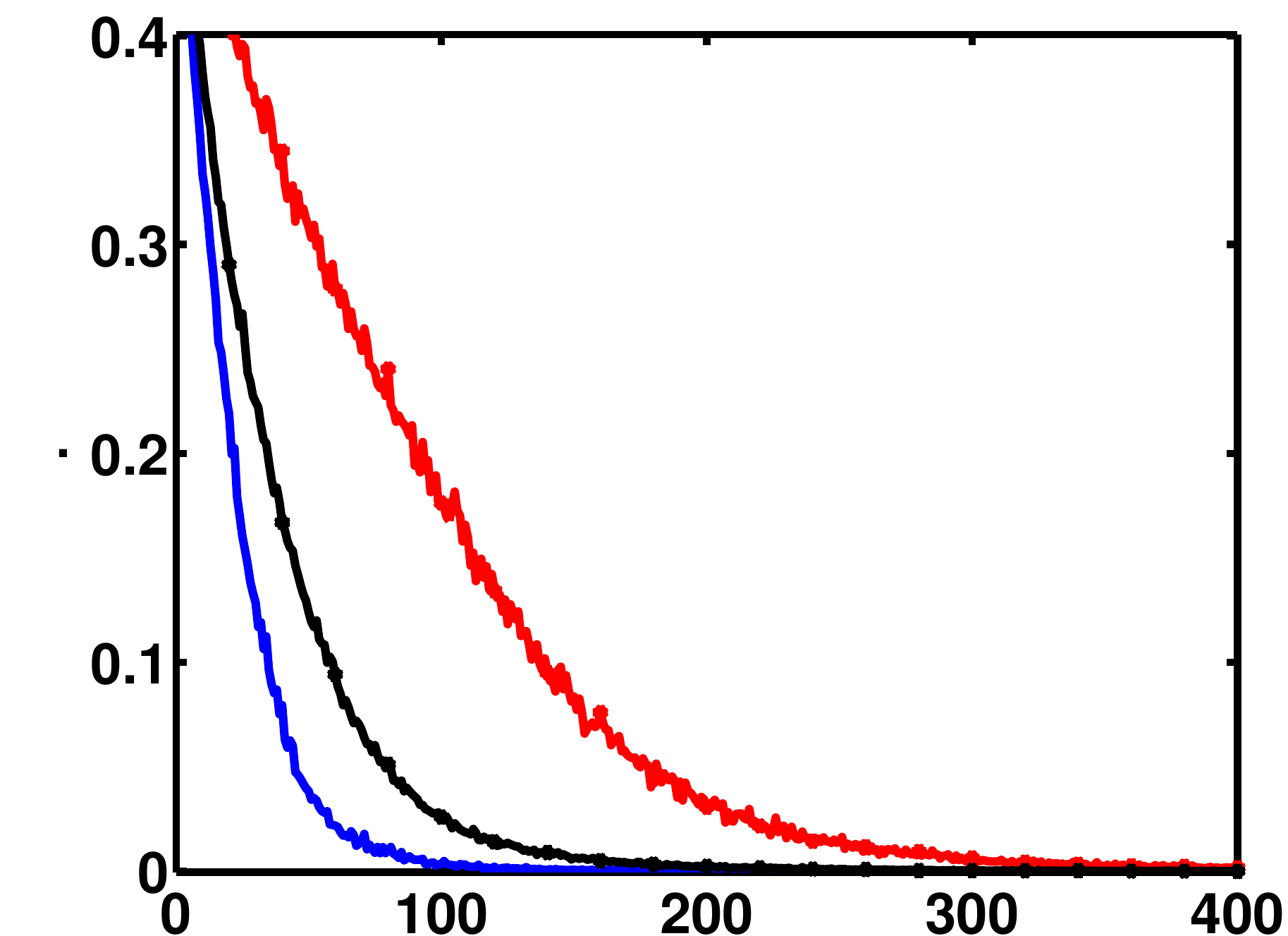} \\
   \includegraphics[width=\picwidth]{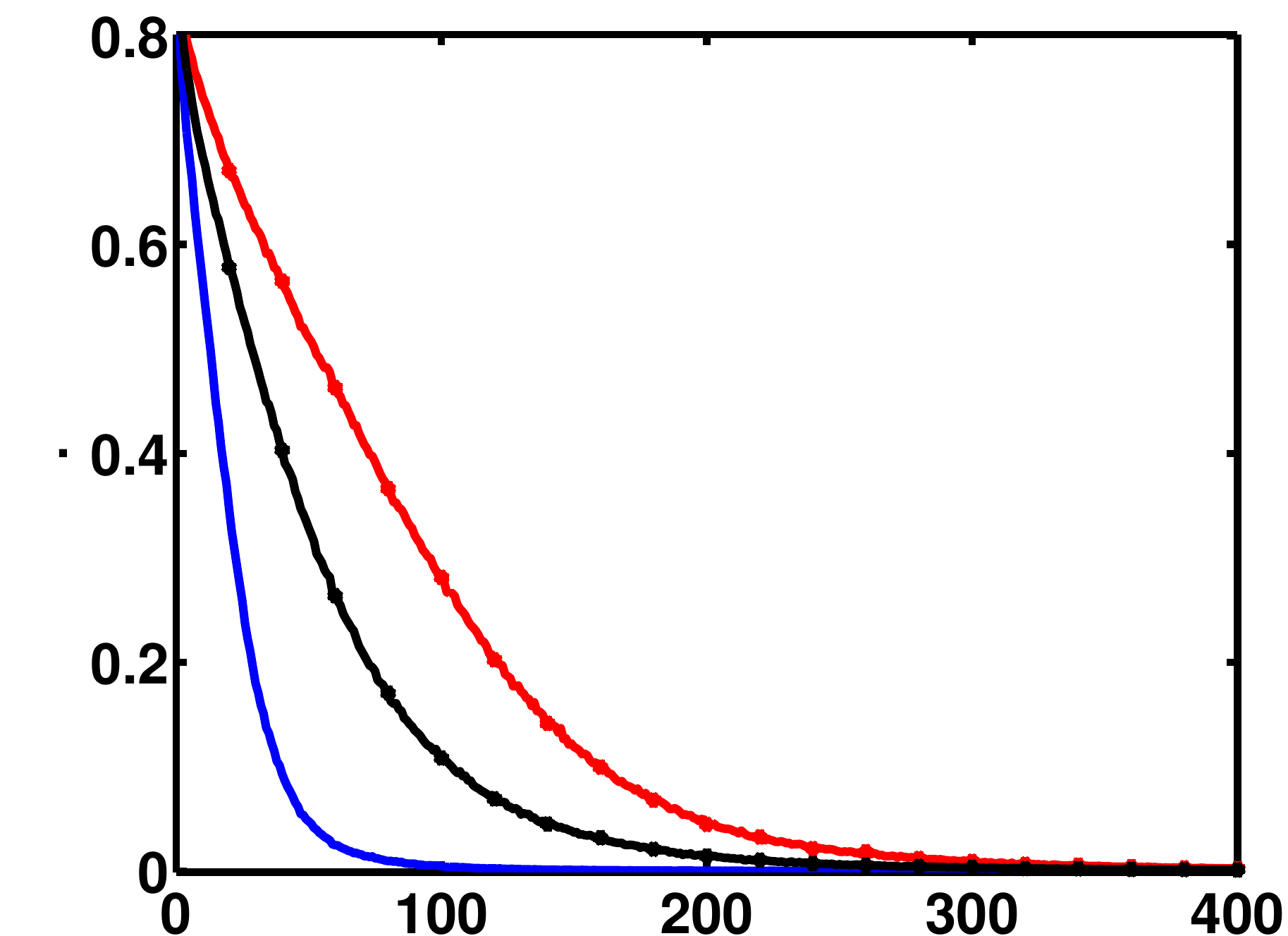} \\
   \includegraphics[width=1.9in]{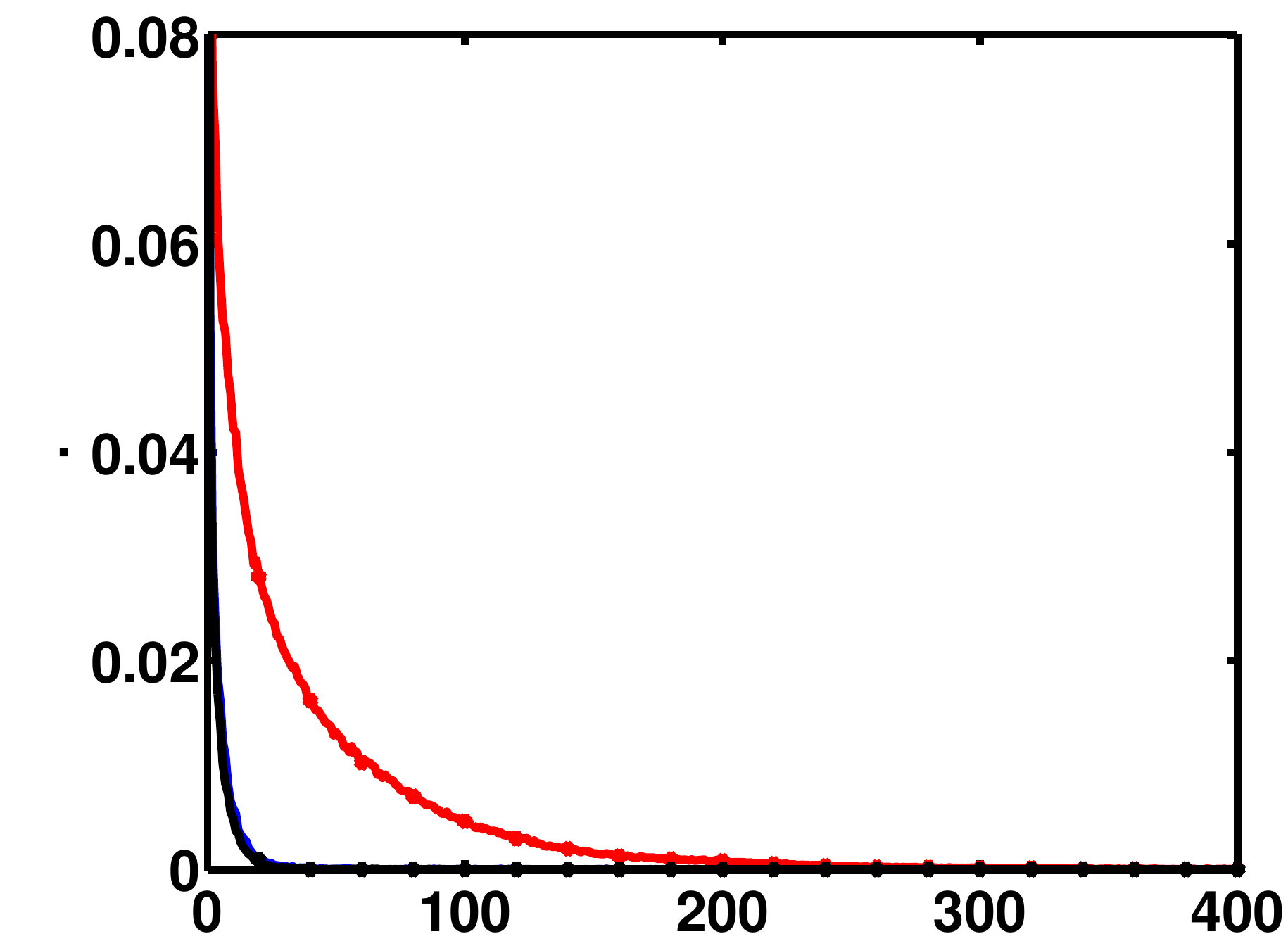} \\
   \includegraphics[width=\picwidth]{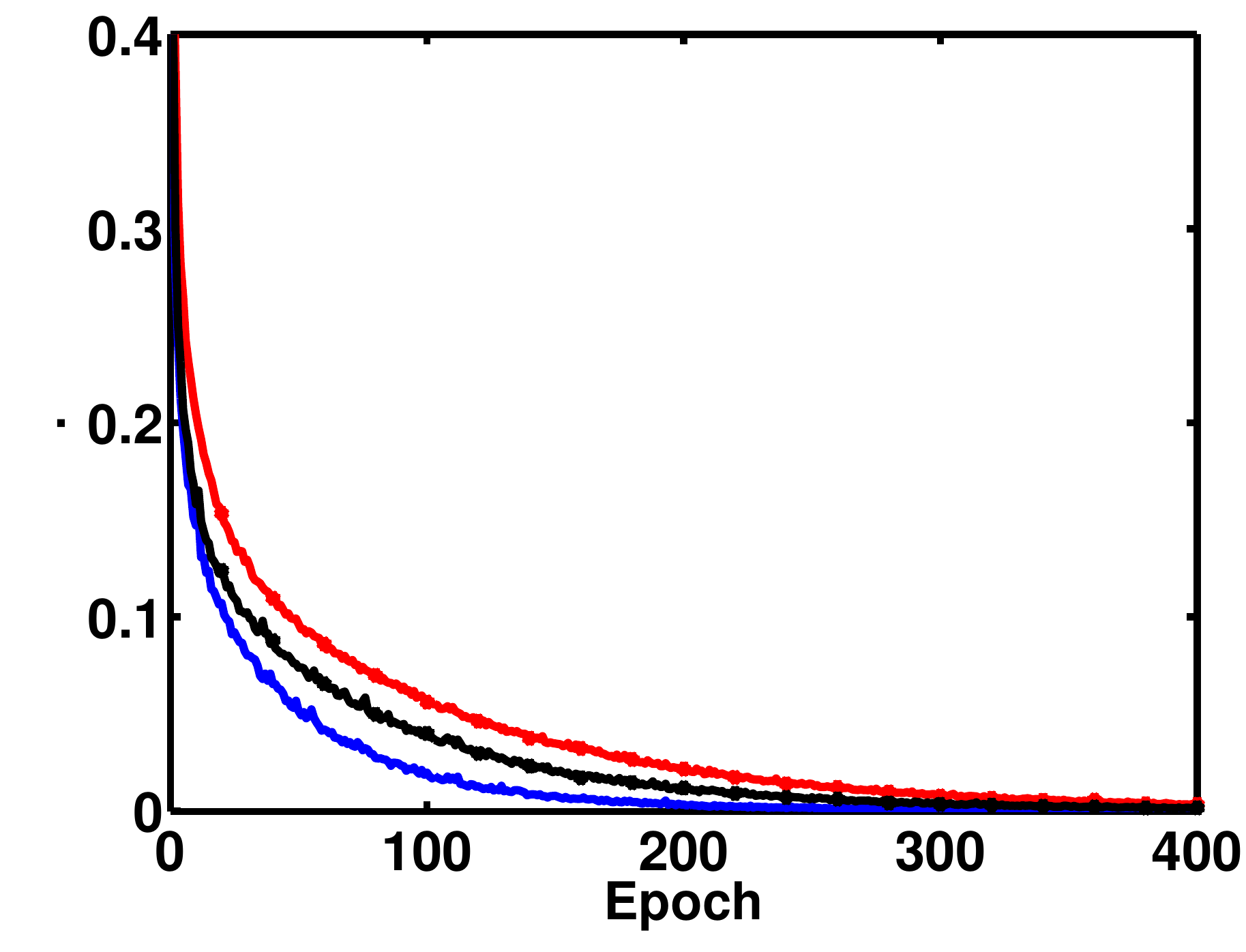}
  \end{tabular}
 }\hspace{-0.3in}
 \subfloat{
  \begin{tabular}{r}
   \includegraphics[width=\picwidth]{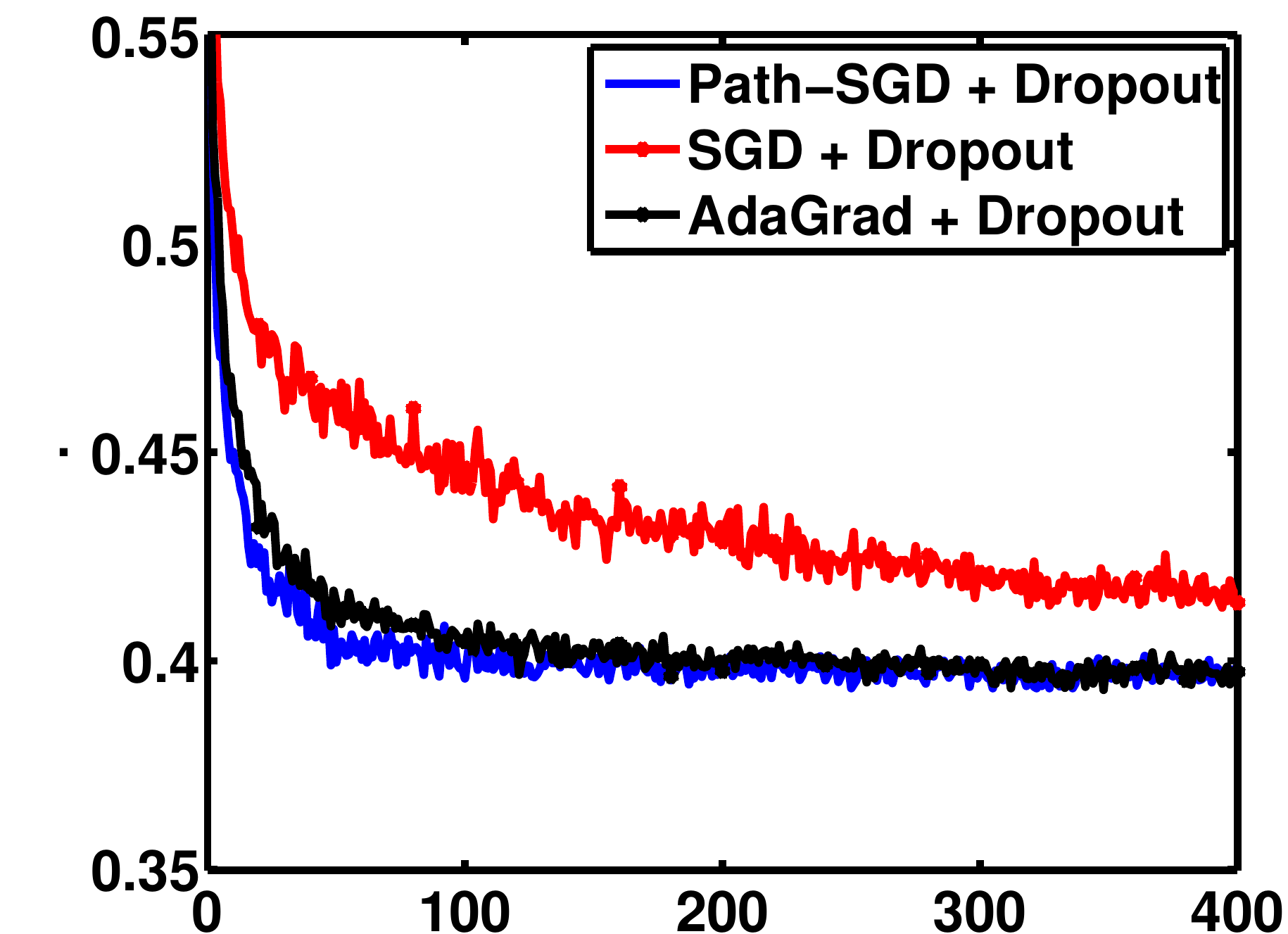} \\
   \includegraphics[width=\picwidth]{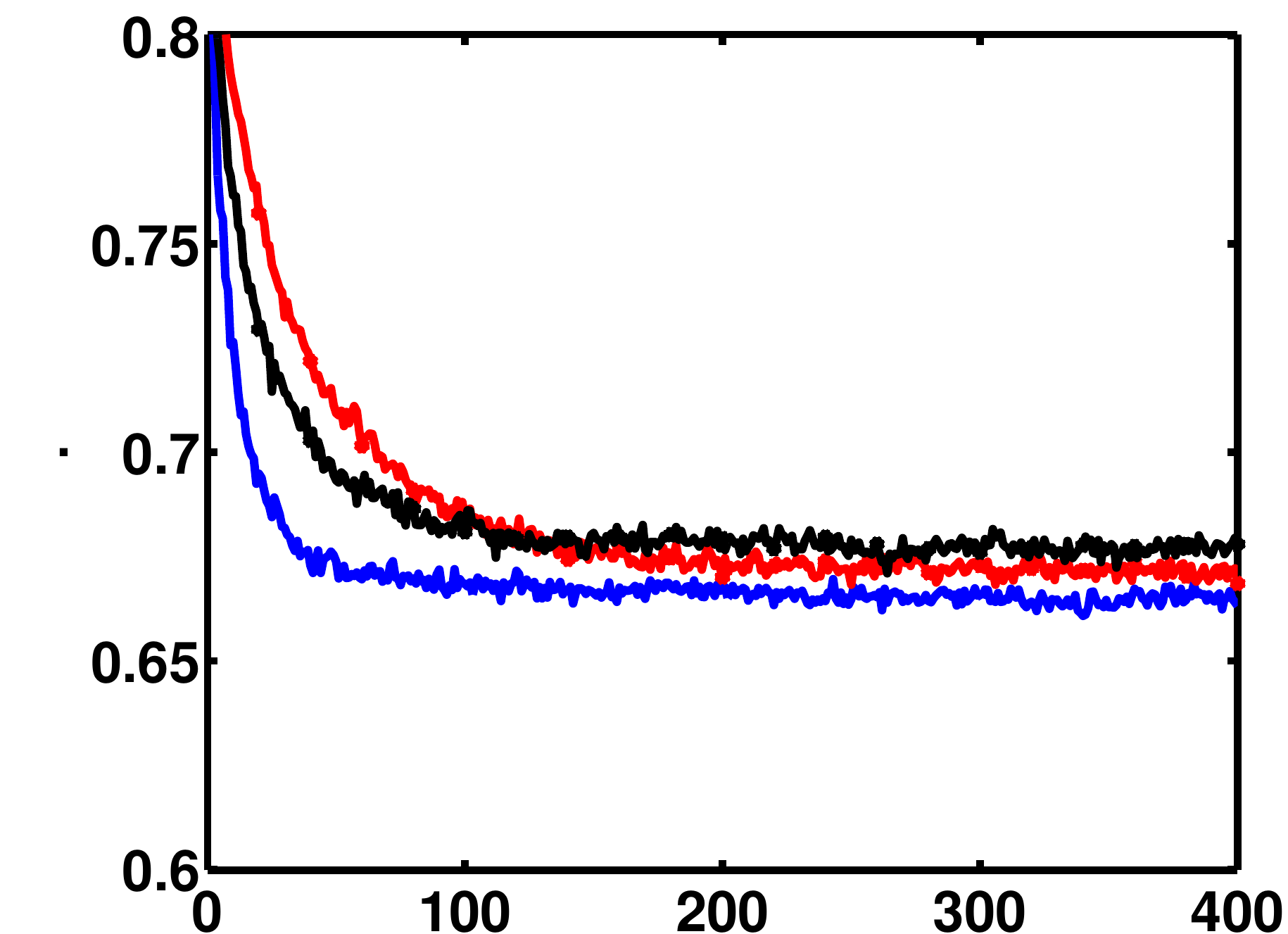} \\
   \includegraphics[width=1.9in]{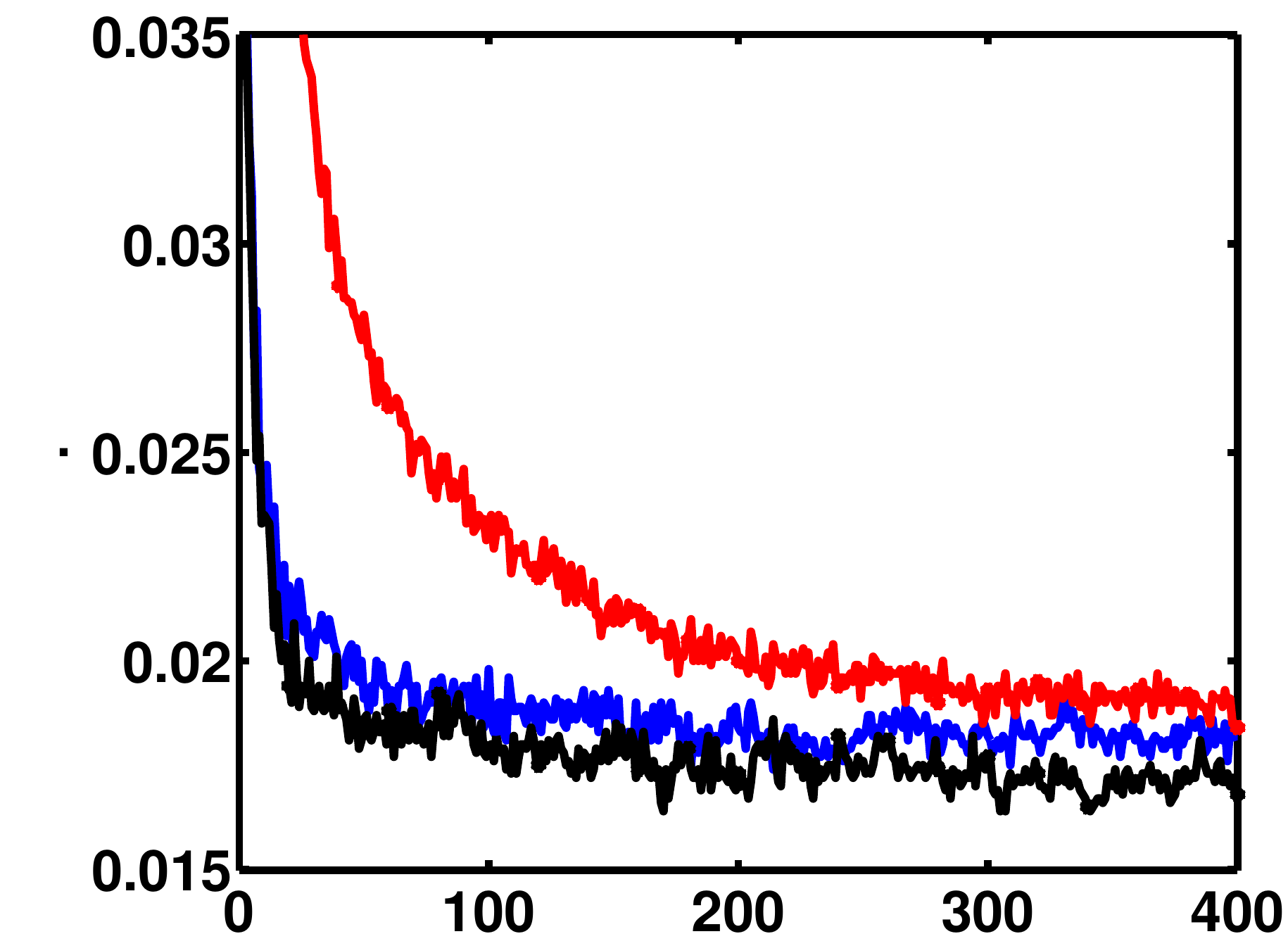} \\
   \includegraphics[width=\picwidth]{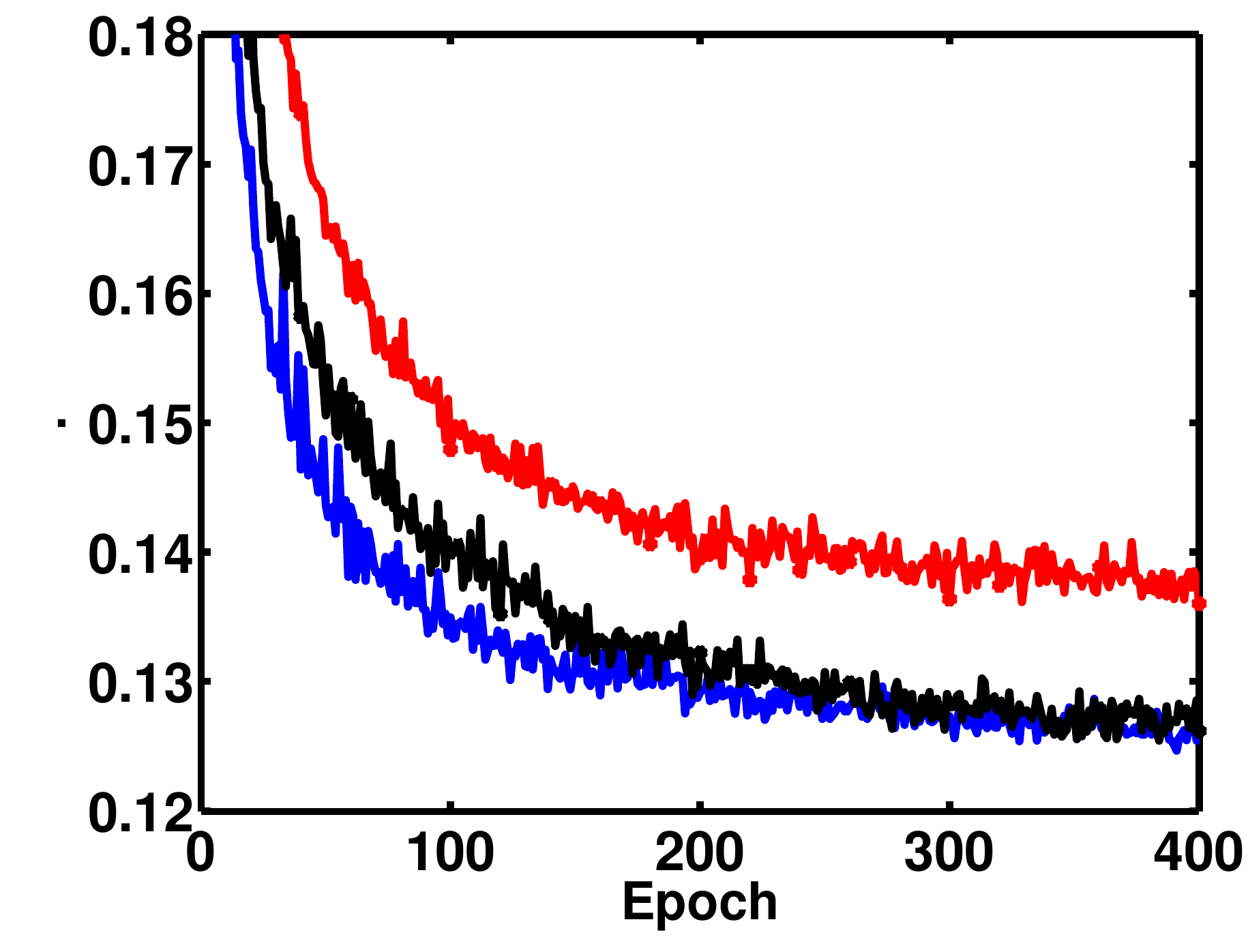}
  \end{tabular}
 }
 
 \begin{picture}(0,0)(0,0)
\rotatebox{90}{\put(342, 0){CIFAR-10}\put(240, 0){CIFAR-100}\put(147, 0){MNIST}\put(50, 0){SVHN}}
\end{picture}
  \begin{picture}(0,0)(0,0)
{\put(30, 420){\small Cross-Entropy Training Loss}\put(187, 420){\small 0/1 Training Error}\put(328, 420){ \small 0/1 Test Error}}
\end{picture}
\vspace{-0.1in}
 \caption{\small Learning curves for more number of epochs using different optimization methods
 for 4 datasets with dropout. Left panel displays the cross-entropy objective function;     
middle and right panels show the corresponding values of the training and test errors. Best viewed in color.}
 \label{fig:more2}
\vspace{-0.1in}
\end{figure}

\end{document}